\newtheorem{definition}{Definition}
\newtheorem{lemma}{Lemma}
\theoremstyle{definition}
\newcommand{\rewlat}{{g}}
\newcommand{\dynlat}{{z}}
\newcommand{\rewqvariat}{{q_{\omega}}}
\newcommand{\dynqvariat}{{q_{\delta}}}
\title{Disentangled Skill Embeddings \\ for Reinforcement Learning}
\author{
  Janith C. Petangoda{$\ ^\dagger$}\thanks{Research conducted while doing a research placement at PROWLER.io. } \\
    Imperial College London\\
  \texttt{jcp17@ic.ac.uk} 
   \And
   Sergio Pascual-D\'iaz\thanks{Equal contribution.}\\
   PROWLER.io\\
   \texttt{sergio.diaz@prowler.io} 
   \AND
   Vincent Adam\\
   PROWLER.io\\
   \texttt{vincent.adam@prowler.io} 
  \And
   Peter Vrancx\\
   PROWLER.io\\
   \texttt{peter@prowler.io} 
   \And
   Jordi Grau-Moya\\
   PROWLER.io\\
   \texttt{jordi@prowler.io} 
}
\begin{document}

\maketitle
\begin{abstract}
We propose a novel framework for multi-task reinforcement learning (MTRL). Using a variational inference formulation, we learn policies that generalize across both changing dynamics and goals.  The resulting policies are parametrized by shared parameters that allow for transfer between different dynamics and goal conditions, and by task-specific latent-space embeddings that allow for specialization to particular tasks. We show how the latent-spaces enable generalization to unseen dynamics and goals conditions. Additionally, policies equipped with such embeddings serve as a space of skills (or options)  for hierarchical reinforcement learning.  Since we can change task dynamics and goals \emph{independently}, we name our framework Disentangled Skill Embeddings (DSE). 
\end{abstract}

\section{Introduction}
In recent years, Reinforcement Learning (RL)~\citep{sutton1998reinforcement}  techniques have been successfully applied  to solve complex sequential decision-making problems under uncertainty~\citep{mnih2015human,hessel2017rainbow, haarnoja2017reinforcement,haarnoja2018soft}. However, agents trained on a single-task typically exhibit poor performance when faced with a modified task with different dynamics or reward functions~\citep{taylor2009transfer}. A key remaining challenge for advancing the field towards general purpose applications is to train agents that can generalize over tasks~\citep{plappert2018multi}.  
Generalization over tasks is also important in hierarchical reinforcement learning (HRL) settings~\citep{dayan1993feudal,sutton1999between,dietterich1998maxq} where there is a need to be data-efficient by using reusable skills across different unseen situations. However, a long-standing problem in HRL is of obtaining a general skill-set and how to properly reuse that set in different situations. 

In this paper, we focus on the problem of learning policies that generalize well under changes in both the dynamics and reward functions. We do so by formulating a novel Multi-Task RL (MTRL) problem from a variational inference (VI) perspective. Our formulation relies on two latent skill embeddings which hold information about the dynamics of the system and the goal. The skill embeddings are disentangled in that one can \emph{independently} specify a change in the dynamics or the goal of the system and still obtain a well-performing policy. We call this method Disentangled Skill Embeddings (DSE). Having trained such policies using DSE, we can tackle HRL problems by allowing the agent to move in the previously learned space of skills.

We contribute: 1) a novel MTRL formulation for learning disentangled skill embeddings using a VI objective; 2) two MTRL algorithms, DSE-REINFORCE and DSE-SAC (Soft Actor Critic), that can learn multi-dynamics and multi-goal policies and generalize to new tasks after fast retraining; and 3) demonstrate that one can learn higher level policies over latent skills in a HRL scenario.

\section{Related work}
Most approaches in MTRL ~\citep{taylor2009transfer,oh2017zero, henderson2017benchmark,nachum2018data,steindor, deisenroth2014multi} consider changes in dynamics or reward functions to result in different tasks. That is, two tasks with the same dynamics but different reward functions are considered to be different tasks. Here, we focus on exploiting  known changes either in dynamics or goals, or both, for better generalization.

An approach that decouples dynamics and rewards is \citep{devin2017learning}. In this work modular neural networks that capture robot dynamics are combined with modules that capture the task goals. This allows robots to  solve novel tasks by recombining task and robot modules. In \citep{zhang2018decoupling}  decoupling reward and dynamics is done in a model-based framework based on successor features. 
 
Closest to our work is~\citep{hausman2018learning} where  policies trained in a MTRL scenario are equipped with a latent variable embedding that describes a particular task. However, the latent variable embedding contains entangled information about both the transition and the reward function. This can impede generalization as the latent space might not be able to represent a policy for a task for unseen reward and dynamics combination. In contrast, our approach disentangles the latent spaces to overcome this issue.
Another similar approach is \citep{gupta2018meta}, which is a meta-learning algorithm that optimizes the latent space for the policies to enable structured exploration across multiple time steps.

We derive our algorithm using a variational infernence formulation for RL. Several previous works have described RL as an inference problem~\citep{kappen2005path,todorov2008general,levine2013variational} including its relation to entropy regularization~\cite{haarnoja2017reinforcement,haarnoja2018soft,grau2016planning,peters2010relative,ziebart2008maximum}. This formalism has recently attracted attention~\citep{haarnoja2017reinforcement,levine2018reinforcement,abdolmaleki2018maximum}, because it provides a powerful and intuitive way to describe more complex agent architectures using the tools from graphical models.

The multi-task algorithm, Distral \citep{teh2017distral}, utilizes, not only entropy regularization, but also adds a relative-entropy penalty that encourages the policies to be close to a shared compressed policy for transfer between tasks. The trained policies of this approach are not parameterized by any variables that identify the task at hand. This is important since policies trained with Distral cannot generalize beyond their training tasks.

\section{Background and Notation}
We consider a set $\mathcal M = \{m_{i,j} | \forall i \in I, j \in J \}$ of Markov decision processes (MDPs) $m_{i,j}$  defined as the tuple  $\left\langle \mathcal{S}, \mathcal{A}, \gamma, P_i, r_j \right\rangle$ where $\mathcal{S}$ is the state space, $\mathcal{A}$ the action space, $\gamma$  is the discount factor,  $P_i: \mathcal{S} \times \mathcal{A} \times \mathcal{S} \rightarrow [0,1]$ denotes the $i$'th state transition function that fully specifies the dynamics of the system, and   $r_j:\mathcal{S} \times \mathcal{A} \rightarrow \mathbb{R}$ denotes the reward function that quantifies the agent's performance and fully specifies the goal of the system.
 
In the multi-task problem, the agent must provide an optimal policy $\pi^\star_{i,j}$ for each of the $I\times J$ possible tasks. Obtaining all solutions independently maximizes performance on each individual task, but does not transfer information between the tasks. Solving all tasks with a single policy $\pi$ maximizes transfer since all  parameters of the policy are shared but, importantly,  the final solution only maximizes the average reward across tasks. Thus, a mixture of shared and task specific parameters is ideal. 

In the following we derive a variational multi-task reinforcement learning formulation where the policy has some parameters that are shared across all tasks, and two latent embeddings---dynamics-specific and reward/goal-specific---that serve as task-specific parameters.

\section{Disentangled Skill Embeddings}
We learn flexible skills that are reusable across different dynamics and goals by learning two latent spaces, $\dynlat$ and $\rewlat$. We achieve this by gathering data from the set of MDPs $\{m_{i,j} \}$, indexed by the dynamics-condition $i$ and goal-condition $j$ for all $i \in \{1,\dots I\}$ and $j \in \{1,\dots J\}$ and then learn the conditional distributions $\dynqvariat(\dynlat | i)$ and $\rewqvariat(\rewlat |j)$ for each $i$ and $j$. The latent variables are inputs to the policy $\pi(a|s,\dynlat,\rewlat)$ serving as behaviour modulators. Importantly, once the latent spaces are fully learnt, one can directly use the policy equipped with the skill embeddings without knowledge of the task indices. 

We now  derive a variational inference (VI) formulation for multi-task RL that allows learning both latent spaces and the policy. As in ~\citep{haarnoja2017reinforcement,kappen2005path,todorov2008general,levine2013variational,levine2018reinforcement,abdolmaleki2018maximum}, we start by introducing a random variable $R$ that denotes whether the trajectory  $\tau:= (i,j, s_0, a_0, \dynlat_0,\rewlat_0,  \dots s_T, a_T, \dynlat_T, \rewlat_T)$ is optimal ($R = 1$) or not ($R = 0$). 
Note that this includes the  dynamics index $i$ and the goal index $j$ as well as $\dynlat_t$ and $\rewlat_t$ for all time-steps $t$. The likelihood of an optimal trajectory is defined as $p(R=1 | \tau) \propto \exp \left(\sum_t^T r_j(a_t, s_t) \right)$.  We denote the posterior trajectory probability assuming optimality as $p(\tau | R=1)$. Treating $\tau$ as a latent variable with prior probability $p(\tau)$, we specify the log-evidence as $\log p(R=1) = \log \int p(R=1| \tau ) p(\tau) d\tau$. 

We  now introduce a  variational distribution on trajectories $q(\tau)$ which combined with Jensen's inequality provides the Evidence Lower Bound (ELBO) $\mathbb E_{\tau \sim q(\tau)} \left[ \log{p(R=1| \tau ) p(\tau)} - \log{q(\tau)}   \right]$.
In practice, we maximize the ELBO and use $q$ as an approximate posterior. 
The generative model is $p(\tau) = p(i)p(j)p(s_0) \prod^{T-1}_{t=0} p(\dynlat_t|i) p(\rewlat_t|j) p(a_t| s_t,\dynlat_t,\rewlat_t)$ $P_i(s_{t+1}  | s_t, a_t)$  and the variational distribution is $q(\tau) = p(i)p(j)p(s_0) \prod^{T-1}_{t=0} \dynqvariat(\dynlat_t|i)\rewqvariat(\rewlat_t|j) \pi(a_t| s_t,\dynlat_t,\rewlat_t)$ $P_i(s_{t+1}  | s_t, a_t)$. We stress that the only difference between these are the conditional factors involving the latent variables $z$ and $g$. The MTRL problem can now be stated as a maximization of the ELBO w.r.t. $\pi, \dynqvariat$ and $\rewqvariat$:
\begin{align} \label{eq:objectiveDSE}
    \max_{\pi, \rewqvariat, \dynqvariat} \mathop{\mathbb{E}}_{q(\tau)}  \bigg[  \sum_{t=0}^{\infty} \gamma^t \bigg( r_j(s_t, a_t) - \frac{1}{\alpha_d} \log \frac{\dynqvariat(\dynlat_t |i)}{p(\dynlat_t | i)} 
     - \frac{1}{\alpha_r} \log  \frac{\rewqvariat(\rewlat_t|j)}{p(\rewlat_t | j)} -  \frac{1}{\alpha_\pi} \log \frac{\pi(a_t| s_t,\dynlat_t,\rewlat_t)}{p(a_t| s_t,\dynlat_t,\rewlat_t)} \bigg) \bigg] 
\end{align}
where we added the scalars $\frac{1}{\alpha_d}, \frac{1}{\alpha_r}$ and $ \frac{1}{\alpha_\pi}$ to weight each information term  (see Appendix A.1 for a mathematical justification) and we set the problem to have infinite horizon ($T \rightarrow \infty$) with discount factor $\gamma$. 
The first two information terms measure how far the  variational distributions $\dynqvariat$ and $\rewqvariat$ are from the specified priors that we assume fixed and equal for all conditions.  Similarly, the last information term measures how far the variational policy is from the prior policy. Note that by setting $\alpha_\pi \rightarrow \infty$ we can eliminate this restriction. Furthermore, by setting $p(a_t| s_t,\dynlat_t,\rewlat_t)$ to be an improper uniform prior  we recover a formulation with entropy regularization in the policy.

This trajectory-based formulation of the problem is sufficient to derive a novel REINFORCE-type algorithm  equipped with DSE.  However, our work also provides a derivation of a multi-task SAC algorithm with DSE (DSE-SAC) that requires a full specification of the recursive properties of value functions and the optimal solutions for the policy and embeddings. We describe those properties in the next section. 

\subsection{Recursions, Optimal Policies and Optimal Embeddings}
Crucial to the construction of DSE-SAC is a recursive property that we can exploit for value bootstraping. A task-indexed value function $V^\pi_{i,j}(s)$ can be defined by taking the expectation in Equation~\eqref{eq:objectiveDSE} over all random variables except $s, i$ and $j$. The Q-function is then defined as $Q^\pi_{i,j}(s,a) := r_j(s,a) + \gamma \sum_{s'} P_i(s' |s, a) V^\pi_{i,j}(s')$. We provide a lemma for the value recursion.
\begin{lemma}[Index- and state-dependent Value Function Recursion] 
The index-dependent Value function satisfies the following recursive property.  
\begin{align}\label{eq:value_recursion_ij}
 &V^\pi_{i,j}(s) = \sum_{ \dynlat,\rewlat, a} \dynqvariat(\dynlat|i)  \rewqvariat(\rewlat|j) \pi(a|s,\dynlat,\rewlat) \bigg[ r_j(s,a) - \frac{1}{\alpha_d} \log\frac{\dynqvariat(\dynlat|i)}{p(\dynlat|i)} \nonumber \\
 &- \frac{1}{\alpha_r}\log\frac{\rewqvariat(\rewlat|j)}{p(\rewlat|j)}
-\frac{1}{\alpha_\pi}\log\frac{\pi(a|s, \dynlat,\rewlat) }{p(a|s,\dynlat,\rewlat)} + \gamma \sum_{s'} P_i(s'|s,a) V^\pi_{i,j}(s')
 \bigg] 
\end{align}
\end{lemma}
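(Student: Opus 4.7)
The plan is to start from the definition of $V^\pi_{i,j}(s)$, separate out the time-zero contribution from the infinite-horizon sum, and then invoke the Markov structure of the variational distribution $q(\tau)$ so that the tail is recognized as another copy of $V^\pi_{i,j}$ evaluated at $s_1$. Concretely, I would write
\begin{equation*}
V^\pi_{i,j}(s) \;=\; \mathbb{E}_{q(\tau\mid s_0=s,\,i,\,j)} \Bigl[\sum_{t=0}^{\infty}\gamma^{t}\,\ell_{t}\Bigr],
\end{equation*}
where $\ell_t$ abbreviates the bracketed instantaneous reward plus the three log-ratio penalties at time $t$, and then expose the $t=0$ term: $\sum_{t=0}^\infty \gamma^t\ell_t = \ell_0 + \gamma \sum_{t=1}^\infty \gamma^{t-1}\ell_t$.

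For the $t=0$ term, conditioning on $(i,j,s_0=s)$ fixes the factor $p(i)p(j)p(s_0)$, so the relevant part of $q(\tau)$ collapses to $q_\delta(z_0\mid i)\,q_\omega(g_0\mid j)\,\pi(a_0\mid s,z_0,g_0)$, which upon renaming $(z_0,g_0,a_0)\mapsto (z,g,a)$ exactly reproduces the leading sum and the four terms inside the bracket on the right-hand side of \eqref{eq:value_recursion_ij}.

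For the tail, I would push the expectation over $(z_0,g_0,a_0,s_1)$ outside, note that by the factorization of $q(\tau)$ the remaining conditional $q(\tau_{\geq 1}\mid s_1,i,j,z_0,g_0,a_0,s_0)$ depends on the past only through $s_1$, $i$, and $j$ (all later factors are of the form $q_\delta(z_t\mid i)\,q_\omega(g_t\mid j)\,\pi(a_t\mid s_t,z_t,g_t)\,P_i(s_{t+1}\mid s_t,a_t)$, and each $\ell_t$ depends only on quantities at or after time $t$), and then re-index $t\mapsto t-1$ to recognize the inner expectation as $V^\pi_{i,j}(s_1)$. The expectation over $s_1$ given $(s,a_0)$ is governed by $P_i(\cdot\mid s,a_0)$, yielding the $\gamma\sum_{s'}P_i(s'\mid s,a) V^\pi_{i,j}(s')$ inside the bracket.

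The only real obstacle is bookkeeping: one must carefully exhibit the independence structure of $q(\tau)$ so that (i) conditioning on $(s_0,i,j)$ leaves $(z_0,g_0,a_0)$ distributed independently according to the stated variational factors, and (ii) the tail is measurable with respect to $(s_1,i,j)$ alone, which is what legitimates identifying it with $V^\pi_{i,j}(s_1)$. Once that Markov argument is made precise, the result is immediate.
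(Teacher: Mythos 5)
Your proposal is correct and follows essentially the same route as the paper's proof: peel off the $t=0$ term, write the expectation explicitly over $(z_0,g_0,a_0,s_1)$ using the factorization of $q(\tau)$, and identify the re-indexed tail with $V^\pi_{i,j}(s_1)$ before renaming variables. The extra care you take in spelling out the Markov/measurability bookkeeping is a fine addition but does not change the argument.
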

The proof of the previous and subsequent lemmas can be found in Appendix A.2 and A.4. 

DSE-SAC also requires analytic solutions for the policy and embeddings. The optimal policy can be obtained by computing the functional derivative of a Lagrangian (see Appendix) of the variational problem w.r.t. the policy and equating the result to zero. 
\begin{lemma}[Optimal policy with DSE]\label{eq:optimal_policy} Let the variational distributions $\dynqvariat$ and $\rewqvariat$ be fixed. Then, the optimal policy is 
\begin{equation*}
    \pi^\star(a|s,\dynlat,\rewlat) = \frac{p(a|s, \dynlat, \rewlat)\exp\left( \alpha_\pi \bar Q^{\pi^\star}(s,a, \dynlat, \rewlat) \right)}{Z(s,\dynlat,\rewlat)}
\end{equation*}
where $Z(s,\dynlat,\rewlat)$ is the normalizing function and $\bar Q^{\pi^\star}(s,a,\rewlat,\dynlat) := \sum_{i,j} q(i|\dynlat)$ $q(j|\rewlat) Q^{\pi^\star}_{i,j}(s,a)$ with $q(i|\dynlat)=\frac{p(i)\dynqvariat(\dynlat|i) }{\sum_i p(i)\dynqvariat(\dynlat|i)}$ and $q(j|\rewlat)=\frac{p(j)\rewqvariat(\rewlat|j)}{\sum_j p(j)\rewqvariat(\rewlat|j)}$ are the Bayesian posteriors over $i$ and $j$.
\end{lemma}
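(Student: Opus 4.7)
The plan is to recognize that, with $\dynqvariat$ and $\rewqvariat$ held fixed, maximizing the ELBO~\eqref{eq:objectiveDSE} over $\pi$ reduces by the standard policy-improvement view of Lemma~1 to a pointwise problem: at each $(s,\dynlat,\rewlat)$, pick $\pi(\cdot|s,\dynlat,\rewlat)$ so as to maximize $\sum_{i,j} p(i)p(j) V^{\pi}_{i,j}(s)$, with the continuation term $\gamma\sum_{s'}P_i(s'|s,a)V^{\pi^\star}_{i,j}(s')$ folded into $Q^{\pi^\star}_{i,j}(s,a)$ and treated as frozen in the differentiation step. I would then introduce a Lagrange multiplier $\lambda(s,\dynlat,\rewlat)$ for the normalization constraint $\sum_a \pi(a|s,\dynlat,\rewlat) = 1$ and take the functional derivative with respect to $\pi(a|s,\dynlat,\rewlat)$, noting that inside the bracket of~\eqref{eq:value_recursion_ij} the only $\pi$-dependence is the explicit factor $\pi$ and the term $-\tfrac{1}{\alpha_\pi}\log(\pi/p)$.

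The stationarity condition takes the form
\begin{equation*}
\sum_{i,j} p(i)p(j)\dynqvariat(\dynlat|i)\rewqvariat(\rewlat|j)\Bigl[Q^{\pi^\star}_{i,j}(s,a) - \tfrac{1}{\alpha_\pi}\log\tfrac{\pi^\star(a|s,\dynlat,\rewlat)}{p(a|s,\dynlat,\rewlat)} - \tfrac{1}{\alpha_\pi} + h_{i,j}(\dynlat,\rewlat)\Bigr] + \lambda(s,\dynlat,\rewlat) = 0,
\end{equation*}
where $h_{i,j}(\dynlat,\rewlat)$ collects the $a$-independent log-ratios $-\tfrac{1}{\alpha_d}\log(\dynqvariat/p)$ and $-\tfrac{1}{\alpha_r}\log(\rewqvariat/p)$. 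The key algebraic step is to divide through by $C(\dynlat,\rewlat) := \sum_{i,j} p(i)p(j)\dynqvariat(\dynlat|i)\rewqvariat(\rewlat|j)$ and apply Bayes' rule in the form $p(i)\dynqvariat(\dynlat|i)/\sum_{i'}p(i')\dynqvariat(\dynlat|i') = q(i|\dynlat)$, and analogously for $j$. This converts the $Q$-weighted sum into $\bar Q^{\pi^\star}(s,a,\dynlat,\rewlat)$, after which solving for $\pi^\star$ gives $\pi^\star \propto p(a|s,\dynlat,\rewlat)\exp\bigl(\alpha_\pi \bar Q^{\pi^\star}(s,a,\dynlat,\rewlat)\bigr)$, since every remaining term (the $-1/\alpha_\pi$, the $h_{i,j}$ averages, and $\lambda/C$) is $a$-independent and therefore absorbs into the normalizer $Z(s,\dynlat,\rewlat)$.

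The main obstacle I expect is conceptual rather than computational: $\pi^\star$ appears on both sides of the resulting identity through $Q^{\pi^\star}_{i,j}$, so the formula must be read as a self-consistency (fixed-point) characterization of the optimal policy. Justifying that greedy improvement under $\bar Q^{\pi^\star}$ in fact converges to this fixed point is the soft policy-iteration argument standard in entropy-regularized RL, which I would cite rather than reprove. A secondary point worth care is verifying that the $h_{i,j}(\dynlat,\rewlat)$ contributions, while task-dependent, genuinely carry no dependence on $a$ (they come from $\dynqvariat$ and $\rewqvariat$, which are conditioned on $i,j$ only) and therefore legitimately pass into $Z$ without distorting the exponential in $\bar Q^{\pi^\star}$.
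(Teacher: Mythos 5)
Your proposal is correct and follows essentially the same route as the paper's own proof: a Lagrangian with a normalization constraint on $\pi$, a functional (stationarity) derivative treating the continuation value as frozen, Bayes' rule to turn the $p(i)p(j)\dynqvariat(\dynlat|i)\rewqvariat(\rewlat|j)$-weighted sum of $Q^{\pi^\star}_{i,j}$ into $\bar Q^{\pi^\star}$, and absorption of all action-independent terms into $Z(s,\dynlat,\rewlat)$. If anything, you are slightly more careful than the paper, which silently drops the $a$-independent KL terms and the $1/\alpha_\pi$ factor mid-derivation and does not remark on the fixed-point (self-consistency) nature of the resulting characterization.
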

Note that the Q-values $\bar Q^{\pi^\star}$ are computed by using both Bayesian posterior distributions over the task indices, i.e., $q(i|z)$ and $q(j|g)$.  Intuitively, in the extreme case where a $z$ and $g$ can completely specify the task at hand with certainty (i.e., the Bayesian posterior is peaked), the optimal policy selects the correct Q-function for this task; whereas for non-extreme cases a mixture is computed. 

Employing the same procedure as before, we write the optimal variational distributions as follows: 
\begin{lemma}[Optimal Embeddings] \label{eq:optimal_variationals}
Assuming fixed $\pi$, the optimal variational distributions are
\begin{equation*}
   \dynqvariat^\star(\dynlat|i) = \frac{p(\dynlat|i)e^{ \alpha_d D^\star_i(\dynlat)  }}{Z(i)} \quad \text{where} \quad D^\star_i(z)  :=  \mathop{\mathbb E}_{\substack{j\\ g\sim q_{\omega_j}^\star}} \mathop{\mathbb{E}}_{\substack{a\sim \pi \\ s \sim p}} \bigg( Q^{\pi}_{i,j}(s,a) - \frac{1}{\alpha_\pi} \log \frac{\pi(a|s,\dynlat,\rewlat)}{p(a|s,\dynlat,\rewlat)}\bigg) 
\end{equation*}
\begin{equation}
\rewqvariat^\star(\rewlat|j) = \frac{p(\rewlat|i)e^{ \alpha_r G^\star_j(\rewlat)  }}{Z(j)} \quad \text{where} \quad G^\star_j(g)   :=  \mathop{\mathbb E}_{\substack{i \\ z\sim q_{\delta_i}^\star}} \mathop{\mathbb{E}}_{\substack{a\sim \pi \\ s \sim p}} \bigg( Q^{\pi}_{i,j}(s,a) - \frac{1}{\alpha_\pi} \log \frac{\pi(a|s,\dynlat,\rewlat)}{p(a|s,\dynlat,\rewlat)}\bigg). 
\end{equation}
where $D$ and $G$ are conceptually similar to Value functions but depend on $i$, $z$ and $g$, $j$, respectively.
\end{lemma}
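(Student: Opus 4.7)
The plan is to mirror the derivation of the optimal policy in Lemma~\ref{eq:optimal_policy}, applying the same Lagrangian--functional-derivative technique but now treating $\dynqvariat(\cdot|i)$ (and $\rewqvariat(\cdot|j)$) as the optimization variable with $\pi$ (and the complementary embedding) held fixed. The starting point is the ELBO from Equation~\eqref{eq:objectiveDSE}, which I would rewrite, using the value-function recursion of Lemma~1, as
\begin{equation*}
\mathcal{L} = \sum_{i,j} p(i) p(j) \sum_{s_0} p(s_0) V^{\pi}_{i,j}(s_0),
\end{equation*}
and then unroll $V^{\pi}_{i,j}$ through its recursive definition so that the factor $\dynqvariat(\dynlat_t|i)$ is exposed at every time step.

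Next I would isolate a single occurrence of $\dynqvariat(\dynlat|i)$ and exploit the stationarity of the infinite-horizon problem. Concretely, I would marginalize over all other random variables at that time step, which collects into one bracket the three pieces: the immediate reward minus the policy KL contribution (weighted by $\alpha_\pi^{-1}$), the bootstrap term $\gamma \sum_{s'} P_i(s'|s,a) V^{\pi}_{i,j}(s')$, and the goal-embedding KL (which does not depend on $z$ and thus drops out of the $\dynqvariat$-derivative). Using the definition of $Q^{\pi}_{i,j}(s,a)$, this bracket becomes exactly $Q^{\pi}_{i,j}(s,a) - \alpha_\pi^{-1}\log\frac{\pi(a|s,\dynlat,\rewlat)}{p(a|s,\dynlat,\rewlat)}$, and taking the expectation over $a\sim \pi$, $s\sim p$, $j$ and $g\sim \rewqvariat^\star$ reproduces $D^\star_i(z)$ as defined in the statement.

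With this in hand, I form the Lagrangian
\begin{equation*}
\mathcal{L}(\dynqvariat) + \sum_i \lambda_i\!\left(\sum_{\dynlat}\dynqvariat(\dynlat|i) - 1\right),
\end{equation*}
take the functional derivative with respect to $\dynqvariat(\dynlat|i)$, and set it to zero. The derivative of the $-\alpha_d^{-1}\log\frac{\dynqvariat(\dynlat|i)}{p(\dynlat|i)}$ term contributes $-\alpha_d^{-1}(\log\dynqvariat(\dynlat|i) - \log p(\dynlat|i) + 1)$, the reward/bootstrap/policy-KL terms contribute $D^\star_i(\dynlat)$ by the stationarity argument above, and the normalization constraint absorbs the constant. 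Solving yields
\begin{equation*}
\log \dynqvariat^\star(\dynlat|i) = \log p(\dynlat|i) + \alpha_d D^\star_i(\dynlat) - \log Z(i),
\end{equation*}
which is the claimed form. The derivation for $\rewqvariat^\star(\rewlat|j)$ is entirely symmetric: swap the roles of $(i,\dynlat,\alpha_d)$ with $(j,\rewlat,\alpha_r)$ and note that $P_i$ does not involve $g$, so the bootstrap term contributes in the same way.

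The main obstacle I expect is the bookkeeping around the recursive appearance of $\dynqvariat(\dynlat_t|i)$ at every time step in the unrolled objective: one must justify that perturbing a single time-slice factor gives the same stationary contribution as perturbing the distribution globally, so that the discounted sum collapses into the single expression $D^\star_i(\dynlat)$. This is the analogue of the usual Bellman consistency argument used to pass from policy-iteration-style variational calculus to the closed-form soft-optimal solution; once that equivalence is established (using the $V$-recursion of Lemma~1 in reverse), the remainder reduces to the standard exponential-family calculation.
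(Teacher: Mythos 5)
Your proposal matches the paper's own derivation: the paper likewise forms the Lagrangian $\sum_s p(s)V^\pi(s)$ plus normalization constraints (Appendix A.3), takes the functional derivative with respect to $\dynqvariat(\dynlat^\star|i^\star)$ with $\pi$ and $\rewqvariat$ fixed, sets it to zero, identifies the bracketed $Q^{\pi}_{i,j} - \alpha_\pi^{-1}\log(\pi/p)$ expectation as $D^\star_i(\dynlat)$, and absorbs all $\dynlat$-independent terms (including the $\rewqvariat$ KL) and the multiplier into $Z(i)$, with the symmetric argument for $\rewqvariat$. The time-slice/stationarity bookkeeping you flag as the main obstacle is treated in the paper at exactly the same level of informality — a one-step argument via the value recursion with the bootstrapped value held fixed — so your attempt is essentially the paper's proof.
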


\section{DSE Algorithms}
This section focuses on describing two practical algorithms using disentangled embeddings. DSE-REINFORCE is updated on-policy and requires full trajectories from the different tasks. Although it is easier to implement, REINFORCE-type algorithms  are known to  suffer from high variance in the gradient estimates which slows down training. The second algorithm, DSE-SAC, is inspired by the SAC algorithm~\citep{haarnoja2018soft}, and is an more data-efficient off-policy algorithm that directly uses the transitions of all the tasks sampled from a replay memory. It achieves this by estimating the Value functions and Q-functions. 

Common to both algorithms is the parametrization of the policy $\pi_\theta(a |s,\dynlat, \rewlat)$ with parameters $\theta$ representing those of a neural network. Additionally, we consider pure entropy regularization by setting  the prior policy $p(a|s,\dynlat, \rewlat)$ to an improper uniform distribution by which we can ignore. Similarly, we consider the embeddings $\dynqvariat( \dynlat| i)$ and  $\rewqvariat(\rewlat | j)$ to have parameters (abusing notation) $\delta$ and $\omega$, respectively. With this notation in place we proceed to describe DSE-REINFORCE.

\subsection{DSE-REINFORCE}
 DSE-REINFORCE first samples $M$ trajectories $\{\tau^{i,j}_m \}_{m=1}^{M}$ from each combination of dynamics and goal contexts $i,j$, where $\tau^{i,j}_m:=(s_{0,m}^{i,j}, a_{0,m}^{i,j}, z_{0,m}^{i,j}, g_{0,m}^{i,j}, \dots$ $s_{t,m}^{i,j}, a_{t,m}^{i,j}, z_{t,m}^{i,j}, g_{t,m}^{i,j},\dots ) $. Then, these are used to update the shared parameters $\theta$ and the parameters $\delta$ and $\omega$ of the variational distributions. For the updates of the variational distributions we use the reparametrization trick~\citep{kingma2013auto} and make the assumption that the variational parameters $\delta = (\delta_1 \dots \delta_i \dots \delta_I) $ and $\omega = (\omega_1 \dots \omega_j \dots \omega_J) $  contain a set of specific parameters $\delta_i$ and $\omega_j$ for each dynamics and goal context. We further assume that the latent variables are multivariate Gaussians with diagonal covariance matrix (this assumption can easily be relaxed). 
Therefore, the latent variables are expressed as $\dynlat_{\delta_i}(\epsilon):= \mu_{\delta_i} + \sigma_{\delta_i}\epsilon  $ and $\rewlat_{\omega_j}(\varepsilon):=  \mu_{\omega_j} + \sigma_{\omega_j} \varepsilon$ where $\epsilon \sim \mathcal N(\bm 0,\bm 1)$ and $\varepsilon \sim \mathcal N(\bm 0, \bm1)$ are the noise terms; $\mu_{\delta_i}$ and $\mu_{\omega_j}$ are the mean vectors and; $\sigma_{\delta_i}^2$ and $\sigma_{\omega_j}^2$  are the diagonal vectors of the covariance matrix. 

The maximization in~\eqref{eq:objectiveDSE} can be written  using Monte Carlo estimators as $\max_{\theta, \delta, \omega}  \mathcal{L}(\theta, \delta, \omega)$ with
\begin{align*}
\mathcal L(\theta, \delta, \omega) := 
  \frac{1}{MIJ}\sum_{m,i,j=1}^{M,I,J} R_0(\tau_{m}^{i,j}) - \frac{C}{\alpha_d} \mathbb{E}_{i} [\textrm{KL}(\dynqvariat(\cdot |i)||p(\cdot | i)) ] - \frac{C}{\alpha_r} \mathbb{E}_{j} \left[\textrm{KL} (\rewqvariat(\cdot |j) || p(\cdot | j) ) \right]  
\end{align*}
where we have used the following definition of the regularized discounted future returns 
\begin{align*}
    \tilde R_t(\tau_{m}^{i,j}) := \sum_{h=0}^{T-1} \gamma^{h-t} \bigg( r_j(s_{h,m}^{i,j}, a_{h,m}^{i,j}) -  \frac{1}{\alpha_\pi} \log \pi_\theta \left(a_{h,m}^{i,j} |s_{h,m}^{i,j},  \dynlat_{\delta_i}(\epsilon_{h,m}^{i,j}), \rewlat_{\omega_j} (\varepsilon_{h,m}^{i,j}) \right) \bigg) .
\end{align*}
Moreover, we have separated the KL terms of the variational distributions out of the summation over $t$, as they are independent of $t$ and can be computed in closed form due to the Gaussian assumption. Consequently, we added the corresponding sum of discounts by computing the geometric sum $C:=\frac{1-\gamma^{T-1}}{1-\gamma}$.  Note that we implicitly redefined the trajectories so that they contain the noise realizations instead of the latent variables. Algorithmic details can be found in the Appendix B.

\paragraph{Adaptive normalization using Pop-Art:}
In our preliminary experiments we observed that DSE-REINFORCE was selectively solving some tasks but not others. For this reason we use the adaptive rescaling method Pop-Art~\citep{van2016learning,hessel2018multi} to normalize the discounted rewards $\tilde R_t (\tau_m^{i,j})$ to have zero mean and unit variance before each training iteration. Thus all tasks affect the gradient equally.

\subsection{DSE-SAC}
DSE-SAC collects $M$ transitions $\{(s_{t,m}^{i,j}, a_{t,m}^{i,j}, r_j( s_{t,m}^{i,j}, a_{t,m}^{i,j}), s_{t+1,m}^{i,j})\}_{m=1}^M$ from each dynamics and reward context $i,j$ and stores them in separate replay memories $\mathcal D_{ij}$.  Then, samples from the replay memories are used to estimate the Q-functions  $Q_{\phi_{i,j}}$, the value functions $V_{\psi_{i,j}}$, the variational distributions $q_{\delta_i}$ and $q_{\omega_j}$ and the policy $\pi_\theta$.  Subscripts denote the symbols for the parameters of the neural networks used as function approximators. 

The Q-functions are learned by optimizing the loss
$  \mathcal L_{Q_{i,j}} := \mathbb E_{s,a \sim \mathcal D_{i,j}} \Big[\left( \hat Q (s,a) -  Q_{\phi_{i,j}} (s,a) \right)^2 \Big]$, 
where the target $\hat Q (s,a) := r_j(s,a) + \gamma V_{\bar \psi_{i,j}}(s') $ is a one-sample estimate obtained with real experience and $\bar \psi_{i,j}$ denotes the parameters of a target network which is  updated at every training iteration as $\bar \psi_{i,j} \leftarrow \tau \psi_{i,j} + (1-\tau) \bar\psi_{i,j}$ for some $\tau \in (0, 1]$ . 

The value functions are learned by minimizing
$ \mathcal L_{V_{i,j}} := \mathop{\mathbb E}_{s\sim \mathcal D_{i,j}  }  \Big[\Big(    \hat V_{i,j}(s)  - V_{\psi_{i,j}} (s)     \Big)^2 \Big]$,
where the target exploits the value recursion in Equation~\eqref{eq:value_recursion_ij}: 
\begin{align*}
    \hat V_{i,j}(s) :=
    \mathop{\mathbb E}_{z \sim q_{\delta_i},  g\sim q_{\omega_j}, a\sim \pi_\theta}
    \bigg[ Q_{\phi_{i,j}}^\textrm{min} (s,a)- \log \pi_\theta (a |s, \dynlat, \rewlat) - \frac{1}{\alpha_d} \log \frac{q_{\delta_i}(z)}{p(z)}  -  \frac{1}{\alpha_r}\log  \frac{q_{\omega_j}(g)}{p(g)} \bigg].
\end{align*}
In order to reduce overestimation of Q-values, we follow ~\citep{haarnoja2018soft} where the minimum of two Q-function approximators is used i.e., $Q_{\phi_{i,j}}^\textrm{min}(s,a) := \min_{x \in {1,2}} Q_{\phi_{i,j}}^{(x)}(s,a)$, which have different sets of parameters and initialization but are trained using the same loss $\mathcal L_{Q_{i,j}}$.

The parameters $\theta$ can be learned by minimizing the following expected KL-divergence between the parametric policy $\pi_\theta$ and the optimal policy from Lemma~\eqref{eq:optimal_policy} (with an improper prior $p(a|s,z,g)$)
\begin{equation*}
    \mathop{\mathbb E}_{i,j}
    \mathop{\mathbb E}_{ s \sim \mathcal D_{i,j}, z \sim q_{\delta_i},  g\sim q_{\omega_j}}
     \Big[ \textrm{KL} \Big(\pi_\theta (\cdot | s, g, z) \Big\| \frac{e^{ \alpha_\pi \bar Q^{\pi_\theta}(\cdot, s, \dynlat,\rewlat)}}{Z(s,\dynlat,\rewlat)}  \Big)\Big] 
\end{equation*}
The policy loss is written as
$
    \mathcal L_\pi := \mathop{\mathbb E}_{i,j}
    \mathop{\mathbb E}_{s \sim \mathcal D_{i,j}, z \sim q_{\delta_i}, g\sim q_{\omega_j}, a \sim \pi}
    \left[\log \pi_\theta (a | s, \dynlat, \rewlat)  - Q^\textrm{min}_{\phi_{i,j}}(s, a)  \right]
$,
where the Q-function is estimated with a single sample i.e., $\bar Q^{\pi^\star}(s,a,\dynlat,\rewlat) \approx Q^\textrm{min}_{\phi_{i,j}}(s,a)$ given that $z \sim q_{\delta_i}(\cdot)$ and $g \sim q_{\omega_j}(\cdot)$. The normalizing function $Z(s,g,z)$ can be safely ignored as in~\cite{haarnoja2018soft}.

Following a similar rationale as before, the variational parameters for each context ($i,j$) can be learned  by minimizing the following KL-divergences  
\begin{align*}
& \textrm{KL} \Big(q_{\delta_i}(\cdot) \Big\| \frac{p(\cdot|i)e^{ \alpha_d \tilde D_i(\cdot )  }}{Z(i)}  \Big), \, \quad \textrm{KL} \Big(q_{\omega_j}(\cdot) \Big\| \frac{p(\cdot|i)e^{ \alpha_r \tilde G_j(\cdot )  }}{Z(j)}  \Big),
\end{align*}
which translates into 
\begin{align}
    \mathcal L_{q_{\delta_i}} :=\mathbb E_{z \sim q_{\delta_i}} \left[ \log \frac{q_{\delta_i} (z)}{p(z)} -\alpha_d \tilde D_{i}(z)\right] \, \quad \mathcal L_{q_{\omega_j}} :=\mathbb E_{g \sim q_{\omega_j}} \left[ \log \frac{q_{\omega_j} (g)}{p(g)} -\alpha_r \tilde G_{j}(g)\right] 
\end{align}
where, for clarity, we define $
\tilde D_{i}(z) := \mathop{\mathbb E}_{j, g\sim q_{\omega_j}}
[\kappa_{i, j}(z, g)], \quad
 \tilde G_{j}(g)  :=  \mathop{\mathbb E}_{i, z\sim q_{\delta_i}}[\kappa_{i, j}(z, g)]
$
 with $\kappa _{i, j}(z, g) = \mathop{\mathbb E}_{a\sim \pi_\theta, s \sim p}\bigg[ Q_{\phi_{i,j}}(s,a) - \frac{1}{\alpha_\pi} \log \pi_\theta(a|s,\dynlat,\rewlat)\bigg]$.   All remaining algorithmic details are in the Appendix.

\section{Experiments}
\newcommand\widthB{90}
\begin{figure}[t]
    \centering
    \includegraphics[width=\widthB pt]{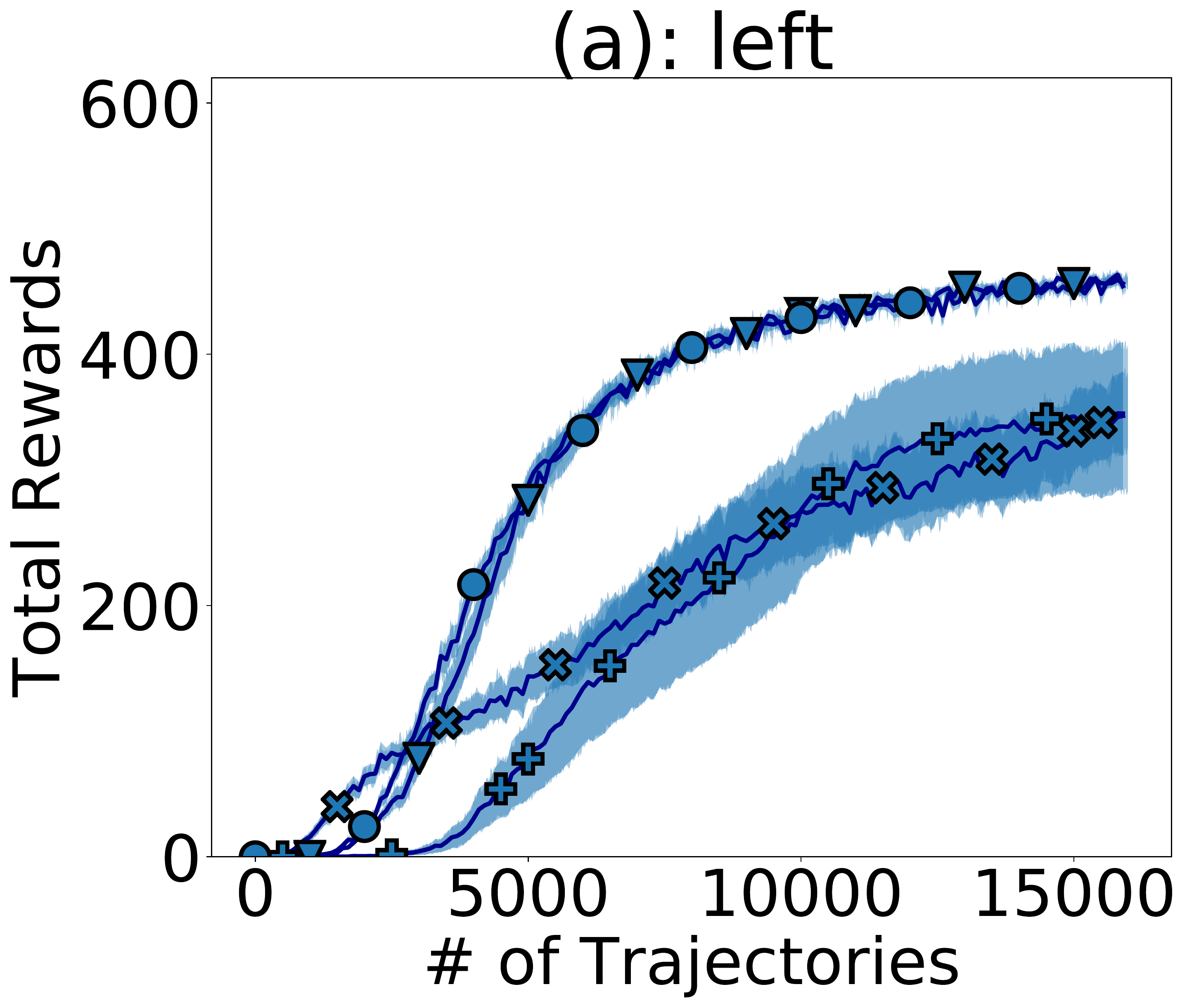}
    \includegraphics[width=\widthB pt]{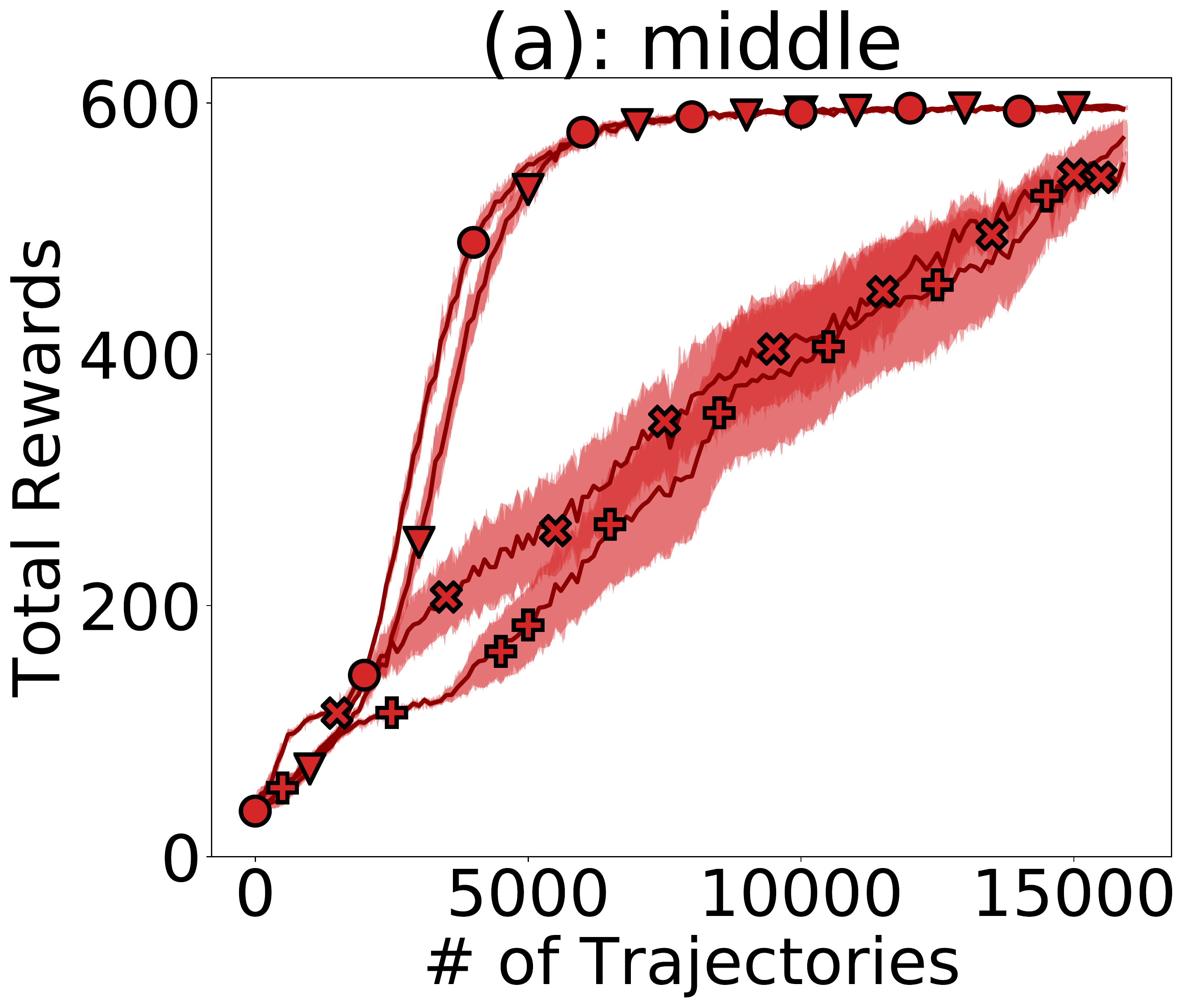}
    \includegraphics[width=\widthB pt]{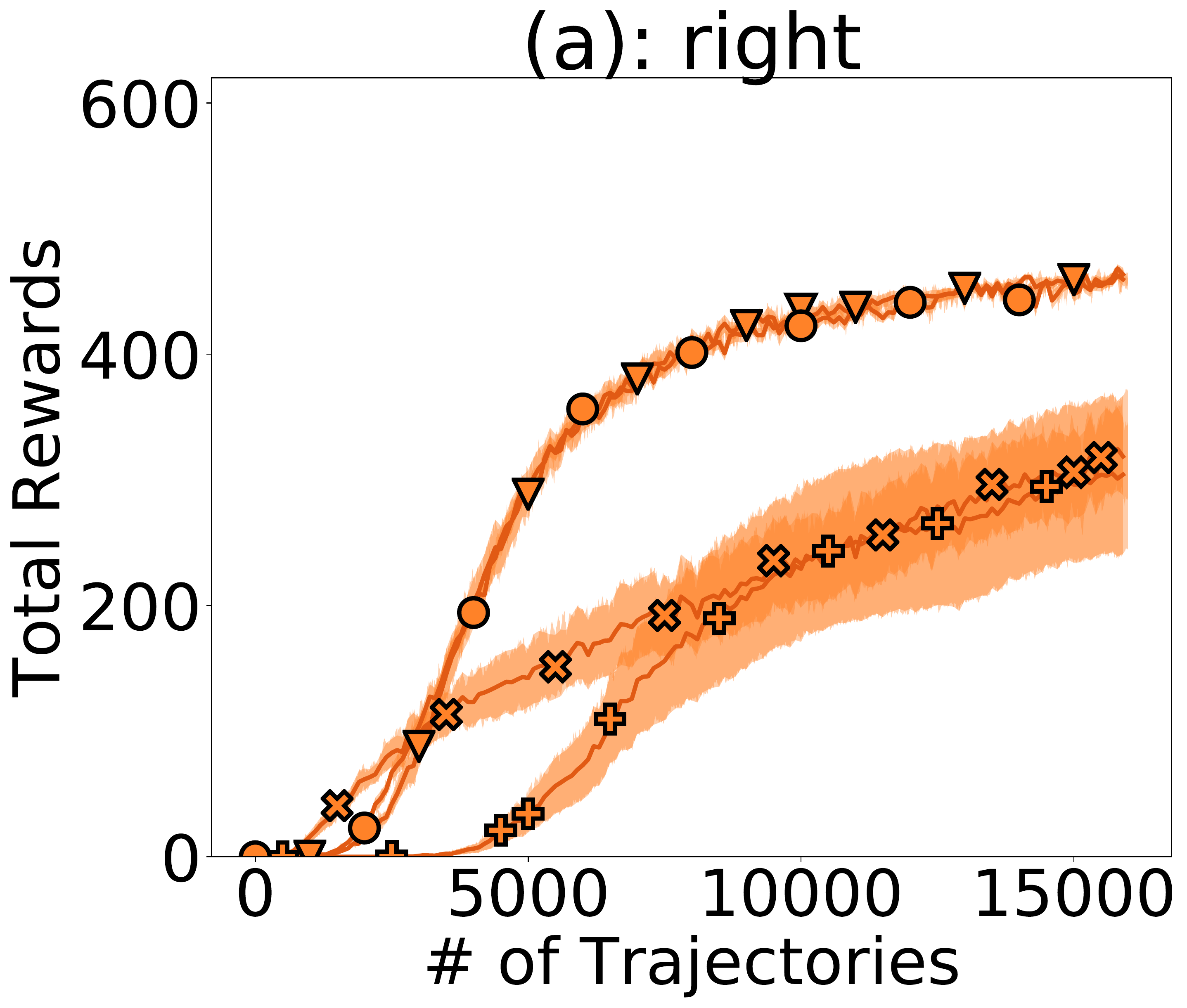}
    \includegraphics[width=\widthB pt]{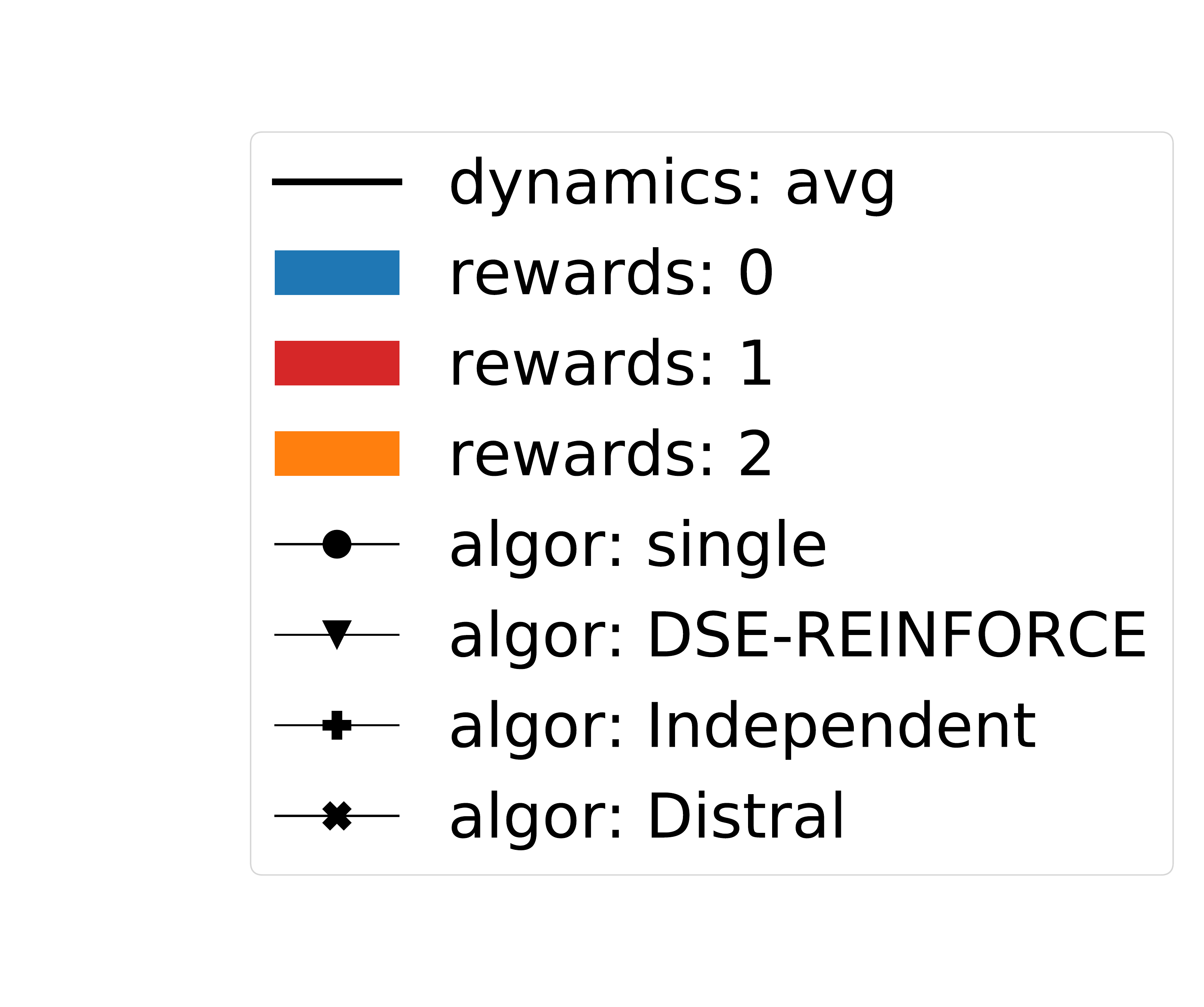}\\
    \includegraphics[width=\widthB pt]{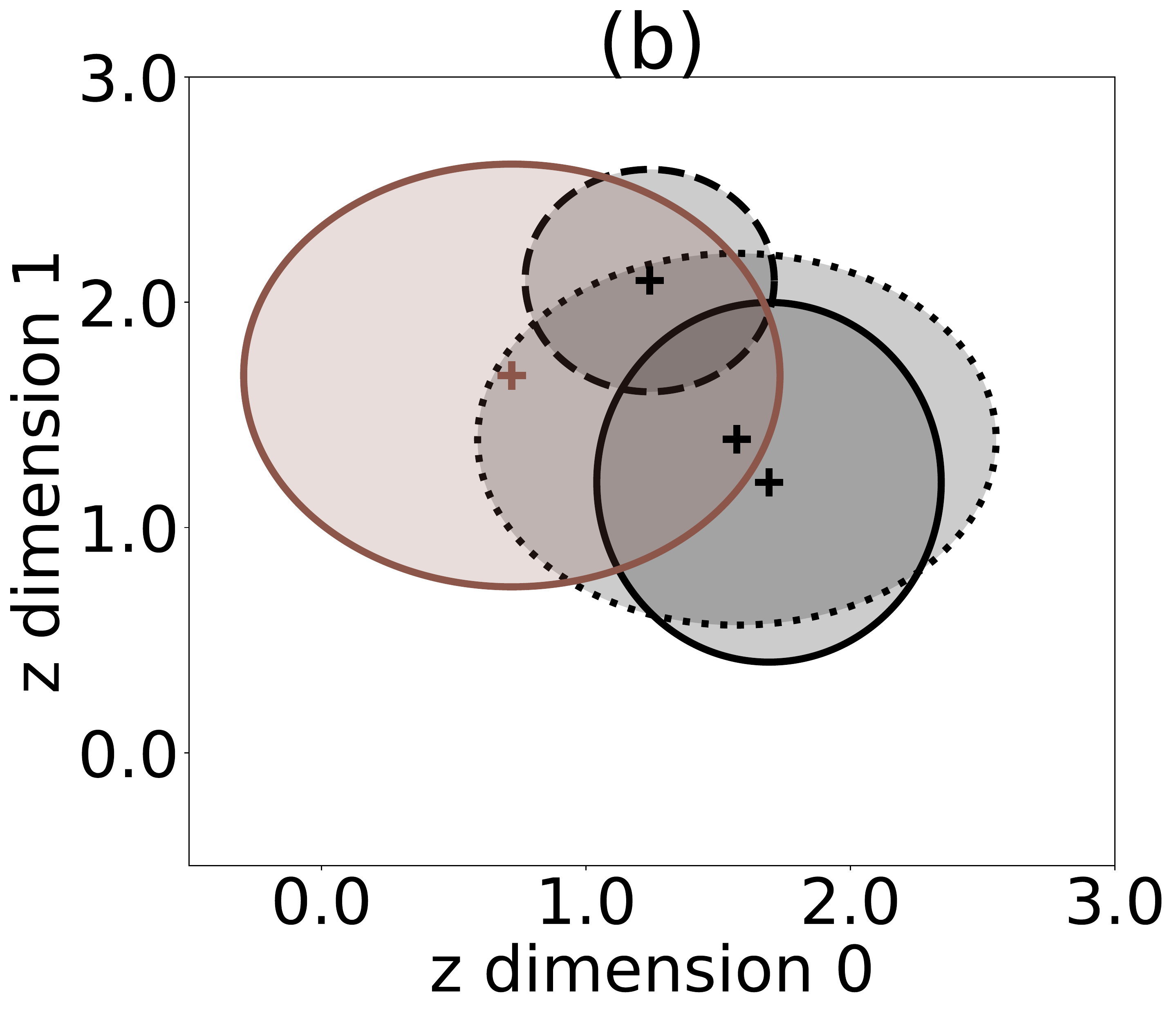}
    \includegraphics[width=\widthB pt]{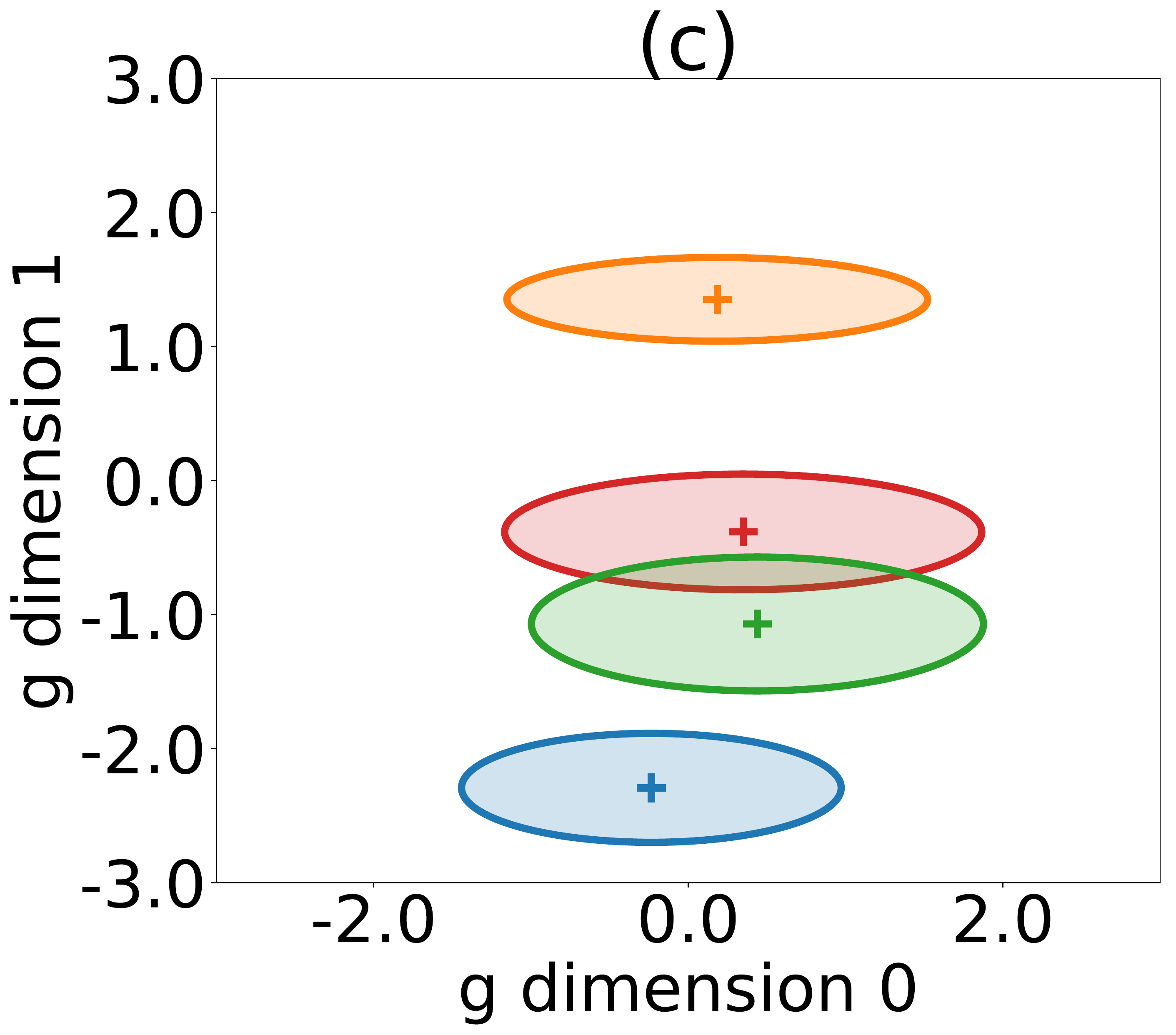} 
    \includegraphics[width=\widthB pt]{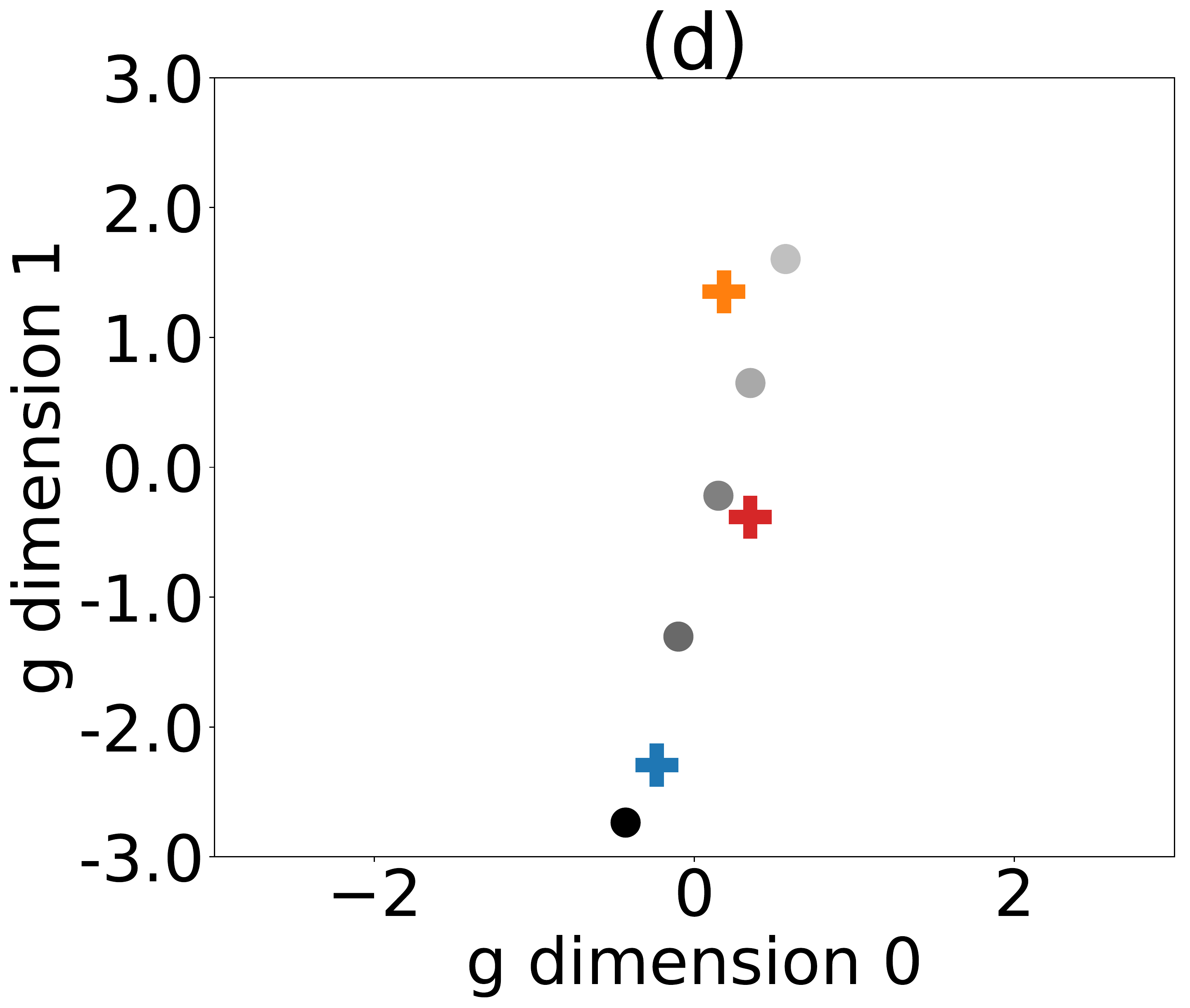}
    \includegraphics[width=\widthB pt]{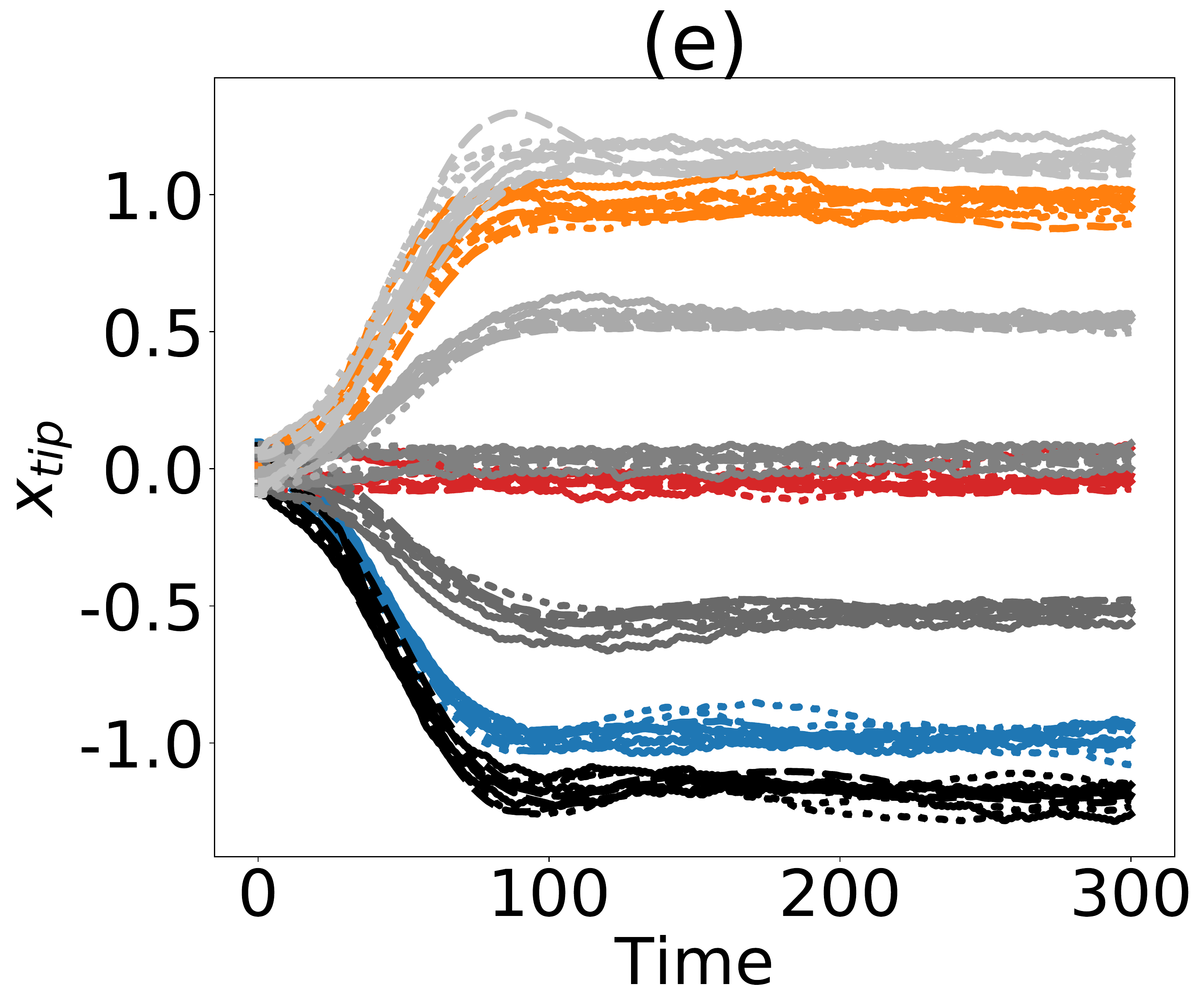}
    \caption{Multi-task Cartpole. The colors correspond to different goal contexts. (a) shows the average evolution of the total rewards across 4 random seeds; shaded regions are the standard error, separated by the goals. For clarity, we averaged over dynamics (full plots in Appendix C.1). (b) and (c) show the latent spaces $g$ and $z$ respectively after the training. The distributions shown are $1.5\times$ standard deviation. (d) and (e) are results from a generalization experiment carried out on $g$ (see Sec~6.1 for description).}
    \label{fig:cartpole_learning_results}
\end{figure}

Here we empirically validate our algorithms and show the applicability of the trained  policies equipped with DSE on both multi-task and hierarchical RL problems. DSE-REINFORCE is tested on a discrete action-space problem (Cartpole) and DSE-SAC on a continous action-space problem (Reacher\footnote{From the Mujoco dynamics simulation software}). On multi-task problems we show the benefit of disentanglement  when compared to three baselines: single-embedding algorithm similar to~\cite{hausman2018learning}, Distral~\cite{teh2017distral} and independent learners.  Hyperparameter values are shown in the Appendix B.3.

\subsection{DSE-REINFORCE on Cartpole}
\label{sec:cartpole_main}
We extended the Cartpole environment provided in the Open AI gym library\footnote{\url{https://gym.openai.com/envs/CartPole-v1/}} by modifying the reward function to reflect the need to balance at different locations: left ($x=-1$), middle ($x=0$) and right ($x=1$).  Additionally, we allowed for three different dynamics conditions by changing the mass of the cart $m=\{0.2, 1.0, 2.0\}$. Simulations were run for $300$ time steps. 

Figure~\ref{fig:cartpole_learning_results}(a) shows that DSE-REINFORCE solves all nine tasks simultaneously at approximately the same rate exceeding the performance of the baselines: Distral \citep{teh2017distral} and independently trained (no multi-task; trained with REINFORCE) algorithms; and performing similarly to the single embedding case. Importantly, we find that DSE-REINFORCE produces a policy that generalizes better than the baselines as we show in the next section.
In Figure~\ref{fig:cartpole_learning_results}(b) we observe the variational distributions learned for embeddings of the different dynamics contexts (in grey) and in (c) for the different reward contexts (in red, orange and blue). These have separated to represent the different tasks in the latent space. The variational distributions shown in dark-red color in (b) and green color in (c) are the result of learning (with identical priors on $z, g$ and conditioned on the trained shared parameters) in a new unseen condition ($ m=1.75, x=-0.5$) successfully solving the task. This shows that the latent spaces are able to interpolate well. 
In Figure~\ref{fig:cartpole_learning_results}(d) we show the mean of the variational distributions for the goal contexts in color; and in grey, latent vectors that we used to test whether the learned policy is able to generalize to unseen goals. We show in panel (e) the x-location of the tip of the pole. As can be seen through the grey conditions, the policy is able to generalize to new locations in an ordered (along the x-axis) fashion. 

\paragraph{Retraining and generalization of DSE-REINFORCE on Cartpole} 
In this section we test generalization when there are missing dynamics or goal conditions on a $3\times3$ task matrix. We consider the case of training on off-diagonal tasks ($6-3$) and testing on the diagonal; and the case of training on only $4$ tasks ($4-5$) (See Figure~\ref{fig:retraining_cartpole}). The testing phase is executed on each test-task by initializing the variational distributions with matching indices  and retraining both the variational and shared parameters.

\newcommand\widthA{75}
\begin{figure}[t]
    \centering
    \includegraphics[width=\widthA pt]{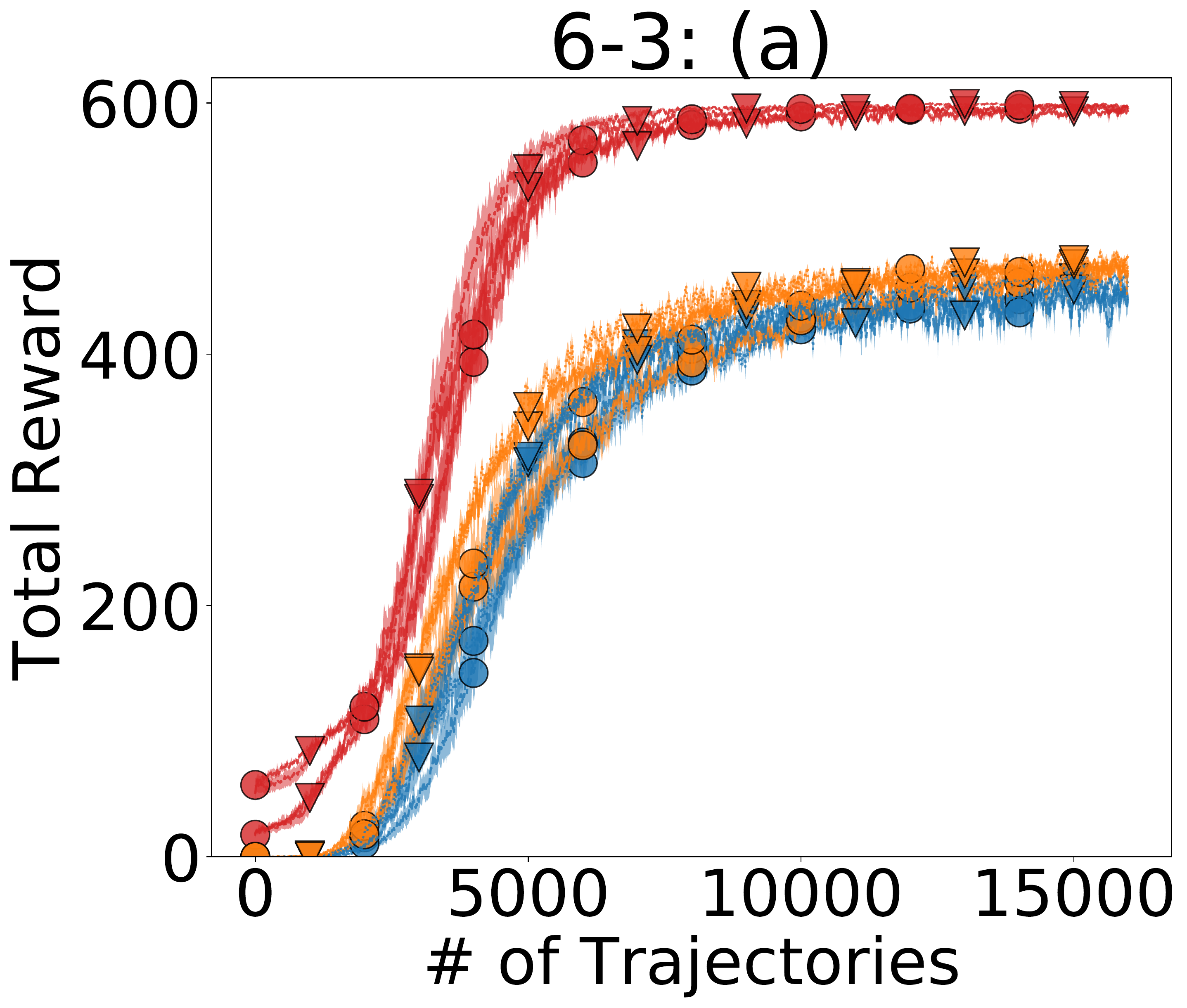}
    \includegraphics[width=\widthA pt]{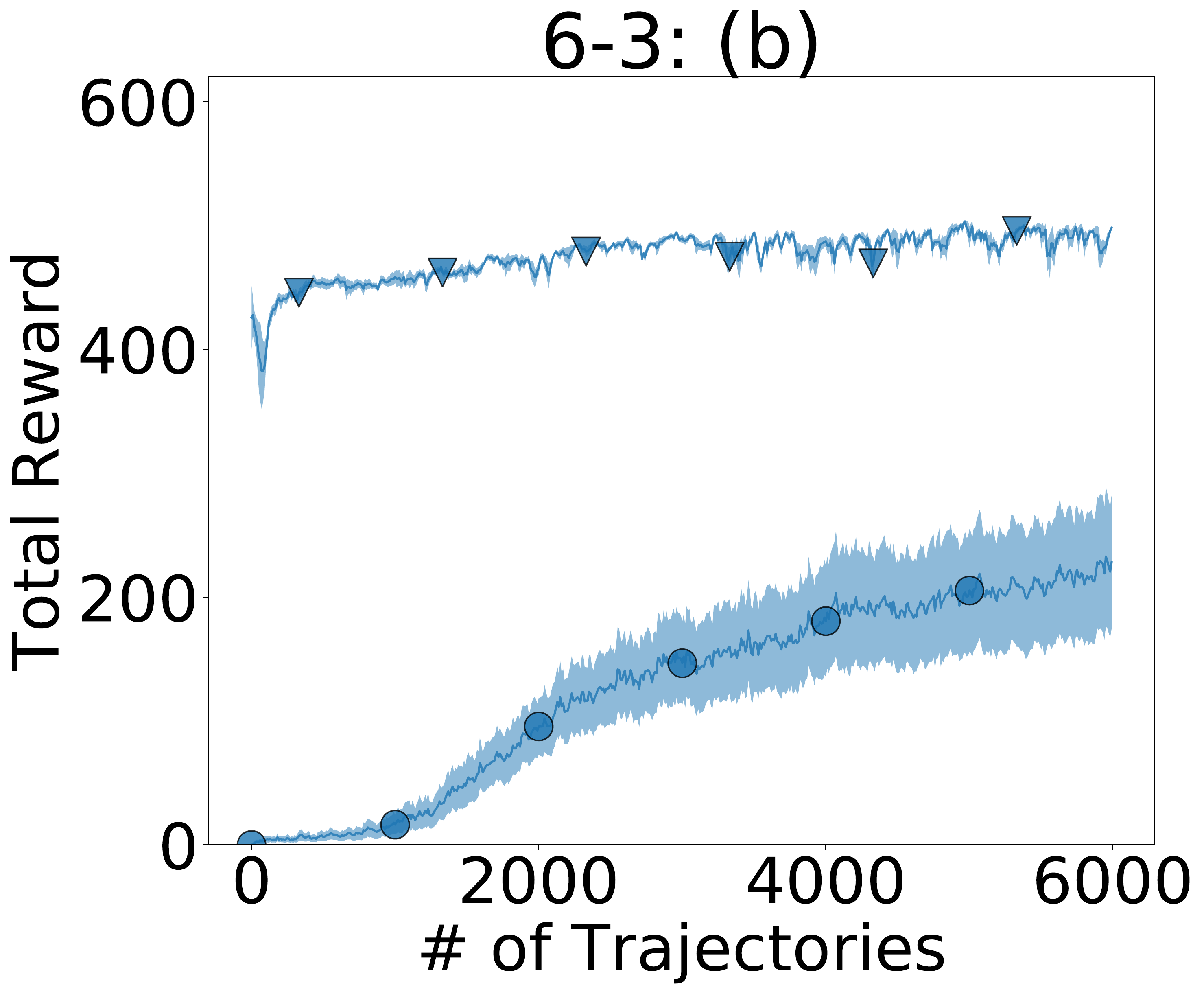}
    \includegraphics[width=\widthA pt]{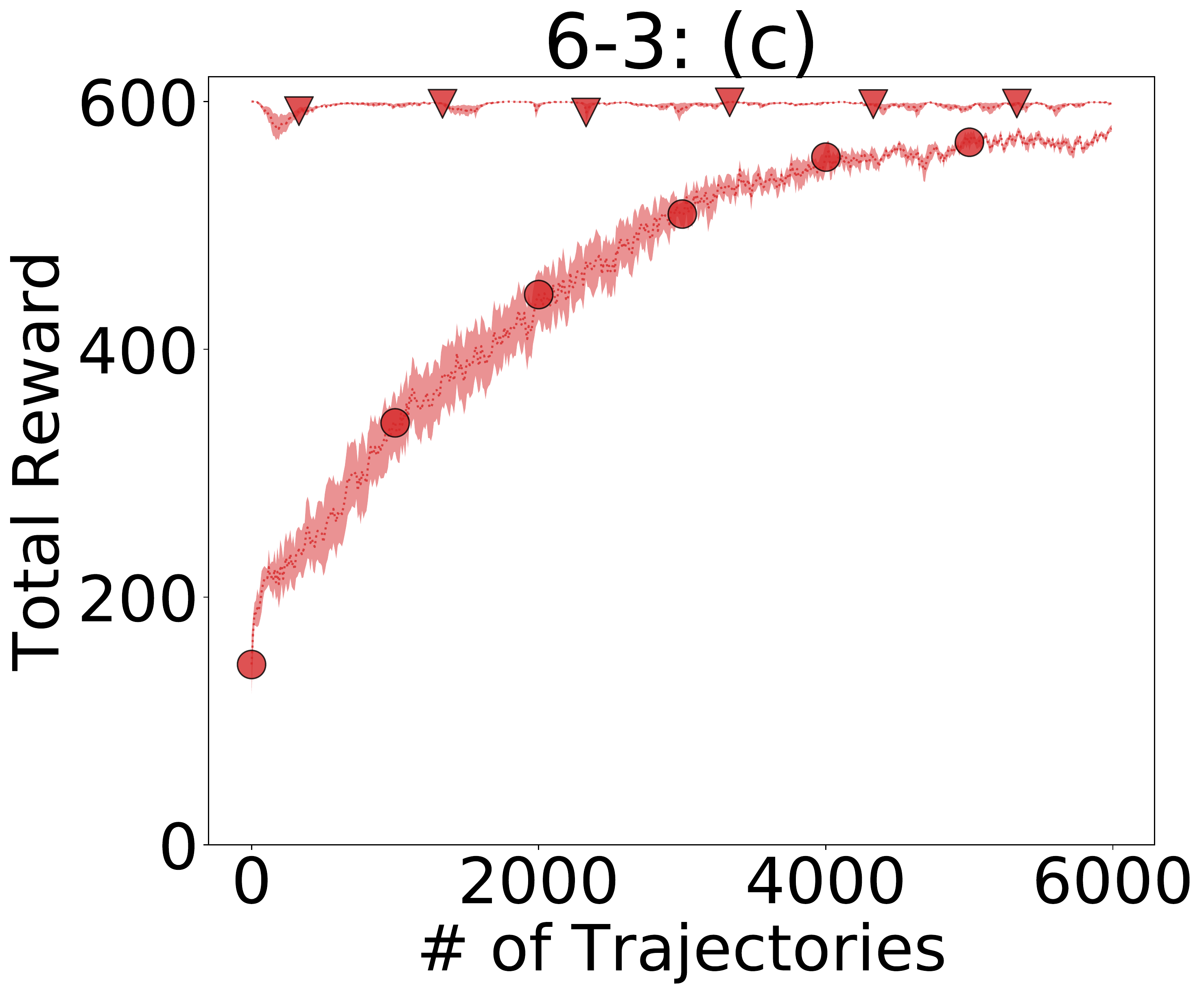}
    \includegraphics[width=\widthA pt]{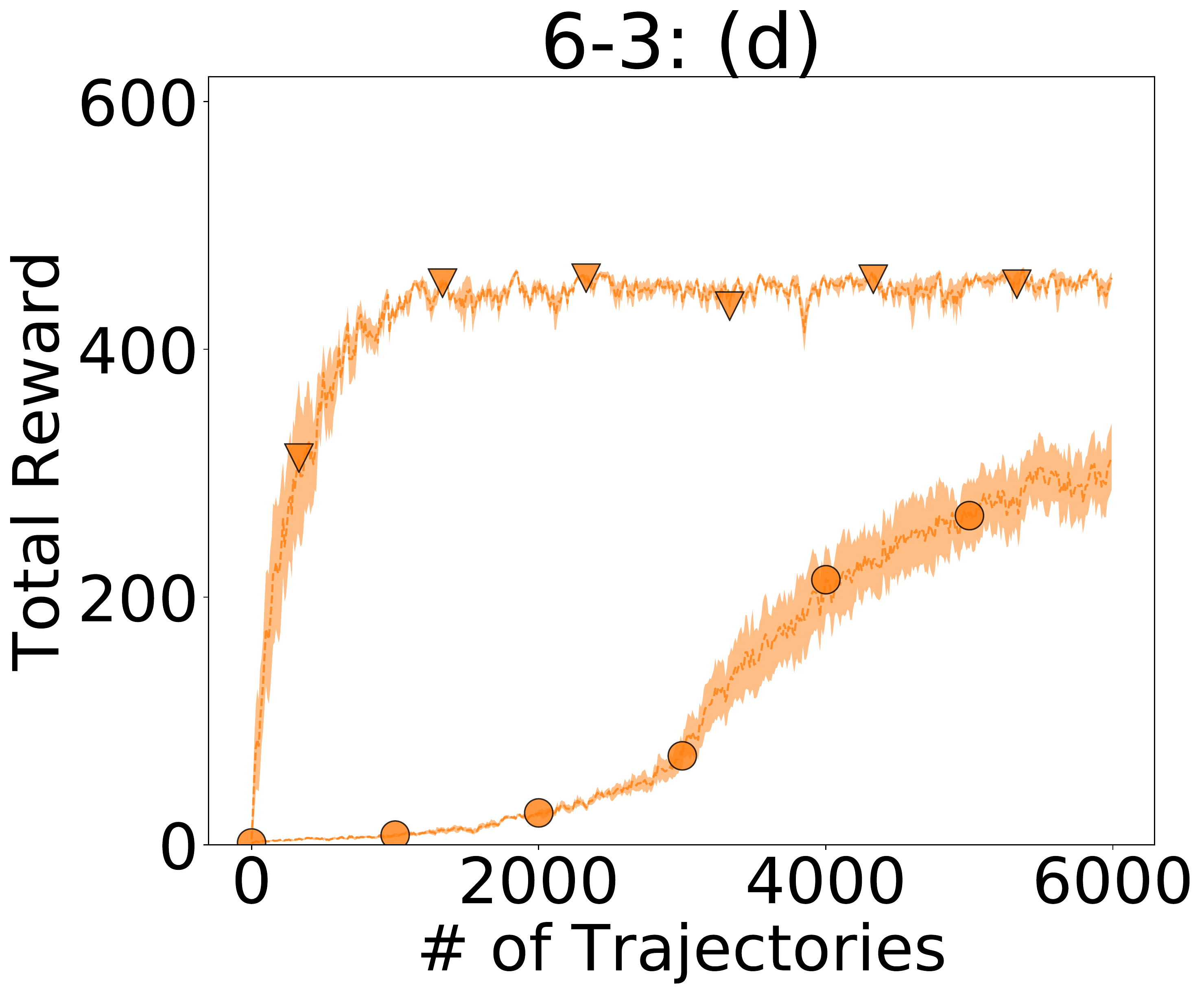}
    \includegraphics[width=75 pt]{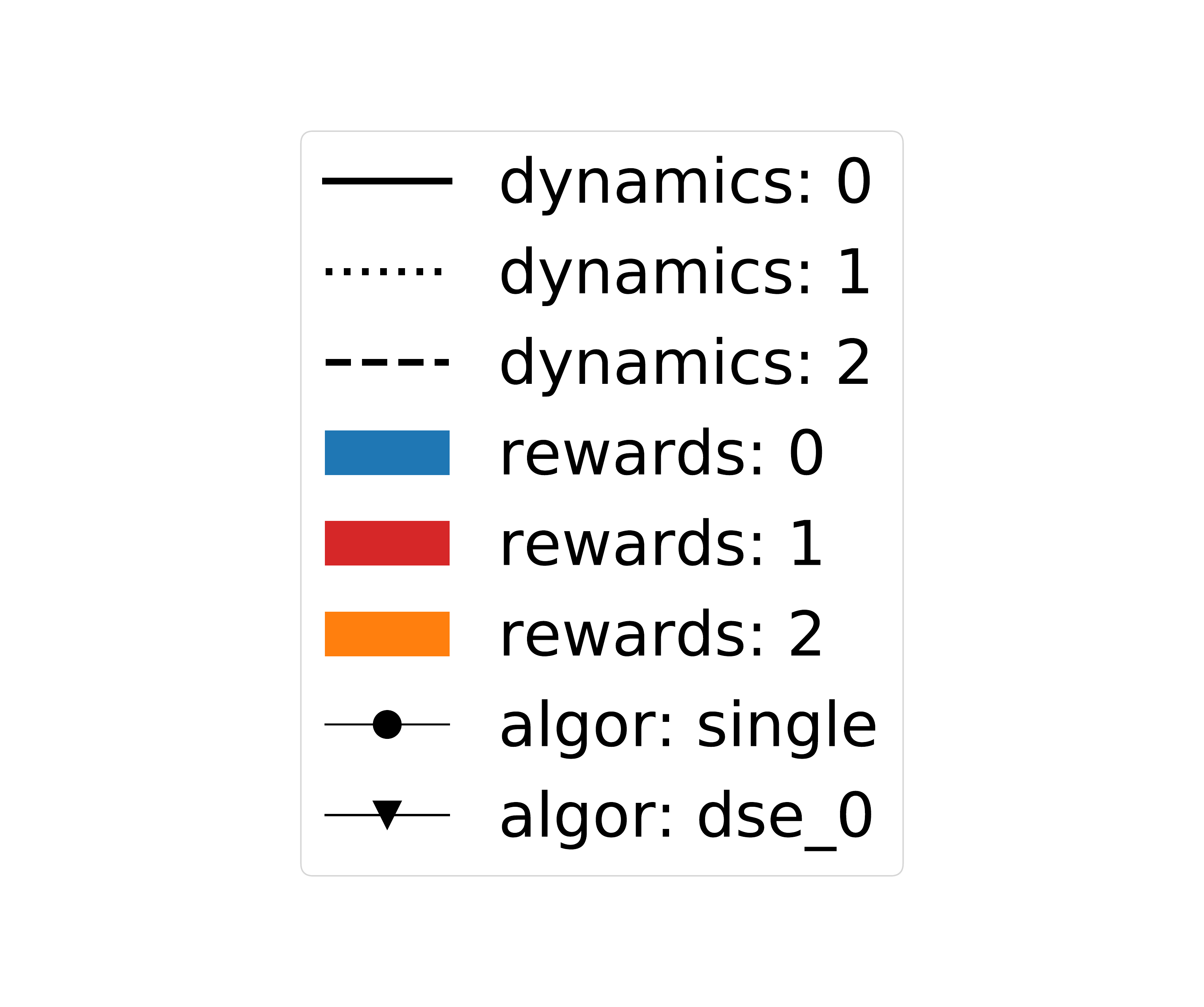}\\
    \hspace{-27pt}
    \includegraphics[width=\widthA pt]{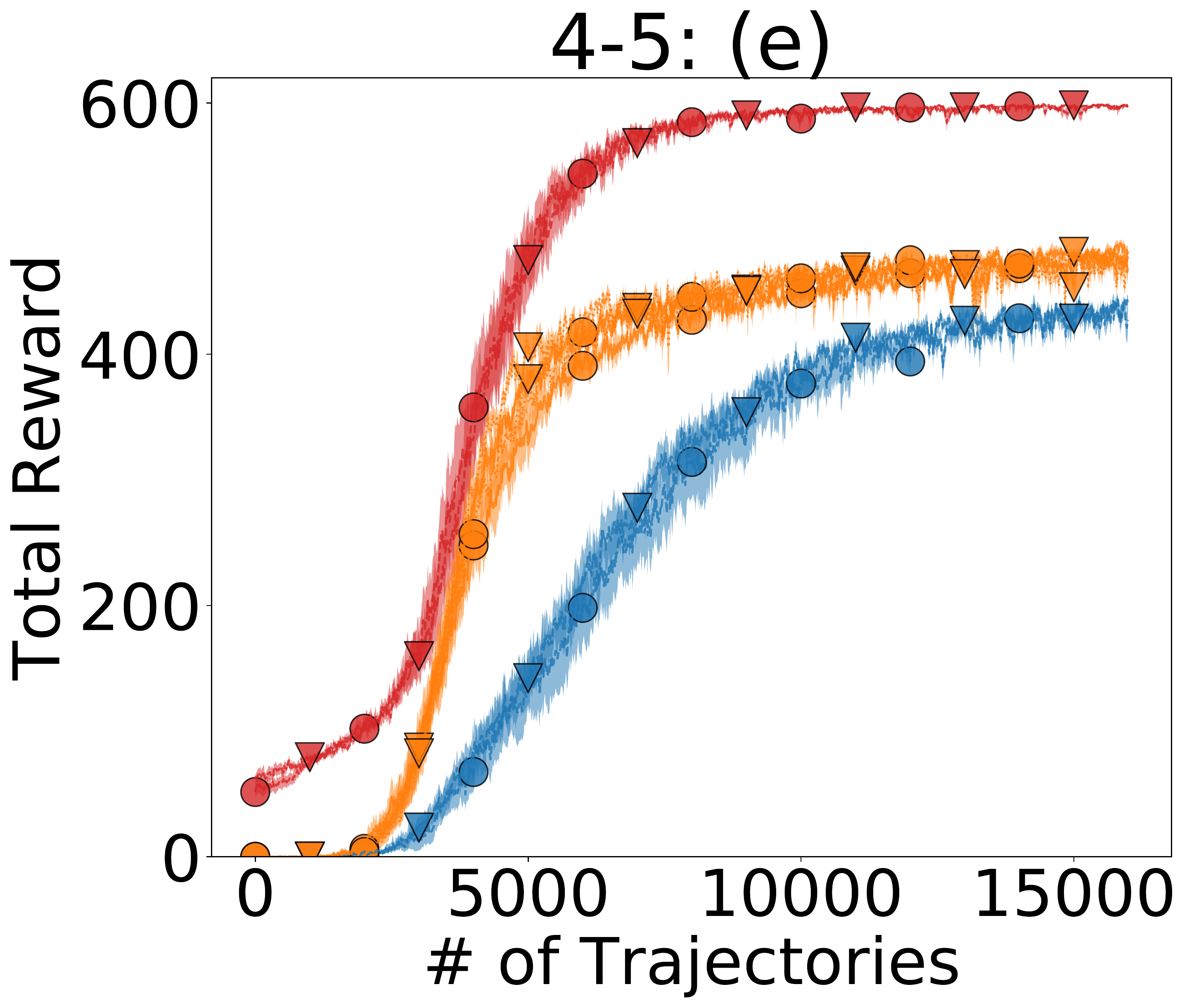}
    \includegraphics[width=\widthA pt]{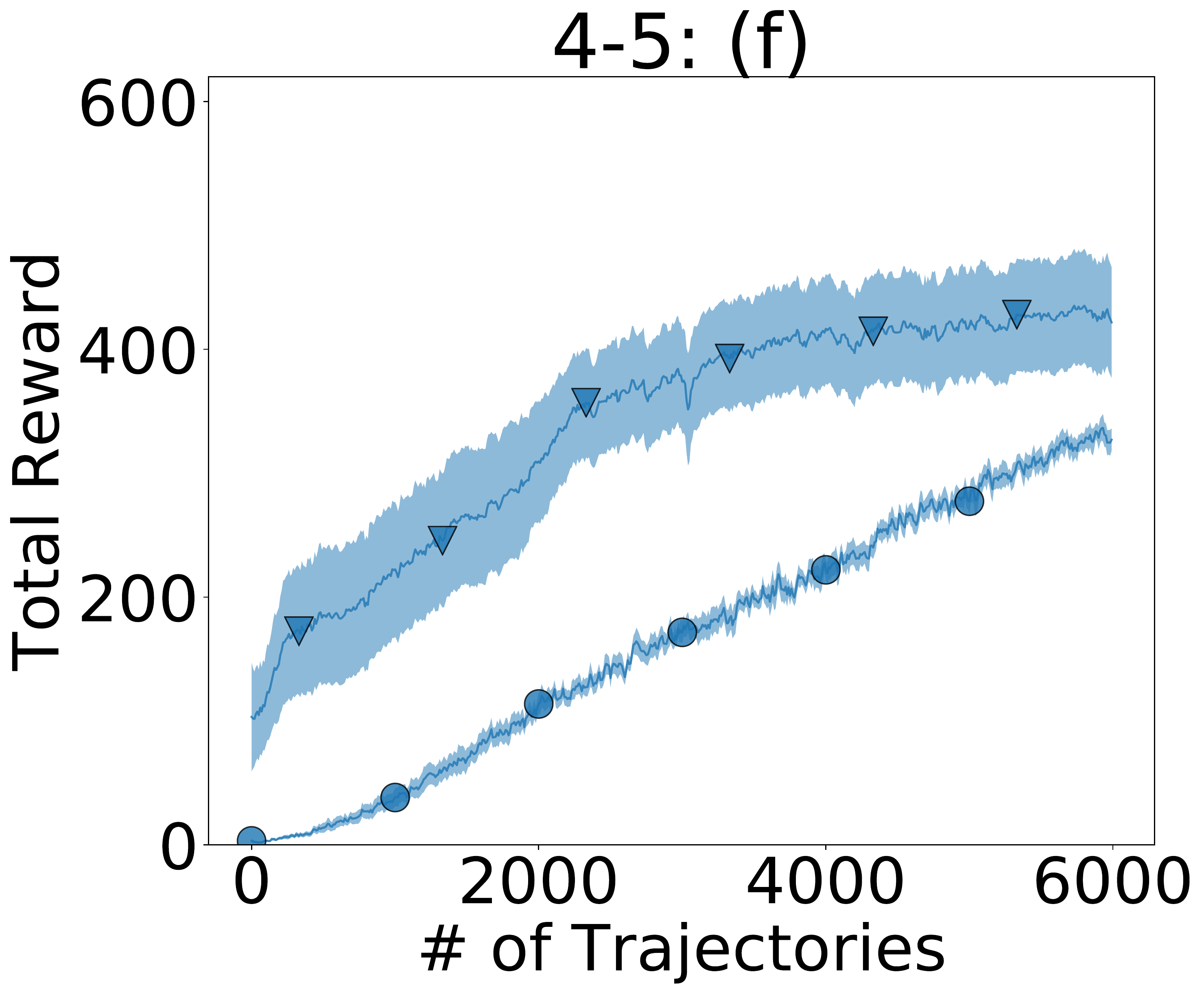}
    \includegraphics[width=\widthA pt]{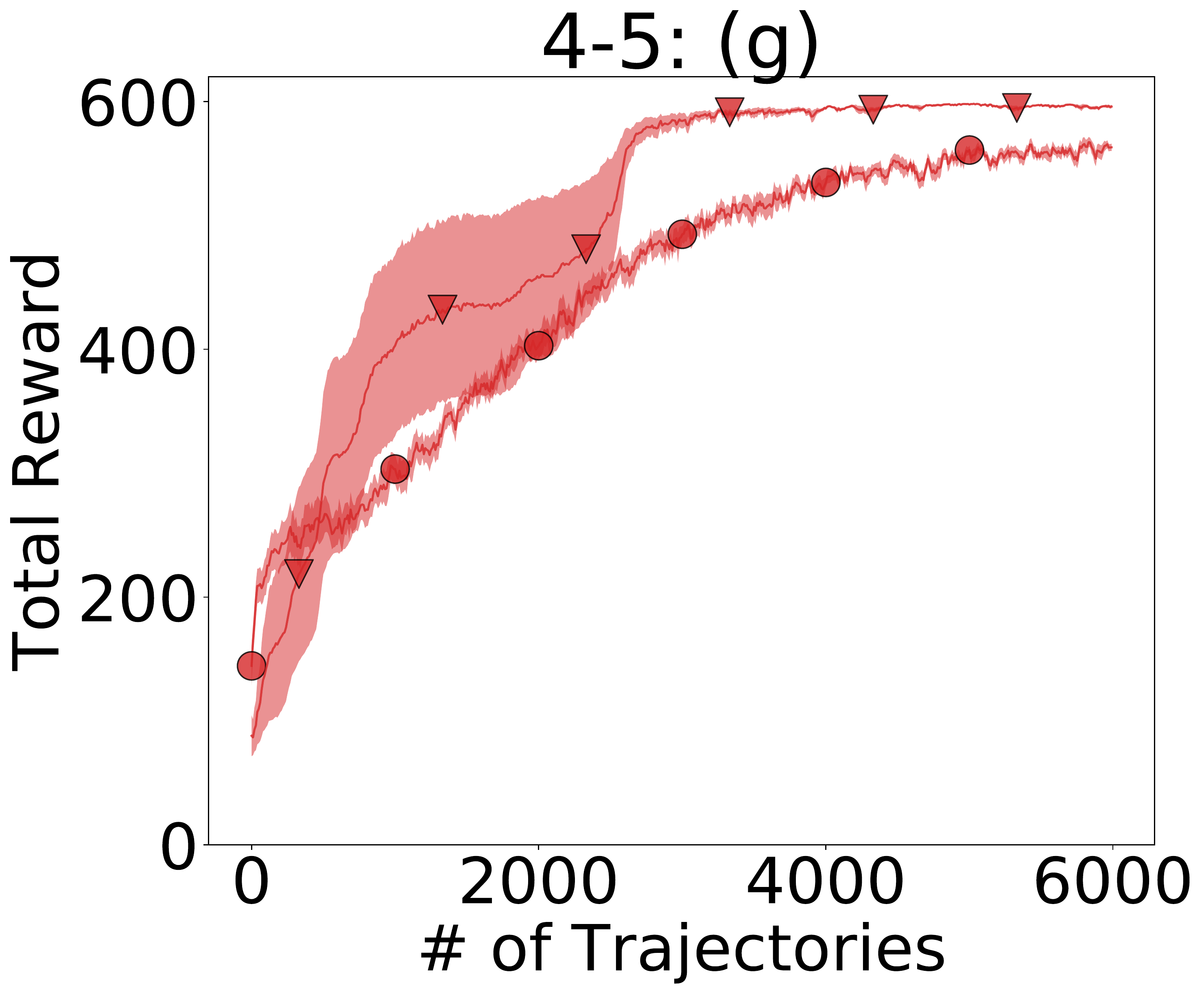}
    \includegraphics[width=\widthA pt]{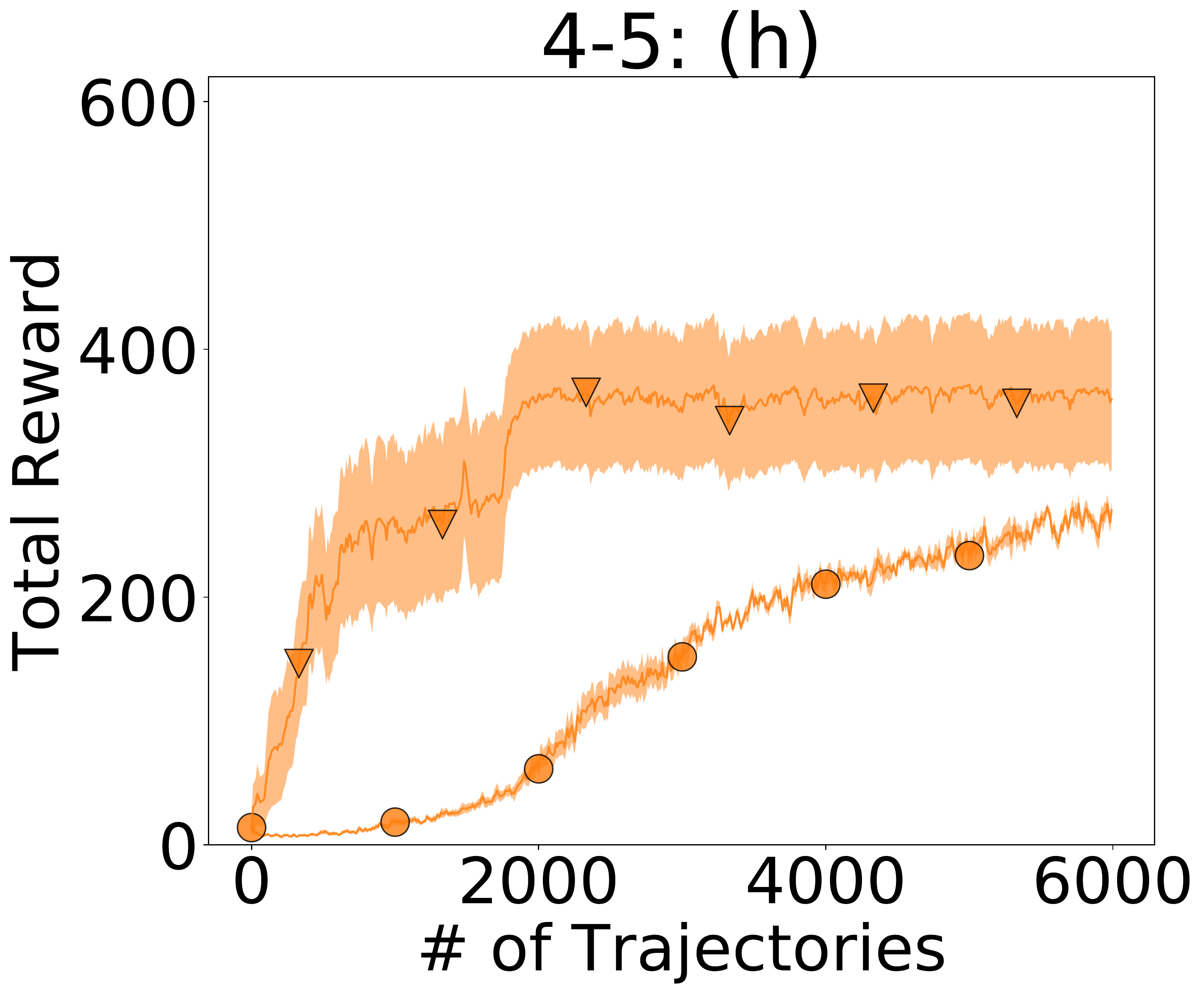}
    \hspace{17pt}
    \includegraphics[width=30 pt]{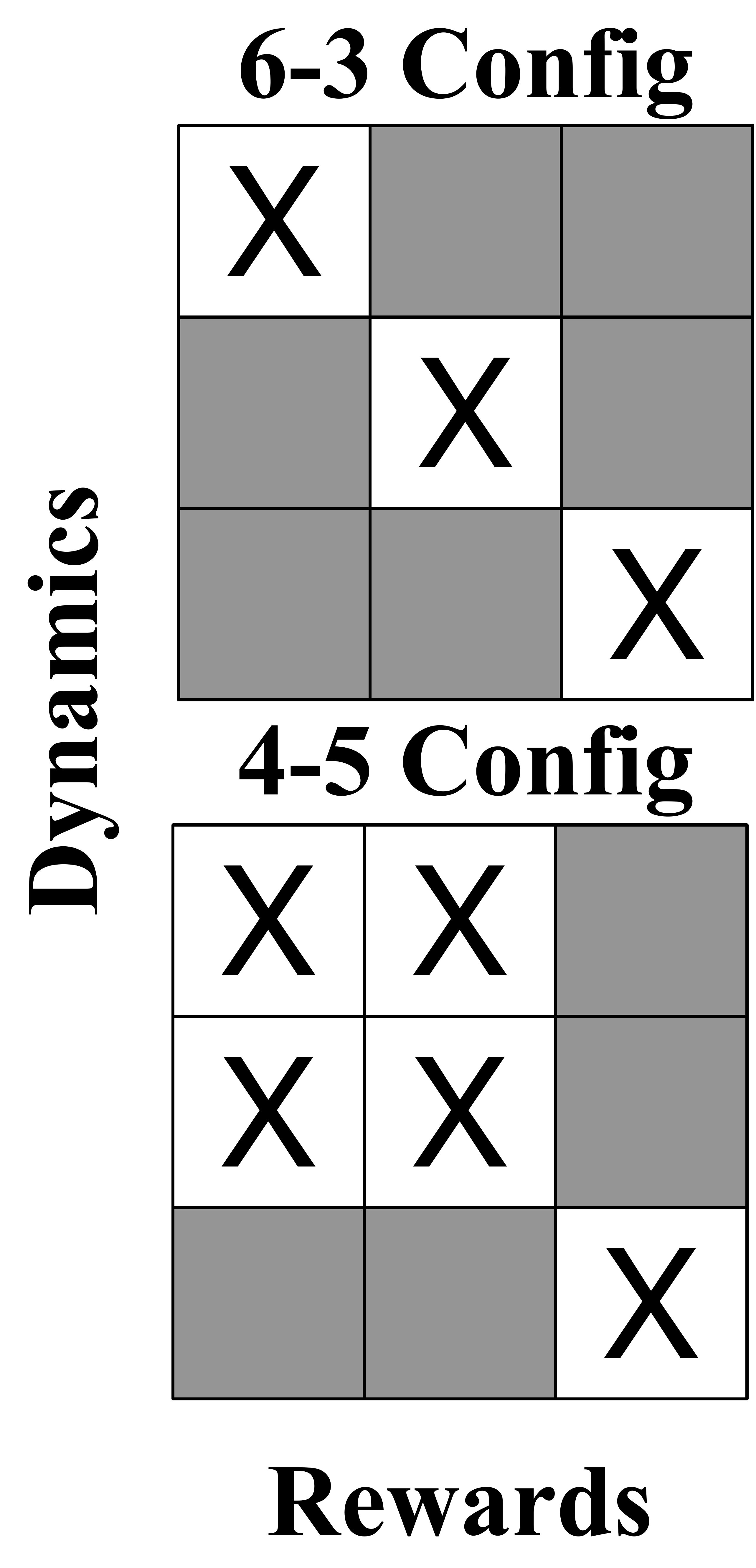}
    
    \caption{Retraining experiments on 6-3 and 4-5 configurations for 2 algorithms. The configurations are shown in the bottom right panel; the setting marked by \textbf{X} were omitted during initial training ((a) and (e)). (a-h) show the average reward curves and their standard errors over 5 random seeds. The columns are organized according to: first column the multitask training and the rest of columns correspond to  retraining of $\mathrm{reward}=0$, $\mathrm{reward}=1$ and $\mathrm{reward}=2$ respectively. (a-d) were for the 6-3 configuration; the rest were for the 4-5 configuration. For brevity, the reward curves for 4-5 in (e-f) were also averaged across similar goals. The hyperparameters were the same as for the complete MTRL case in Section \ref{sec:cartpole_main}. }
    \label{fig:retraining_cartpole}
\end{figure}

In Figure \ref{fig:retraining_cartpole}, we show on the left-most panels the multi-task training for both (6-3) and (4-5) settings and on the remaining panels the performance of the testing phase. We compare our DSE-REINFORCE (dse0) against the single-embedding algorithm from previous section. We can clearly see the benefits of disentangling the dynamics from the reward; the DSE algorithm provides strong initializations for tasks never seen before that are not mere ``interpolation"-tasks as tested in the previous section. Note that independent single-task training would need about 10000 trajectories to train whereas DSE sometimes solves the test-task instantaneously (without accounting for the multi-task trainning).

\paragraph{HRL on Cartpole}
\begin{wrapfigure}{R}{0.50\textwidth}
    \centering
    \vspace{-5mm}
    \includegraphics[width=0.23\textwidth]{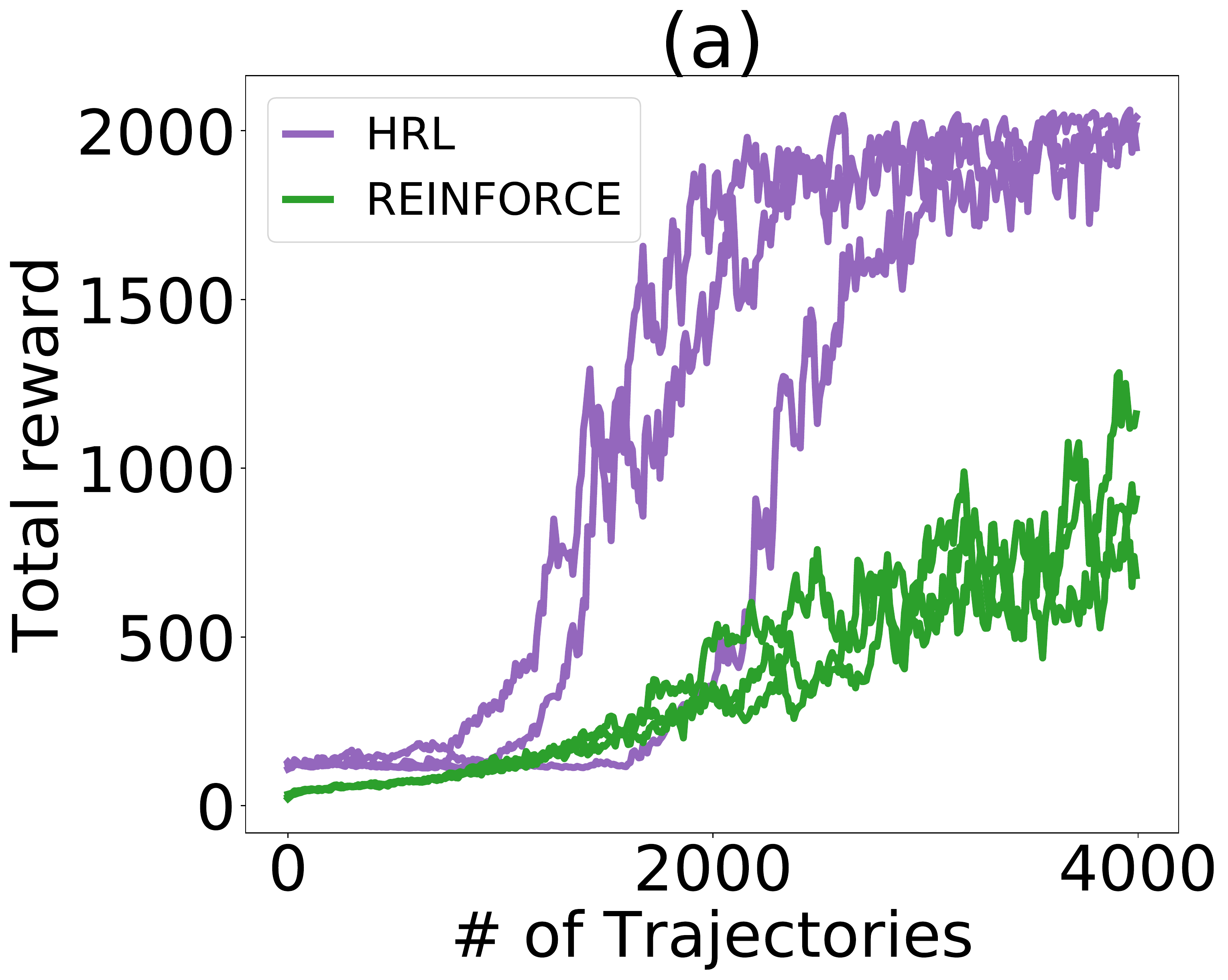}
    \includegraphics[width=0.225\textwidth]{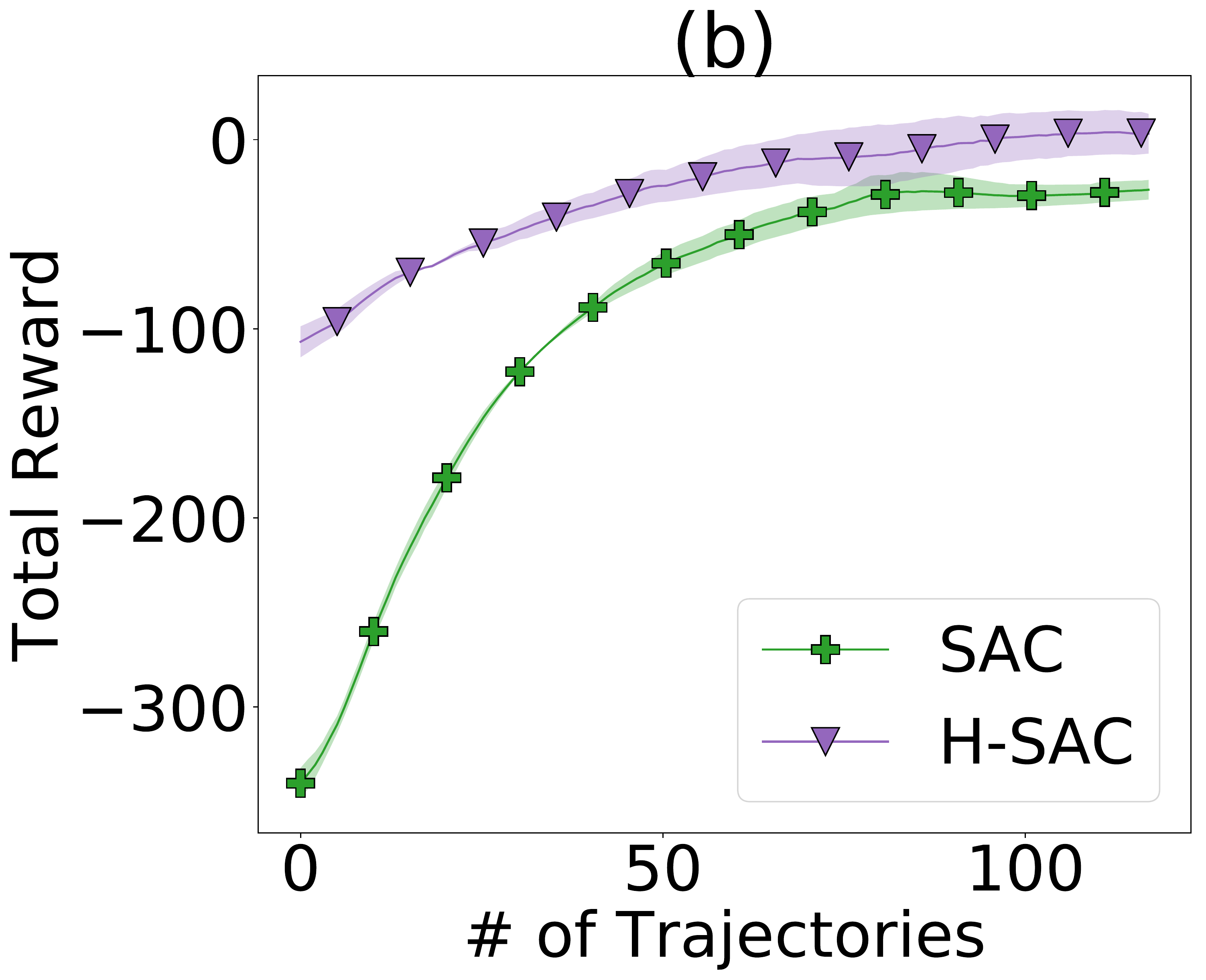}
    \caption{Evolution of the total reward for HRL problems. (a) is the 1-Asteroid AsteroidCartpole problem. (b) is the Hierarchical Reacher problem.}
    \label{fig:hrl_asteroid_reward}
\end{wrapfigure}

We test the validity of the trained policies equiped with DSE in an HRL scenario by training a  high-level policy that acts on the rewards latent space. For this, we developed a novel cartpole problem (AsteroidCartpole), where a balanced cartpole must avoid falling asteroids; this is detailed in the Appendix. For this, we fixed the mass to $m=1$; the latent variable for $z$ was fixed to the mean of $q_{\delta_{i}}(\cdot|i; m=1)$. The high-level policy acted on a discrete action space consisting of 5 selected points of rewards latent space; three were the means of the learned variational distributions for the goal-contexts and the remaining two were interpolations $\mu = \{[-0.1, -1.30], [0.35, 0.65]\}$. Figure~\ref{fig:hrl_asteroid_reward}(a) shows the evolution of the episodic rewards while training the high-level policy (HRL) with standard REINFORCE equipped with a baseline and with Pop-Art. As a comparison, we also trained the same REINFORCE algorithm but acting directly on the low-level actions.  As seen, the hierarchical policy outperforms the baseline and attains maximum reward.

\subsection{DSE-SAC on Reacher}
\newcommand{\widthC}{90}
\begin{figure}[t]
    \centering
    \includegraphics[width=\widthC pt]{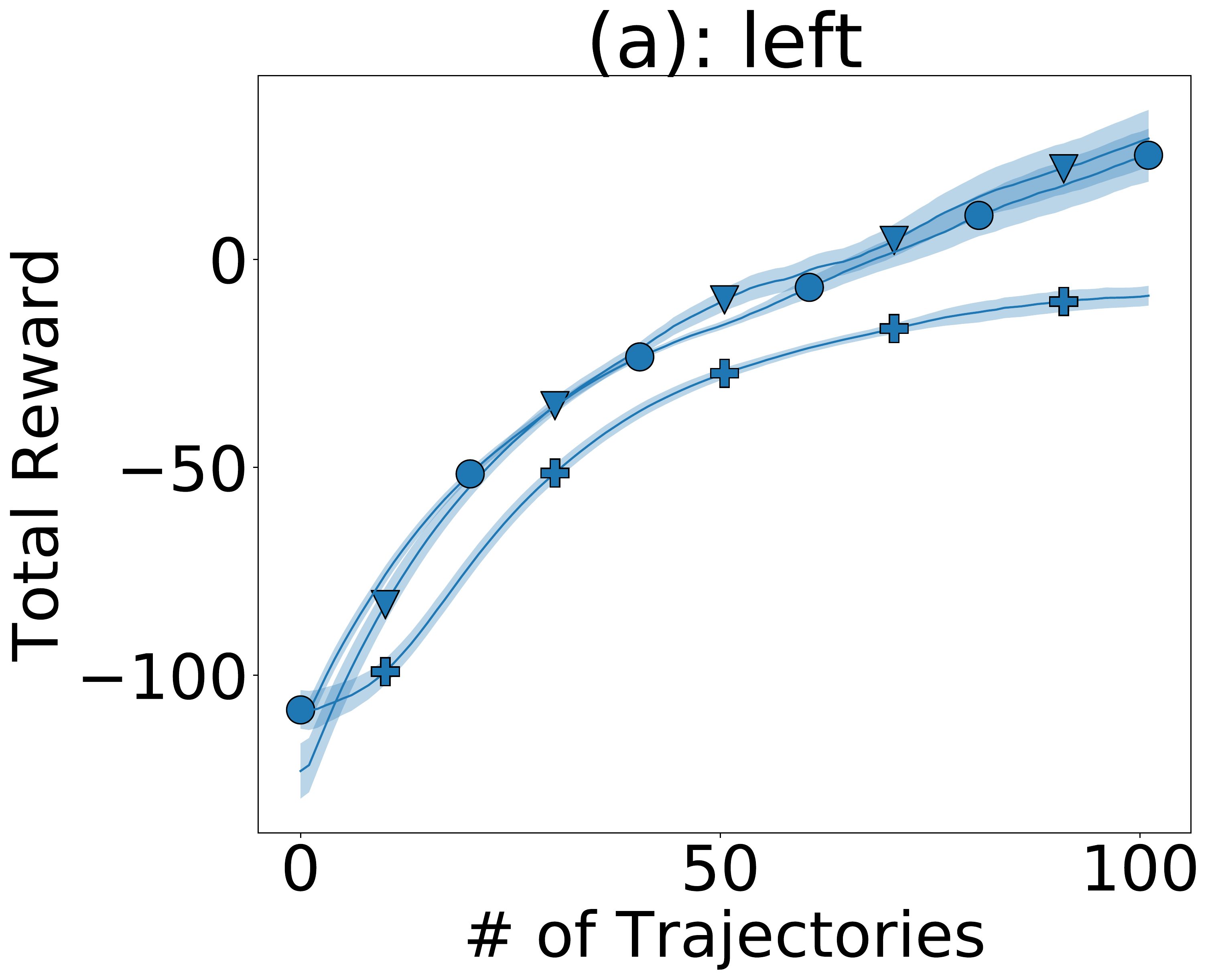}
    \includegraphics[width=\widthC pt]{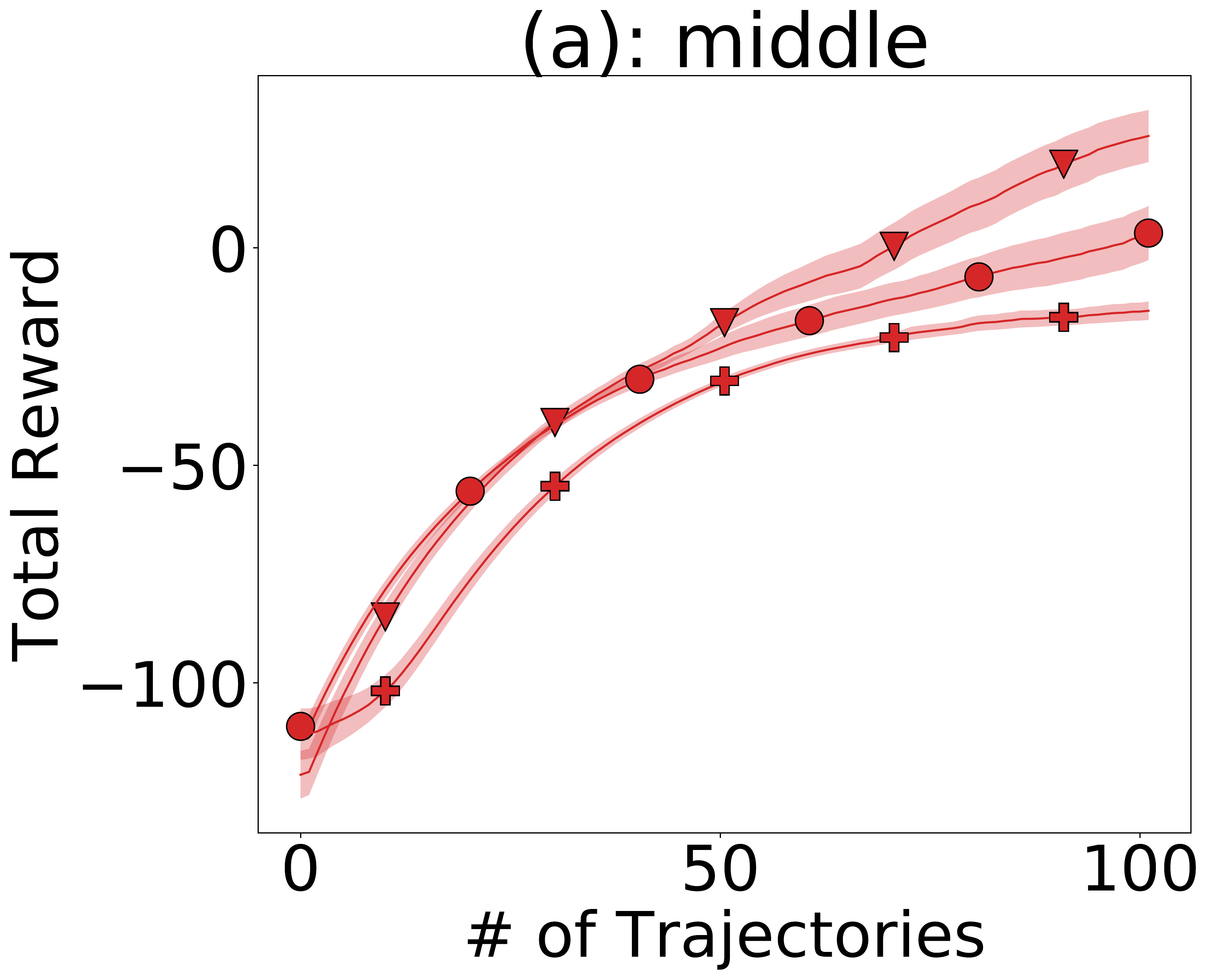}
    \includegraphics[width=\widthC pt]{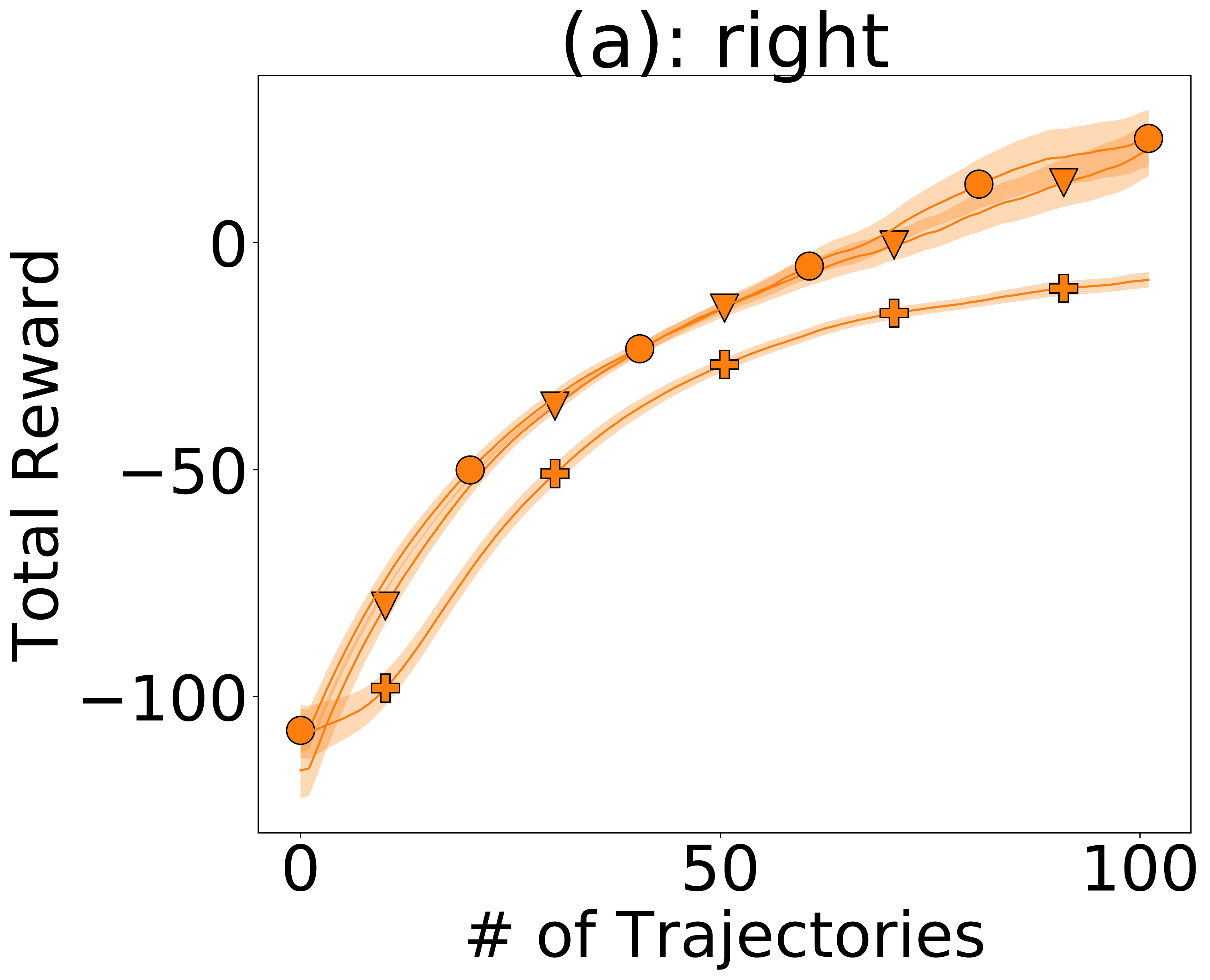}
    \hspace{10pt}
    \includegraphics[width=\widthC pt]{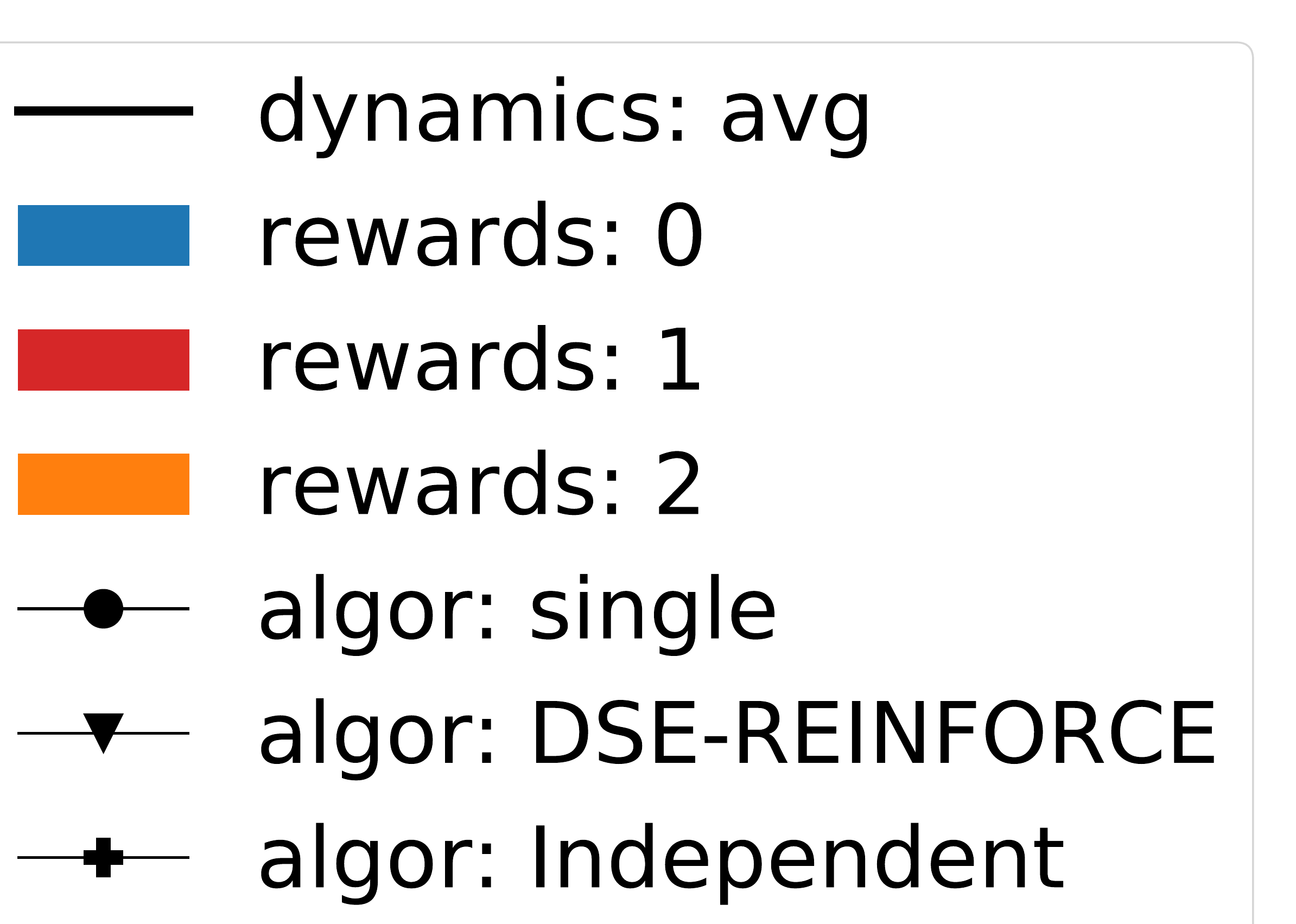}
    \caption{Multi-task Training the Mujoco Reacher-v2 in the full configuration ($3\times3$). (a) shows the episodic rewards obtained by both DSE-SAC, compared against a single embedding algorithm, and training independently. Shaded regions are standard deviations. For each dynamics case, the lengths of the 2 arm components (arm0, arm1) were set as follows: \{(arm0, arm1): $(50\%, 50\%), (33\%, 67\%), (67\%, 33\%)$\}. Note we have averaged over dynamics cases of each algorithm. Appendix C.4 details the original data. }
    \label{fig:reacher_learning_results}
\end{figure}

The original Reacher environment consists of moving the tip of a robotic arm to a random location; its state space included position of the goal. We modified this environment by removing the goal position information and instead, learn an embedding for it. This is considerably a more difficult task. Further, we modified it to vary the dynamics and reward functions by changing the arm lengths and goal position.  We chose $3$ different goal locations and $3$ different arm lengths. 

Figure~\ref{fig:reacher_learning_results} shows the results of our experiments with DSE-SAC on our multi-task Reacher problem. We compared these results with single-task independent learners and single-embedding SAC. As we see, both single-embedding and DSE-SAC have comparable performance and exceed the single-task learner. 
We also carried out experiments comparing DSE-SAC with DSE-REINFORCE (Appendix C.5) where DSE-SAC outperforms DSE-REINFORCE in Reacher by a large margin. Further, we carried out the ``interpolation'' experiments similar to the previous Cartpole experiments (Appendix C.5) showing generalization capabilities in Reacher.

\paragraph{Retraining and generalization of DSE-SAC on multi-task Reacher}
Similar to the Cartpole scenario, we test generalization of DSE-SAC when training with missing tasks on the conditions ($6-3$) and ($4-5$). Testing is performed on unseen test-tasks by initializing the variational distributions by matching index. Learning curves of the multitask policy is shown in Figure \ref{fig:retraining_sac}. In Table 3 of the Appendix, we found that the initial performance in test-tasks is on average better for DSE-SAC compared to the single-embedding SAC algorithm and the performs well in terms of the number of trajectories that it takes for a single-task learner to reach such performance. DSE-SAC obtained $-21.96 \pm 2.20$ episodic reward, while the single embedding obtained $-24.29 \pm 1.50$ episodic reward. It also takes the single-task policy $43.22 \pm 2.16$ number trajectories to reach the performance of DSE-SAC.

\begin{wrapfigure}{R}{0.5\hsize}
\vspace{-5mm}
    \centering
    \includegraphics[width=0.485\hsize]{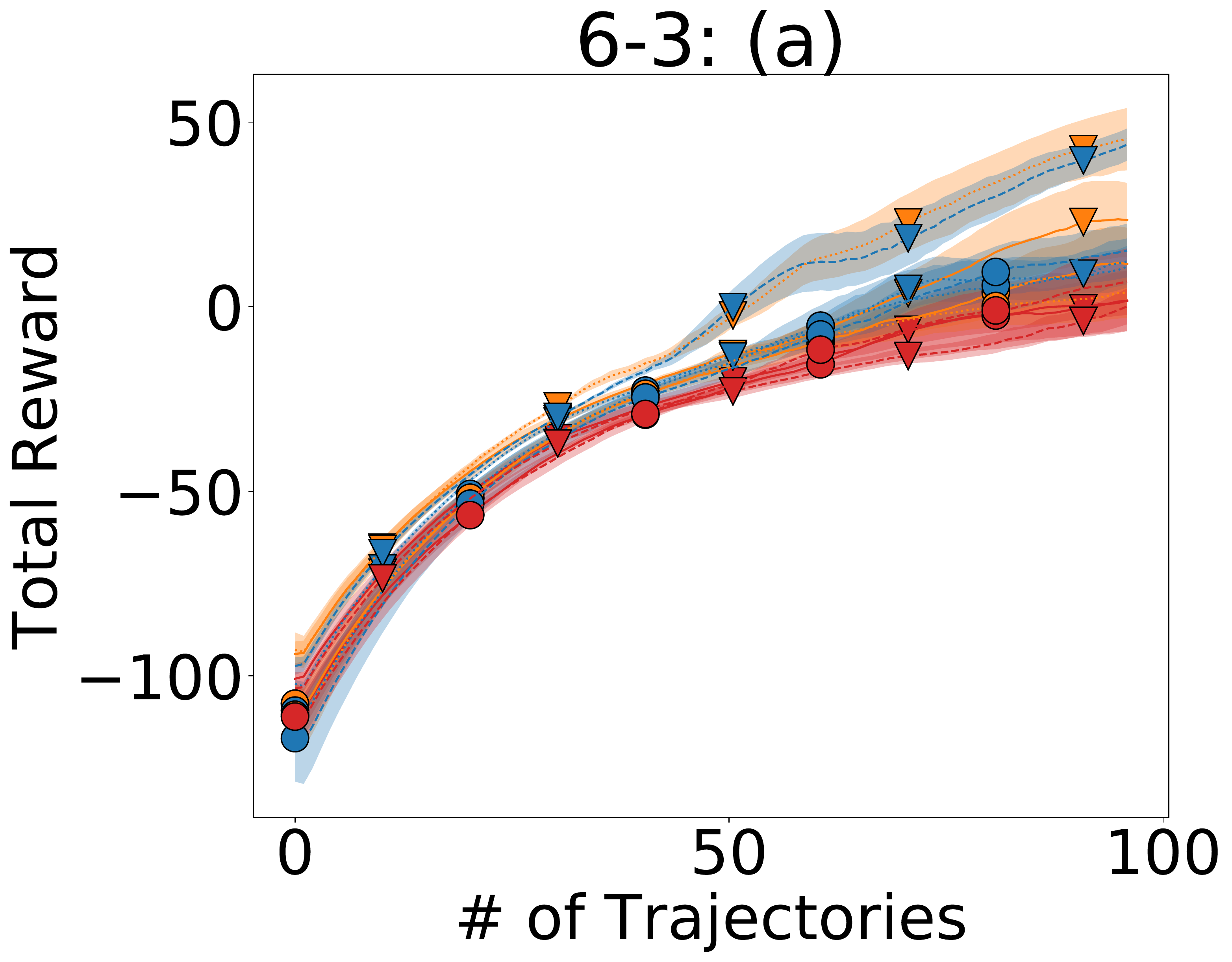}
    \includegraphics[width=0.48\hsize]{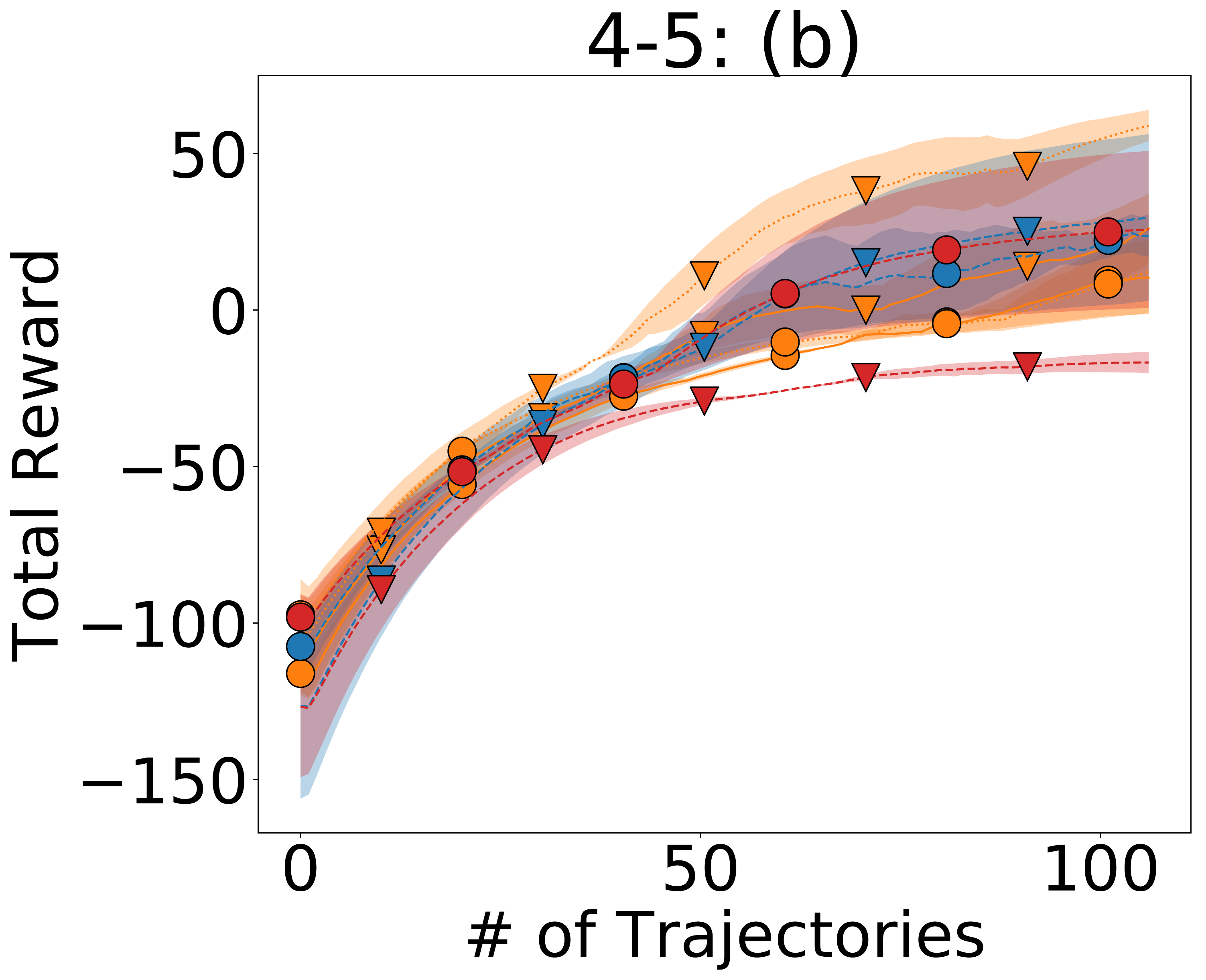}\\
    \includegraphics[width=0.97\hsize]{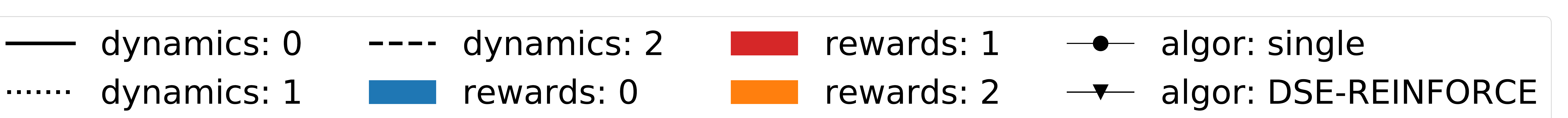}
    \caption{Multi-task training on Reacher-v2 under an incomplete grid of problems ($6-3$ and $4-5$) for DSE-SAC compared against a single-embedding SAC condition with the same hyperparameters.}
    \label{fig:retraining_sac}
    \vspace{-5mm}
\end{wrapfigure}

\paragraph{HRL on Reacher}
We tested the policy trained with DSE-SAC on a HRL scenario. In this case, we continuously moved the goal location in a circle passing by locations that the multi-task policy has never seen. We trained with standard single-task Soft Actor-Critic (H-SAC), a high-level policy that acts on the latent space $g$ and uses the pre-trained multi-task policy $\pi_\theta$ as low-level policy. Such policy is compared against the baseline of SAC trained directly on low-level actions. 
Figure~\ref{fig:hrl_asteroid_reward}(b) shows  the performances of H-SAC and SAC in purple and green respectively. We see that H-SAC can solve the task faster than standard SAC can. 

\section{Conclusions}
We have developed a multi-task framework from a variational inference perspective  that is able to learn latent spaces that generalize to unseen tasks where the dynamics and reward can change independently. In particular, the disentangling allows for better generalization and faster retraining in new tasks. We have shown that the policies learned with our two algorithms DSE-REINFORCE and DSE-SAC, can be used successfully in HRL scenarios.

A promising future direction for DSE-SAC could be to learn Q-functions and Value functions that do not depend on the task-index but directly depend on the latent variables. This would allow for the training of a single Q-function and Value function instead of one per goal and dynamics condition. 

\nocite{langley00}

\bibliography{example_paper}
\bibliographystyle{ieeetr}

\newpage

\appendix
\section{Proofs}
\subsection{Information term weights justification} We can easily weigh each information term with $\frac{1}{\alpha_d}, \frac{1}{\alpha_r}, \frac{1}{\alpha_\pi}$ by assuming 
\begin{equation*}
  \dynqvariat( \dynlat_t |i) := \frac{\bar \dynqvariat (\dynlat_t |i)^{\frac{1}{\alpha_d}}}{\int \bar \dynqvariat(\dynlat_t |i)^{\frac{1}{\alpha_d}} d\dynlat_t} \quad  
  \rewqvariat( \rewlat_t|i) := \frac{\bar \rewqvariat (\rewlat_t|j)^{\frac{1}{\alpha_r}}}{\int \bar \rewqvariat^{\frac{1}{\alpha_r}}(\rewlat_t |j) d\rewlat_t} 
  \quad 
\end{equation*}
and 
\begin{equation*}
\pi(a_t|\dynlat_t, \rewlat_t, s_t):=\frac{ \bar \pi (a_t|\dynlat_t, \rewlat_t, s_t)^{\frac{1}{\alpha_\pi}}}{\int  \bar \pi (a_t|\dynlat_t, \rewlat_t, s_t)^{\frac{1}{\alpha_\pi}} da_t}
\end{equation*}
More concretely, this gives
\begin{align*}
   \max_{\bar \pi, \bar \rewqvariat, \bar \dynqvariat} \mathbb{E}_{q(\tau)} \bigg[ & \sum_{t=0}^{T-1} r_j(s_t, a_t)   - \frac{1}{\alpha_d} \log \frac{\bar \dynqvariat(\dynlat_t |i)}{p(\dynlat_t | i)} \nonumber \\
   & - \frac{1}{\alpha_r} \log \frac{\bar \rewqvariat(\rewlat_t|j)}{p(\rewlat_t | j)} -\frac{1}{ \alpha_\pi} \log \frac{\bar \pi(a_t | \dynlat_t, \rewlat_t, s_t)}{p(a_t | \dynlat_t, \rewlat_t, s_t)} \bigg]
\end{align*}
where we have eliminated the constant terms since they do not affect the solution of the  optimization problem. Therefore, to unclutter the notation, we override the definition of the variational distributions by $\pi := \bar \pi, \dynqvariat := \bar \dynqvariat $ and $\rewqvariat:= \bar \rewqvariat$.

\subsection{Proof Lemma 1}
\begin{lemma}[Index- and state-dependent Value Function Recursion] 
The index-dependent Value function satisfies the following recursive property.  
\begin{align*}\label{eq:value_recursion_ij}
 &V^\pi_{i,j}(s) = \sum_{ \dynlat,\rewlat, a} \dynqvariat(\dynlat|i)  \rewqvariat(\rewlat|j) \pi(a|s,\rewlat,\dynlat) \bigg[ r_j(s,a) \nonumber \\ 
 &- \frac{1}{\alpha_d} \log\frac{\dynqvariat(\dynlat|i)}{p(\dynlat|i)} 
 - \frac{1}{\alpha_r}\log\frac{\rewqvariat(\rewlat|j)}{p(\rewlat|j)}
-\frac{1}{\alpha_\pi}\log\frac{\pi(a|s, \rewlat,\dynlat) }{p(a|s,\rewlat,\dynlat)}  \nonumber \\
& \hspace{2.8cm} + \gamma \sum_{s'} P_i(s'|s,a) V^\pi_{i,j}(s')
 \bigg] 
\end{align*}
\end{lemma}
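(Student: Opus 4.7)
The plan is to prove the recursion by unrolling the first time step from the definition of $V^\pi_{i,j}(s)$ and using the Markov property of the underlying process to identify the tail of the sum with $V^\pi_{i,j}(s')$ at the next state. Concretely, I would first write out
\[
V^\pi_{i,j}(s) := \mathop{\mathbb{E}}_{q(\tau)}\left[\sum_{t=0}^{\infty} \gamma^t\left(r_j(s_t,a_t) - \tfrac{1}{\alpha_d}\log\tfrac{\dynqvariat(\dynlat_t|i)}{p(\dynlat_t|i)} - \tfrac{1}{\alpha_r}\log\tfrac{\rewqvariat(\rewlat_t|j)}{p(\rewlat_t|j)} - \tfrac{1}{\alpha_\pi}\log\tfrac{\pi(a_t|s_t,\dynlat_t,\rewlat_t)}{p(a_t|s_t,\dynlat_t,\rewlat_t)}\right)\ \bigg|\ s_0=s,\, i,\, j\right],
\]
obtained by conditioning the ELBO integrand on the dynamics index $i$, the goal index $j$, and the initial state $s_0=s$ (all other variables being integrated out under $q$).

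Next I would split the infinite sum into the $t=0$ contribution and the $t\ge 1$ remainder. For the $t=0$ part, since the relevant randomness is $\dynlat_0\sim \dynqvariat(\cdot|i)$, $\rewlat_0\sim \rewqvariat(\cdot|j)$ and $a_0\sim \pi(\cdot|s,\dynlat_0,\rewlat_0)$, one can write the expectation explicitly as the triple sum appearing on the right-hand side of the lemma, multiplied by the instantaneous per-step term in square brackets (minus the $\gamma$-discounted tail). For the $t\ge 1$ remainder, I would factor out one $\gamma$ and reindex $t\mapsto t+1$, after which the inner expectation has exactly the same form as the definition of $V^\pi_{i,j}$, but now conditioned on $s_1=s'$ (with $i,j$ unchanged, since the task indices are drawn once and held fixed along the trajectory in the generative model). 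Marginalising $s_1$ under $P_i(s'|s,a_0)$ then produces the term $\gamma\sum_{s'}P_i(s'|s,a_0)V^\pi_{i,j}(s')$, which sits inside the same $\dynlat_0,\rewlat_0,a_0$ bracket as the $t=0$ part.

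The key conceptual step is the identification of the tail. This relies on three facts: (i) the task indices $i,j$ are drawn once at the start of the trajectory in both $p(\tau)$ and $q(\tau)$, so conditioning on them makes the process time-homogeneous in $s$; (ii) the embeddings $\dynlat_t,\rewlat_t$ are drawn i.i.d.\ across $t$ from $\dynqvariat(\cdot|i),\rewqvariat(\cdot|j)$, independently of the state, so the distributional structure of the sub-trajectory starting at $s_1$ is identical to a fresh trajectory started at $s_1$; and (iii) the discount $\gamma^{t+1}=\gamma\cdot\gamma^t$ factorises the reward and KL summands cleanly. Combining the $t=0$ bracket with the $\gamma$-tail yields precisely the claimed recursion.

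The main obstacle is essentially bookkeeping: one must be careful that the three KL-like log-ratio terms are handled uniformly with the reward, that the expectation over future $\dynlat_t,\rewlat_t,a_t,s_{t+1}$ indeed collapses to $V^\pi_{i,j}(s')$ without reintroducing the $\dynlat_0,\rewlat_0$ factors (since those are fresh draws at each $t$ under $q$), and that nothing is double-counted when swapping the order of the summation over $t$ and the expectation. Once these are dispatched, the recursion falls out directly from the standard telescoping argument used to derive Bellman equations for entropy-regularised RL, extended here by two additional KL penalties.
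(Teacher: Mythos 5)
Your proof is correct and follows essentially the same route as the paper's: unroll the $t=0$ term from the definition of $V^\pi_{i,j}$, write the expectation over $\dynlat_0,\rewlat_0,a_0$ explicitly, factor out one $\gamma$ from the tail, and identify the remaining inner expectation (marginalised over $s_1\sim P_i$) with $V^\pi_{i,j}(s_1)$. Your explicit remarks about the per-step i.i.d.\ draws of the embeddings and the fixed indices $i,j$ just make precise what the paper's proof leaves implicit.
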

\begin{proof}
We start by stating again the definition of the value function:
\begin{align*}
 V^\pi_{i,j}(s) := \mathbb{E} \bigg[ &\sum_{t=0}^{\infty} \gamma^t \bigg( r_j(s_t, a_t)   - \frac{1}{\alpha_d} \log \frac{\dynqvariat(\dynlat_t |i)}{p(\dynlat_t | i)}  \nonumber \\
    & - \frac{1}{\alpha_r} \log  \frac{\rewqvariat(\rewlat_t|j)}{p(\rewlat_t | j)} -  \frac{1}{\alpha_\pi} \log \frac{\pi(a_t | \dynlat_t, \rewlat_t, s_t)}{p(a_t | \dynlat_t, \rewlat_t, s_t)}  \bigg) \bigg]  .
\end{align*}
Then we take out the terms with $t=0$ inside the summation and write explicitly the expectation, i.e.,
\begin{align*}
    &  V^\pi_{i,j}(s)  = \sum_{z_0,g_0,a_0} q_\delta (z_0|i) q_\omega (g_0|j) \pi(a_0|s_0, g_0,z_0) \Bigg[ r_j(s_0, a_0) - \frac{1}{\alpha_d}\log \frac{q_\delta(z_0 |i)}{p(z_0)} \\ 
    & -\frac{1}{\alpha_r} \log \frac{q_\omega (g_0 |j)}{p(g_0)} 
    - \frac{1}{\alpha_\pi} \log \frac{\pi(a_0|s_0, z_0, g_0)}{p(a_0|s_0, z_0, g_0)} + \gamma \sum_{s_1}P_i(s_1|s_0 a_0)  \mathbb{E} \bigg[ \sum_{t=0}^{\infty} \gamma^{t} \bigg( r_j(s_{t+1}, a_{t+1}) \\
    &- \frac{1}{\alpha_d} \log \frac{\dynqvariat(\dynlat_{t+1} |i)}{p(\dynlat_{t+1} | i)}  
     - \frac{1}{\alpha_r} \log  \frac{\rewqvariat(\rewlat_{t+1}|j)}{p(\rewlat_{t+1} | j)} -  \frac{1}{\alpha_\pi} \log \frac{\pi(a_{t+1} | \dynlat_{t+1}, \rewlat_{t+1}, s_t)}{p(a_{t+1} | \dynlat_{t+1}, \rewlat_{t+1}, s_{t+1})}  \bigg) \bigg] \Bigg].
\end{align*}
We see now that the inner expectation term is in fact  $V^\pi_{i,j}(s_1)$. Therefore,  changing the sub-indices to $z:=z_0$, $g:= g_0$,  $a:= a_0$, $s:=s_0$ and $s:= s_1$  we proved the lemma. 
\end{proof}

\subsection{Lagrangian for DSE-SAC Optimal Policy}
\begin{definition}[DSE Lagrangian]
Let $p$ be an arbitrary distribution over states. Then the Lagrangian is defined as 
\begin{align*}
 \mathcal L(\pi &, \dynqvariat, \rewqvariat) := \sum_{s,\dynlat, \rewlat} \lambda_0(s,\rewlat,\dynlat) \left( \sum_a \pi(a|s,\dynlat,\rewlat) -1  \right) + \sum_{s} p(s)V^\pi(s) \\
 &+ \sum_j\lambda_1(j)\left( \sum_\rewlat \rewqvariat(\rewlat |j) - 1\right) + \sum_i\lambda_2(i)\left( \sum_\dynlat \dynqvariat(\dynlat |i) - 1\right)
\end{align*}
where $\lambda_0(s,g,z)$, $\lambda_1(j)$, $\lambda_2(i)$ are the Lagrange multipliers ensuring that the policy and variational distributions are properly normalized.
\end{definition}

\subsection{Proof Lemma 2}
\begin{lemma}[Optimal policy with DSE]\label{eq:optimal_policy} Let the variational distributions $\dynqvariat$ and $\rewqvariat$ be fixed. Then, the optimal policy is 
\begin{equation*}
    \pi^\star(a|s,\rewlat,\dynlat) = \frac{p(a|s, \dynlat, \rewlat)\exp\left( \alpha_\pi \bar Q^{\pi^\star}(s,a, \dynlat, \rewlat) \right)}{Z(s,\dynlat,\rewlat)}
\end{equation*}
where $Z(s,\rewlat,\dynlat)$ is the normalizing function and $\bar Q^{\pi^\star}(s,a,\rewlat,\dynlat) := \sum_{i,j} q(i|\dynlat)$ $q(j|\rewlat) Q^{\pi^\star}_{i,j}(s,a)$ with $q(i|\dynlat)=\frac{p(i)\dynqvariat(\dynlat|i) }{\sum_i p(i)\dynqvariat(\dynlat|i)}$ and $q(j|\rewlat)=\frac{p(j)\rewqvariat(\rewlat|j)}{\sum_j p(j)\rewqvariat(\rewlat|j)}$ are the Bayesian posteriors over the task indices.
\end{lemma}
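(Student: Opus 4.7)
The plan is to take the functional derivative of the Lagrangian defined just above the lemma with respect to $\pi(a|s,\dynlat,\rewlat)$, enforce the normalization multiplier, and solve. The crucial rearrangement is to swap the sums over the task indices $(i,j)$ with the sums over the latents $(\dynlat,\rewlat)$ by Bayes' rule, so that the Bayesian posteriors $q(i|\dynlat)=p(i)\dynqvariat(\dynlat|i)/\sum_{i'}p(i')\dynqvariat(\dynlat|i')$ and $q(j|\rewlat)$ emerge and collapse the $i,j$-indexed $Q$-functions into the mixture $\bar Q^\pi$.

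First I would invoke Lemma~1 to expand the averaged value function $V^\pi(s):=\sum_{i,j}p(i)p(j)V^\pi_{i,j}(s)$ appearing inside $\sum_s p(s)V^\pi(s)$. Using the recursion and bundling $r_j(s,a)+\gamma\sum_{s'}P_i(s'|s,a)V^\pi_{i,j}(s')=Q^\pi_{i,j}(s,a)$, one obtains
\begin{align*}
V^\pi(s)=\sum_{i,j,\dynlat,\rewlat,a} p(i)p(j)\dynqvariat(\dynlat|i)\rewqvariat(\rewlat|j)\pi(a|s,\dynlat,\rewlat)\Big[Q^\pi_{i,j}(s,a)-\tfrac{1}{\alpha_\pi}\log\tfrac{\pi(a|s,\dynlat,\rewlat)}{p(a|s,\dynlat,\rewlat)}+E_{ij}(\dynlat,\rewlat)\Big],
\end{align*}
where $E_{ij}$ collects the $\dynqvariat,\rewqvariat$ KL-terms, which are independent of $\pi(a|s,\dynlat,\rewlat)$. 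Applying $p(i)\dynqvariat(\dynlat|i)=q(\dynlat)q(i|\dynlat)$ with $q(\dynlat):=\sum_{i'}p(i')\dynqvariat(\dynlat|i')$ and analogously for $(j,\rewlat)$, the $i,j$ sums collapse into $\bar Q^\pi(s,a,\dynlat,\rewlat)=\sum_{i,j}q(i|\dynlat)q(j|\rewlat)Q^\pi_{i,j}(s,a)$, leaving
\begin{align*}
V^\pi(s)=\sum_{\dynlat,\rewlat,a}q(\dynlat)q(\rewlat)\pi(a|s,\dynlat,\rewlat)\Big[\bar Q^\pi(s,a,\dynlat,\rewlat)-\tfrac{1}{\alpha_\pi}\log\tfrac{\pi(a|s,\dynlat,\rewlat)}{p(a|s,\dynlat,\rewlat)}\Big]+C(s).
\end{align*}

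Next I would set $\partial\mathcal L/\partial\pi(a|s,\dynlat,\rewlat)=0$. Treating $\bar Q^{\pi^\star}$ as fixed at the stationary point (the Bellman-optimality step, justified below), the first-order condition reduces to
\begin{equation*}
p(s)q(\dynlat)q(\rewlat)\Big[\bar Q^{\pi^\star}(s,a,\dynlat,\rewlat)-\tfrac{1}{\alpha_\pi}\log\tfrac{\pi^\star(a|s,\dynlat,\rewlat)}{p(a|s,\dynlat,\rewlat)}-\tfrac{1}{\alpha_\pi}\Big]+\lambda_0(s,\dynlat,\rewlat)=0.
\end{equation*}
Solving for $\pi^\star$ and absorbing all $a$-independent factors into $Z(s,\dynlat,\rewlat)$ produces the stated Boltzmann form.

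The main obstacle is justifying that $\bar Q^{\pi^\star}$ may be treated as a constant when differentiating with respect to $\pi$, even though $Q^{\pi^\star}_{i,j}$ itself depends on $\pi^\star$ through the Bellman recursion. I would address this exactly as in the SAC derivation of Haarnoja et~al.: appeal to the Bellman optimality principle, which states that the optimal policy at each state pointwise maximizes the one-step soft backup against the optimal $Q$-values, so contributions from the recursive dependence vanish at the fixed point (equivalently, an envelope-theorem argument). A self-consistency check---plugging the proposed $\pi^\star$ back into the recursion of Lemma~1 and verifying that it reproduces $V^{\pi^\star}$ up to the integrated normalizers---closes the argument.
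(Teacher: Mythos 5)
Your proposal matches the paper's own proof in all essentials: both take the functional derivative of the same Lagrangian with respect to $\pi(a|s,\dynlat,\rewlat)$, use the Bayes identities $p(i)\dynqvariat(\dynlat|i)=q(\dynlat)q(i|\dynlat)$ (and analogously for $j,\rewlat$) together with $Q^\pi_{i,j}(s,a)=r_j(s,a)+\gamma\mathbb{E}_{s'}[V^\pi_{i,j}(s')]$ to collapse the task sums into $\bar Q^\pi$, and then eliminate the normalization multiplier to obtain the Boltzmann form. Your explicit envelope-theorem/fixed-point justification for treating $\bar Q^{\pi^\star}$ as constant under differentiation is a point the paper leaves implicit, but it does not change the route of the argument.
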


\begin{proof}
We take the functional derivative of the Lagrangian with respect to $\pi(a^\star| s^\star, z^\star, g^\star)$ where the star denotes a particular element resulting in  
\begin{align*}
&  \frac{\partial \mathcal L(\pi , \dynqvariat, \rewqvariat)}{\partial \pi(a^\star| s^\star, z^\star, g^\star)}  = 
  \frac{\partial}{\partial \pi(a^\star| s^\star, z^\star, g^\star)} \sum_{s} p(s)V^\pi(s)\\
 & \hspace{4cm}+ \frac{\partial}{\partial \pi(a^\star| s^\star, z^\star, g^\star)} \sum_{s,\dynlat, \rewlat} \lambda_0(s,\dynlat,\rewlat) \left( \sum_a \pi(a|s,\dynlat,\rewlat) -1  \right) \\
& =  \sum_{i,j} p(i)p(j) q_\delta (z^\star |i) q_\omega(g^\star |j) p(s^\star)\bigg[ r_j(s^\star, a^\star) + \mathbb E_{s'|s^\star, a^\star, i} \left[ V_{i,j}^\pi(s')\right] \\
& -\log \frac{\pi(a^\star| s^\star, \dynlat^\star, \rewlat^\star)}{p(a^\star| s^\star, \dynlat^\star, \rewlat^\star)} + \pi(a^\star| s^\star, \dynlat^\star, \rewlat^\star) (-1) \frac{p(a^\star| s^\star, \dynlat^\star, \rewlat^\star)}{\pi(a^\star| s^\star, \dynlat^\star, \rewlat^\star)} \frac{1}{p(a^\star| s^\star, \dynlat^\star, \rewlat^\star)}  \bigg]\\
& + \lambda_0(s^\star, \dynlat^\star, \rewlat^\star).
\end{align*}
Next, equating the previous equation to zero  and using the following equalities $p(i)q_\delta(z^\star|i) = q(i|z^\star)q(z^\star) $, $q_\omega(g^\star|j) = q(j|g^\star)q(g^\star) $ and $Q_{i,j}(s^\star, a^\star) = r_j(s^\star, a^\star) + \mathbb E_{s'|s^\star, a^\star, i} \left[ V_{i,j}^\pi(s')\right] $ we obtain
\begin{equation}
\sum_{i,j} q(i|z^\star)  q(j|g^\star) Q_{i,j}(s^\star, a^\star) -\log \frac{\pi(a^\star| s^\star, \dynlat^\star, \rewlat^\star)}{p(a^\star| s^\star, \dynlat^\star, \rewlat^\star)} + \frac{-1 \lambda(s^\star, \dynlat^\star,\rewlat^\star) }{p(s^\star) q(\dynlat^\star) q(\rewlat^\star)} = 0
\end{equation}
Re-arranging  the terms we have
\begin{equation}
    \pi(a^\star| s^\star, z^\star, g^\star) =  p(a^\star| s^\star, z^\star, g^\star) e^{\sum_{i,j} q(i|z^\star)  q(j|g^\star) Q_{i,j}(s^\star, a^\star)  + \frac{-1 \lambda(s^\star, z^\star, g^\star) }{p(s^\star) q(z^\star) q(g^\star)} }.
\end{equation}
Finally, using the fact that $\sum_{a}\pi(a| s, z, g) = 1 $ we can obtain the value of the Lagrange multiplier $\lambda_0$. Then, we obtain the desired policy
\begin{equation}
    \pi^\star(a|s,\rewlat,\dynlat) = \frac{p(a|s, \dynlat, \rewlat)\exp\left( \alpha_\pi \bar Q^{\pi^\star}(s,a, \dynlat, \rewlat) \right)}{Z(s,\dynlat,\rewlat)} 
\end{equation}
\end{proof}

\subsection{Proof Lemma 3}
\begin{lemma}[Optimal Embeddings] \label{eq:optimal_variationals}
Assuming a fixed policy $\pi$ the optimal variational distributions  are given by
\begin{equation*}
   \dynqvariat^\star(\dynlat|i) = \frac{p(\dynlat|i)e^{ \alpha_d D^\star_i(\dynlat)  }}{Z(i)} \quad  \rewqvariat^\star(\rewlat|j) = \frac{p(\rewlat|i)e^{ \alpha_r G^\star_j(\rewlat)  }}{Z(j)}
\end{equation*}
where $D$ and $G$ are conceptually similar to Value functions but depend on the latent variables and task indices. More formally, 
\begin{align*}
 &D^\star_i(z)  :=  \mathop{\mathbb E}_{\substack{j\\ g\sim q_{\omega_j}^\star}} \mathop{\mathbb{E}}_{\substack{a\sim \pi \\ s \sim p}} \bigg( Q^{\pi}_{i,j}(s,a) - \frac{1}{\alpha_\pi} \log \frac{\pi(a|s,\dynlat,\rewlat)}{p(a|s,\dynlat,\rewlat)}\bigg) \\ 
 &G^\star_j(g)   :=  \mathop{\mathbb E}_{\substack{i \\ z\sim q_{\delta_i}^\star}} \mathop{\mathbb{E}}_{\substack{a\sim \pi \\ s \sim p}} \bigg( Q^{\pi}_{i,j}(s,a) - \frac{1}{\alpha_\pi} \log \frac{\pi(a|s, \dynlat, \rewlat)}{p(a|s, \dynlat, \rewlat)}\bigg). 
\end{align*}
\end{lemma}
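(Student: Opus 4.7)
The plan is to mirror the proof of Lemma~2: form the DSE Lagrangian, take a functional derivative with respect to $\dynqvariat(\dynlat^\star|i^\star)$ at a generic element, set it to zero, and solve. The only substantive novelty relative to the policy case is that $\dynqvariat$ enters the objective both as a density multiplying the value recursion and through a $\log \dynqvariat$ term inside that recursion.

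I would begin from the Lagrangian in the earlier Definition, keeping the $\lambda_2$ normalisation constraint for $\dynqvariat$. Writing the objective as $\sum_{i,j} p(i)p(j)\sum_s p(s) V^\pi_{i,j}(s)$ and invoking the Lemma~1 recursion in $Q$-function form,
\begin{equation*}
V^\pi_{i,j}(s) = \sum_{z,g,a} \dynqvariat(z|i)\rewqvariat(g|j)\pi(a|s,z,g) \left[ Q^\pi_{i,j}(s,a) - \tfrac{1}{\alpha_d}\log\tfrac{\dynqvariat(z|i)}{p(z|i)} - \tfrac{1}{\alpha_r}\log\tfrac{\rewqvariat(g|j)}{p(g|j)} - \tfrac{1}{\alpha_\pi}\log\tfrac{\pi}{p} \right],
\end{equation*}
only the $i = i^\star$ outer summand survives variation by $\dynqvariat(\dynlat^\star|i^\star)$. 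Following the one-step argument used in Lemma~2, I would differentiate only the outer $\dynqvariat$ factor and the explicit $\log\dynqvariat$ term, treating $Q^\pi_{i^\star,j}$ as exogenous — its indirect dependence on $\dynqvariat$ through downstream values contributes zero at the stationary point, exactly as the recursive $V$-dependence was handled there.

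Collecting the surviving terms, the $Q^\pi_{i^\star,j}$ and $-\tfrac{1}{\alpha_\pi}\log(\pi/p)$ pieces retain $\dynlat^\star$-dependence through $\pi(a|s,\dynlat^\star,g)$ and combine into precisely the expectation defining $D^\star_{i^\star}(\dynlat^\star)$. The $\rewqvariat$-entropy contribution, after summing $a$ against $\pi$, collapses to $\sum_g \rewqvariat(g|j)\log\tfrac{\rewqvariat(g|j)}{p(g|j)}$, which is independent of $\dynlat^\star$ and is therefore absorbed into the eventual normaliser. Differentiating the remaining $\dynqvariat$-entropy term yields $-\tfrac{1}{\alpha_d}\bigl(\log\tfrac{\dynqvariat(\dynlat^\star|i^\star)}{p(\dynlat^\star|i^\star)} + 1\bigr)$. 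Setting the gradient to zero, isolating $\log\dynqvariat(\dynlat^\star|i^\star)$, and exponentiating gives $\dynqvariat^\star(\dynlat|i) \propto p(\dynlat|i)\exp(\alpha_d D^\star_i(\dynlat))$, and imposing $\sum_\dynlat \dynqvariat^\star(\dynlat|i) = 1$ identifies $Z(i)$.

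The hard part will be justifying the one-step differentiation rigorously: $Q^\pi_{i^\star,j}$ hides an infinite-horizon dependence on $\dynqvariat(\cdot|i^\star)$ through future-state value functions, and I must argue — either by invoking the envelope theorem at the stationary point, or by telescoping the geometric series in $\gamma$ in the style of the policy-gradient theorem — that only the direct one-step dependence enters the first-order condition. The companion claim for $\rewqvariat^\star(\rewlat|j)$ follows by the symmetric argument with $(i,\dynlat,\dynqvariat,D,\alpha_d)$ replaced throughout by $(j,\rewlat,\rewqvariat,G,\alpha_r)$, and introduces no new ideas.
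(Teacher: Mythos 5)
Your proposal matches the paper's own proof: the paper takes the functional derivative of the DSE Lagrangian with respect to $q_\delta(z^\star|i^\star)$, arrives at exactly the stationarity condition you describe (the $q_\omega$-entropy contribution being $z^\star$-independent and absorbed, the Lagrange multiplier fixed by normalization), solves for $q_\delta^\star$, and treats $q_\omega^\star$ by the symmetric argument. The ``hard part'' you flag --- treating $Q^{\pi}_{i,j}$ as exogenous despite its implicit dependence on $q_\delta$ through future values --- is silently assumed in the paper's proof as well, so your write-up is, if anything, more explicit about that step than the original.
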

\begin{proof}
Following similar a procedure as before  we obtain
\begin{align*}
  \frac{\partial \mathcal L(\pi , \dynqvariat, \rewqvariat)}{\partial q_\delta(z^\star|i^\star)}  = & \sum_j p(i^\star)p(j) \sum_g q_\omega(g |j) \bigg[ -- \frac{1}{\alpha_d} \log \frac{\dynqvariat(\dynlat^\star |i^\star )}{p(\dynlat^\star | i^\star )}   - \frac{1}{\alpha_r} \log  \frac{\rewqvariat(\rewlat |j )}{p(\rewlat | j )} \\
& + \mathop{\mathbb{E}}_{\substack{a\sim \pi(\cdot | s,g,z^\star) \\ s \sim p}} \bigg( Q^{\pi}_{i^\star,j}(s,a) - \frac{1}{\alpha_\pi} \log \frac{\pi(a|s,g,z^\star)}{p(a|s,g,z^\star)}\bigg)   \bigg] - 1 - \lambda_2(i^\star) = 0
\end{align*}
Again, using the fact that $\sum_\dynlat \dynqvariat(\dynlat |i) = 1$ we can obtain the value of the Lagrange multiplier. Additionally, re-arranging  and simplifying the terms that do not depend on $z$ and $i$ we obtain the optimal variational distribution. The exact same approach can be used to derive the other optimal variational distribution $q_\omega$.
\end{proof}

\section{Details of Algorithms}
\subsection{Gradients}
\paragraph{Gradients of the variational parameters:} 
Given the latter, the gradient of the variational parameters specific for a dynamics context $i$ is given by
\begin{align*}
  \nabla_{\delta_i} \mathcal L (\theta, &\delta, \omega)  =  \frac{1}{MJ}\sum_{m,j=1}^{M,J}\bigg[ \sum_{t=0}^{T-H} \tilde R_t( \tau^{i,j}_m)\nabla_{\delta_i} \log \pi_\theta \left( a_{t,m}^{i,j} | s_{t,m}^{i,j}, z_{\delta_i}( \epsilon_{t,m}^{i,j}), g_{\omega_j}( \varepsilon_{t,m}^{i,j}) \right) \\
  & - \frac{C}{\alpha_d} \nabla_{\delta_i} \textrm{KL}(q_\delta (\cdot | i) || p(\cdot | i) ) \bigg] 
\end{align*}
where $H:= \frac{1}{1-\gamma}$ is used to ensure that only long-enough trajectories are used to compute the gradients, in such a way that they provide a correct estimate of the returns. 
Due to space reasons, we just mention that the gradients for specific goal context $j$ can be easily computed in a similar fashion. 
\paragraph{Gradients of the shared parameters:} 
Since the gradients of the shared parameters cannot flow  through the reward function (which is considered unknown), we use a score function estimator for the policy updates---also known as the log-derivative trick. This gives us the gradients with respect to the shared parameters: 
\begin{align*}
  \nabla_{\theta} \mathcal L (\theta, &\delta, \omega)  =  \frac{1}{MIJ}\sum_{m,i,j=1}^{M,I,J}\bigg[ \sum_{t=0}^{T-H} \tilde R_t( \tau^{i,j}_m)\nabla_{\theta} \log \pi_\theta \left( a_{t,m}^{i,j} | s_{t,m}^{i,j}, z_{\delta_i}( \epsilon_{t,m}^{i,j}), g_{\omega_j}( \varepsilon_{t,m}^{i,j}) \right) \bigg] 
\end{align*}

\subsection{Algorithms}

\begin{algorithm}[H]
\begin{algorithmic}[1]
\STATE  For each $i,j$, initialize parameters $\delta_i$, $\omega_j$ and $\theta$
\FOR{each iteration} 
    \FOR{each $i,j$}
        \STATE Collect trajectories $\{ \tau_m^{i,j}\}_{m=1}^M$
    \ENDFOR
    \FOR{each $i,j$}
        \STATE  $\delta_i \gets \delta_i + \nabla_{\delta_i} \mathcal L (\theta, \delta, \omega)$
        \STATE  $\omega_j \gets \omega_j + \nabla_{\omega_j} \mathcal L (\theta, \delta, \omega)$
    \ENDFOR
    \STATE $\theta \gets \theta + \nabla_{\theta} \mathcal L(\theta, \delta, \omega)$
\ENDFOR
\end{algorithmic}
\caption{DSE-REINFORCE}
\label{alg:DSE-REINFORCE}
\end{algorithm}

\begin{algorithm}[H]
\begin{algorithmic}[1]
\STATE For each $i$, $j$, initialize parameters $\psi_{i,j}$, $\bar \psi_{i,j}$, $\phi_{i,j}$, $\delta_i$, $\omega_j$ and $\theta$.
\FOR{ each iteration } 
    \FOR{ each $i,j$ combination}
        \FOR{ each environment step}
            \STATE $z \sim q_{\delta_i}$, $g \sim q_{\omega_j}$, $a \sim \pi_\theta$, $s_{t+1} \sim P_i$ 
            \STATE $\mathcal D_{i,j} \gets\mathcal D_{i,j}  \cup \{(s_t, a_t, r(s_t, a_t), s_{t+1})\}$
        \ENDFOR
        \FOR{each gradient step}
            \STATE $\phi_{i,j} \gets \phi_{i,j} - \lambda_Q \nabla_{\phi_{i,j}} \mathcal L_{Q_{i,j}} $
            \STATE $\psi_{i,j} \gets \psi_{i,j} - \lambda_V \nabla_{\psi_{i,j}} \mathcal L_{V_{i,j}} $
            \STATE $\delta_i \gets \delta_{i} - \lambda_q \nabla_{\delta_{i}} \mathcal L_{q_{\delta_i}} $
            \STATE $\omega_j \gets \omega_{j} - \lambda_q \nabla_{\omega_{j}} \mathcal L_{q_{\omega_j}} $
        \ENDFOR
        \STATE $ \theta \gets \theta - \lambda_\pi \nabla_{\theta} \mathcal L_{\pi} $
        
    \ENDFOR
    \FOR{ each $i,j$ combination}
        \STATE  $\bar \psi_{i,j} \leftarrow \tau \psi_{i,j} + (1-\tau) \bar\psi_{i,j}$
    \ENDFOR
\ENDFOR
\end{algorithmic}
\caption{DSE-SAC}
\label{alg:DSE-SAC}
\end{algorithm}

\subsection{Hyperparameters}
\subsubsection*{Cartpole}
\begin{table}[h]
\caption{Cartpole hyperparameters}
\label{tab:cartpole_params}
\vskip 0.15in
\begin{center}
\begin{small}
\begin{sc}
\begin{tabular}{c|c|c|}
Parameter & MTRL & HRL \\  
\hline
dim $z$ & 2 & - \\
dim $g$ & 2 & -\\
$\gamma$ & 0.99 & 0.99\\
$H$ & 2 & 100\\
$\alpha_d$ & 50000 & -\\
$\alpha_r$ & 1000 & - \\
$\alpha_\pi$ & $\infty$ & 50\\
$\pi$ learning rate & 0.002 & 0.002\\
$q_\delta$, $q_\omega$ learning rates & 0.002 & - \\
$\beta_{art}$ & 0.02 & 0.02 \\
Max episode length & 300 & 2000\\
Number of tasks & 9 & 1\\
Batch size per task & 4 & 10\\
Extended policy input & Concat and outer product & -
\end{tabular}
\end{sc}
\end{small}
\end{center}
\end{table}

The policies for these problems were composed of neural networks with $H$ hidden units in a single hidden layer. The non-linear component of the hidden layer was the TANH function; the final output passed through a SOFTMAX layer. The input for these networks were the concatenated vector $(g_t, s_t, z_t)$. For the MTRL case, we preprocessed the input by computing the outer product between state vector $s$ and the concatenation of the latent variables $z$ and $g$. We then flattened the outer product and concatenated the original input vector once more.

The hyperparameters for the single-embedding and independent algorithms are the same as the MTRL values from Table~\ref{tab:cartpole_params}. The dimension of the single embedding was equal to the sum of the dimensions of the reward and dynamics latent variables.

\subsubsection*{Reacher}
\begin{table}[h]
\caption{Mujoco Reacher-v2 hyperparameters}
\label{tab:mtrl_params}
\vskip 0.15in
\begin{center}
\begin{small}
\begin{sc}
\begin{tabular}{c|c|c}

Parameter & MTRL & HRL \\
\hline
dim $z$ & 2 & - \\
dim $g$ & 3 & - \\
$\gamma$ & 0.99 & 0.99 \\
\# of hidden units & $50\times2$-100 & 100 \\

$\alpha_\pi$ & 250  & 250\\
$\alpha_{d}$ & 20 & - \\
 $\alpha_{r}$ & 20 & - \\
Proximity Bonus reward  & YES & YES \\
Learning rate policy & 0.003 & 0.003\\
$q_\delta$, $q_\omega$ learning rates & 0.0003 & -\\
Learning rate Q & 0.03 & 0.03 \\
Learning rate V & 0.03 & 0.03 \\
Target smoothing coefficient ($\tau$) & 0.01 &  0.01 \\
Max episode length & 100 & 180 \\
\# of tasks & 8 &  1  \\
\# of dynamics, reward contexts & 2, 4 &  -  \\
Batch size transitions per environment & 128 & 256 \\
Size experience buffer & $3\times10^6$ & $3\times10^5$ \\
Extended policy input & Concatenation & -
\end{tabular}
\end{sc}
\end{small}
\end{center}
\vskip -0.1in
\end{table}

For Mujoco Reacher-v2 environment tasks, continuous actions are sampled from a 2 hidden-layer Gaussian policy network and then squeezed to the bounded interval $[-1, +1]$ using a $\tanh$ function. We concatenate state features $s_t$ and latent variables $z_t$, $g_t$ to form the extended input for the multi-task policy. The state and latents are separately preprocessed by passing through two 50 hidden-unit layers with ReLu activation functions. Both outputs are concatenated to feed the output layer. For the HRL problem we augment the state features by appending the location and distance to the goal to serve as input for a Gaussian network policy over the latent variables with two hidden layers of 100 units each. 

The hyperparameters for the single-embedding and independent algorithms are the same as the MTRL values from Table~\ref{tab:mtrl_params}. The dimension of the single embedding was equal to the sum of the dimensions of the reward and dynamics latent variables.

\section{Additional Experimental Results}
\subsection{Comparison of DSE-REINFORCE against other algorithms}
Here we evaluate the learning of the DSE-REINFORCE policy under the full multi-task ($3 \times 3$) problem space. We compare against a single embedding algorithm, Distral and learning each task independently in Figure \ref{fig:learning_reinforce}.  

\newcommand\widthh{130}
\begin{figure}[t]
    \centering
    \includegraphics[width=\widthh pt]{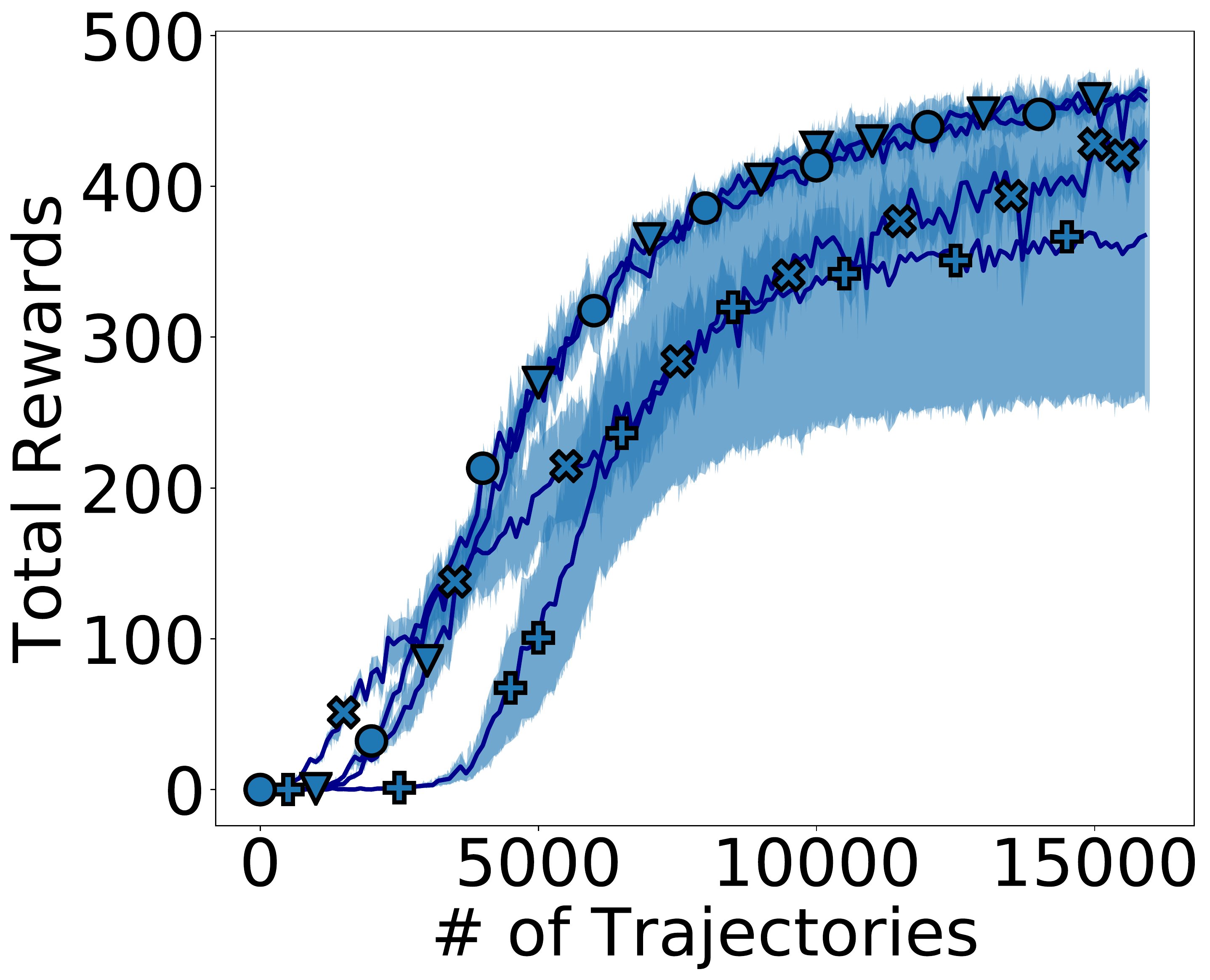}
    \includegraphics[width=\widthh pt]{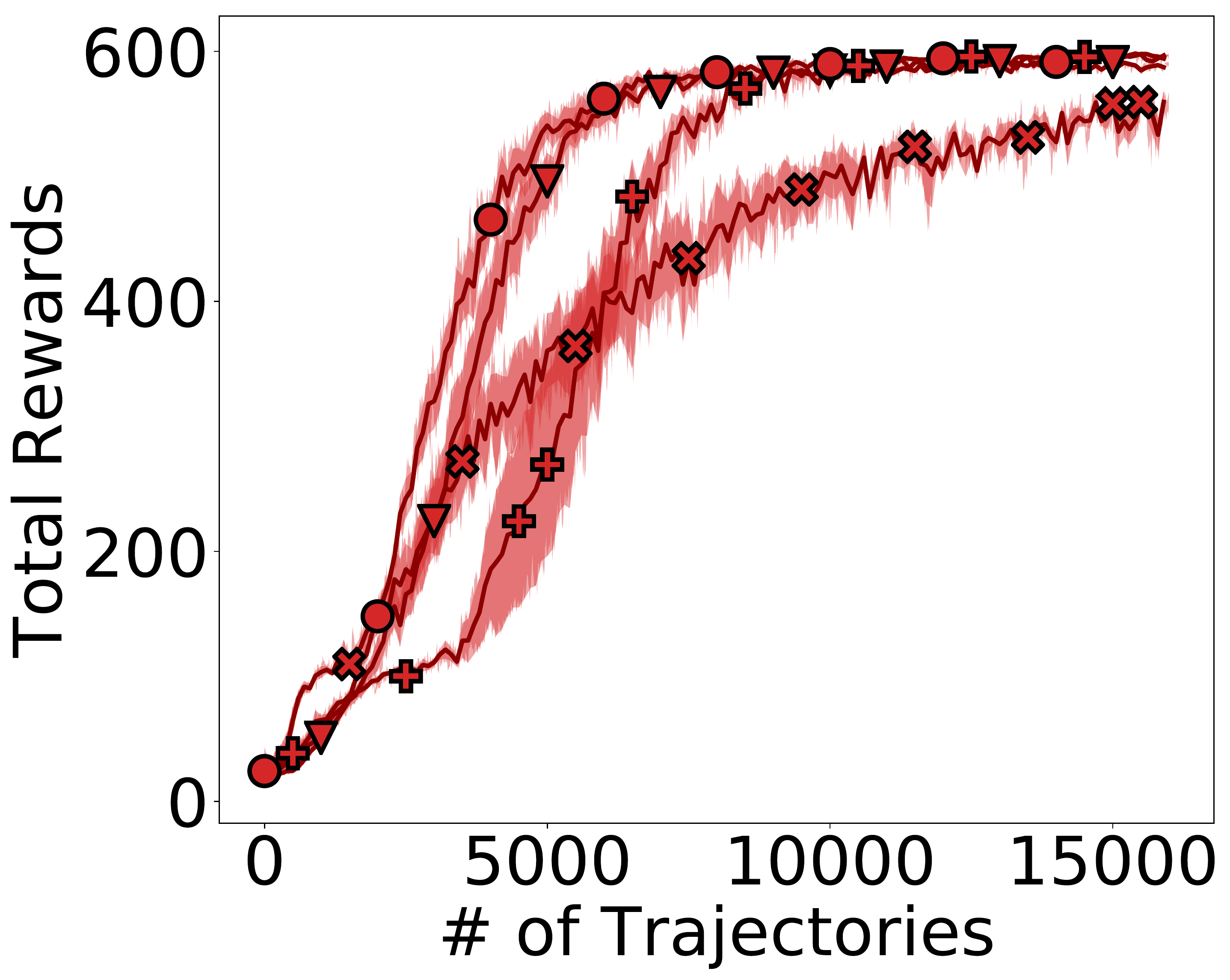}
    \includegraphics[width=\widthh pt]{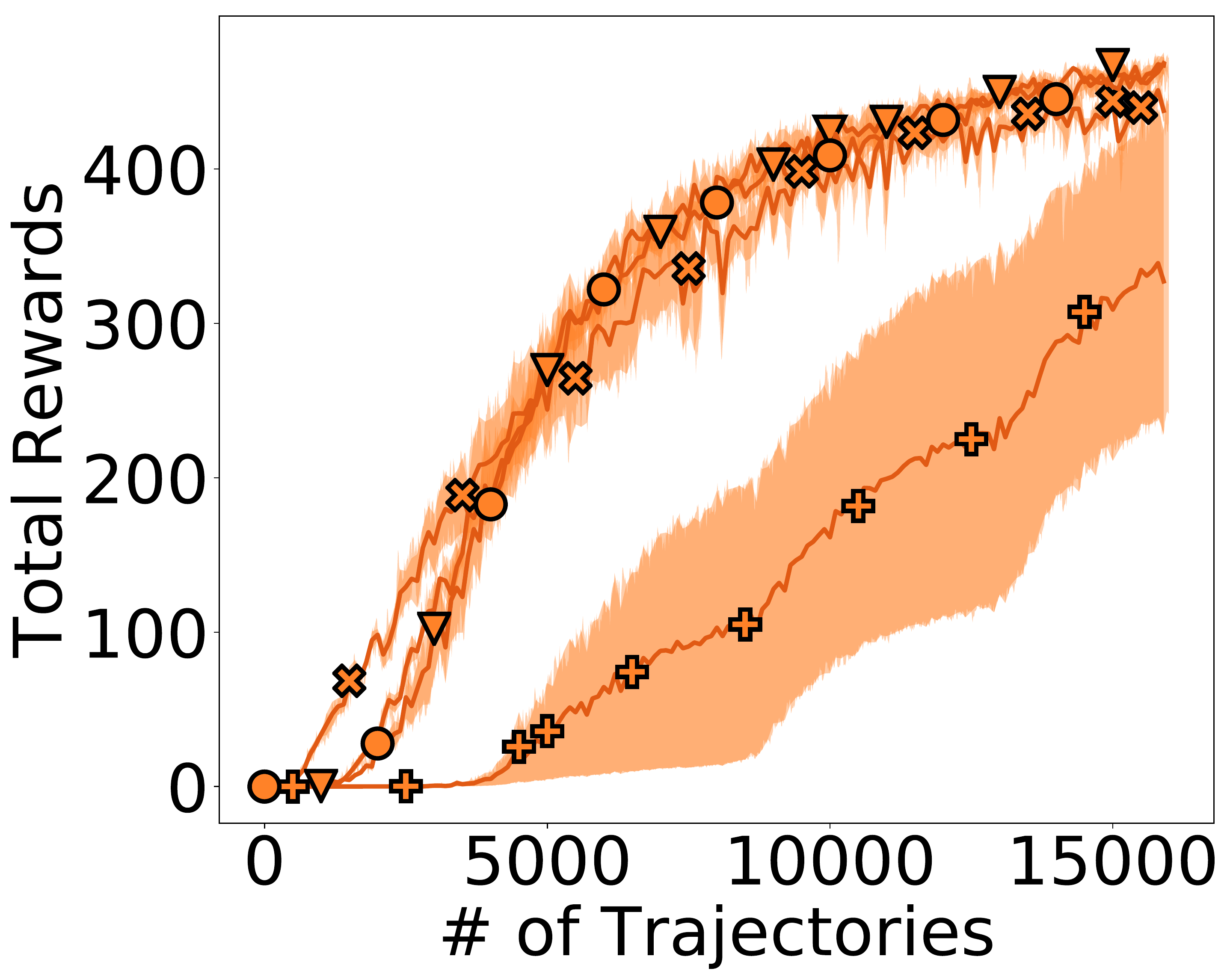} \\
    \includegraphics[width=\widthh pt]{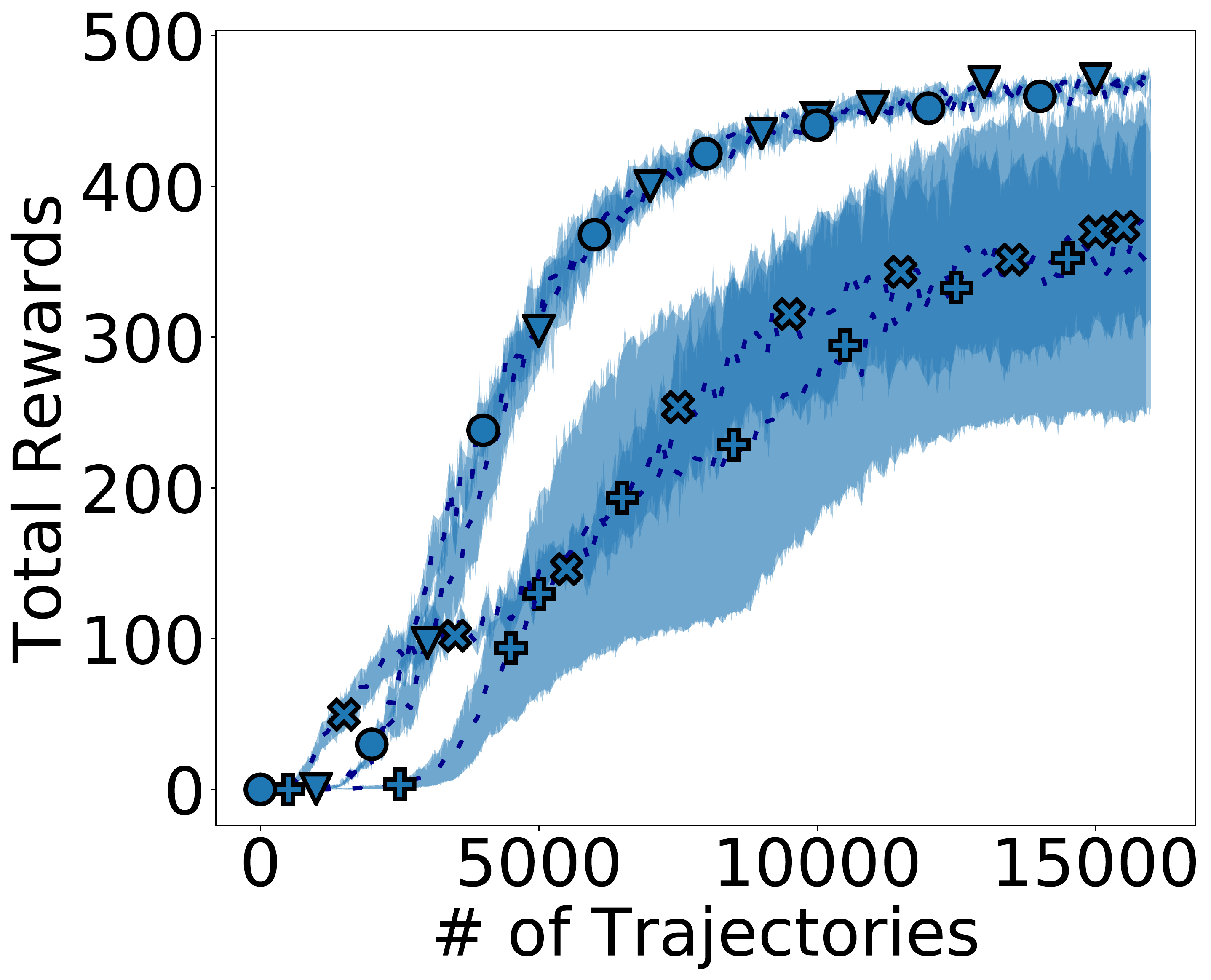}
    \includegraphics[width=\widthh pt]{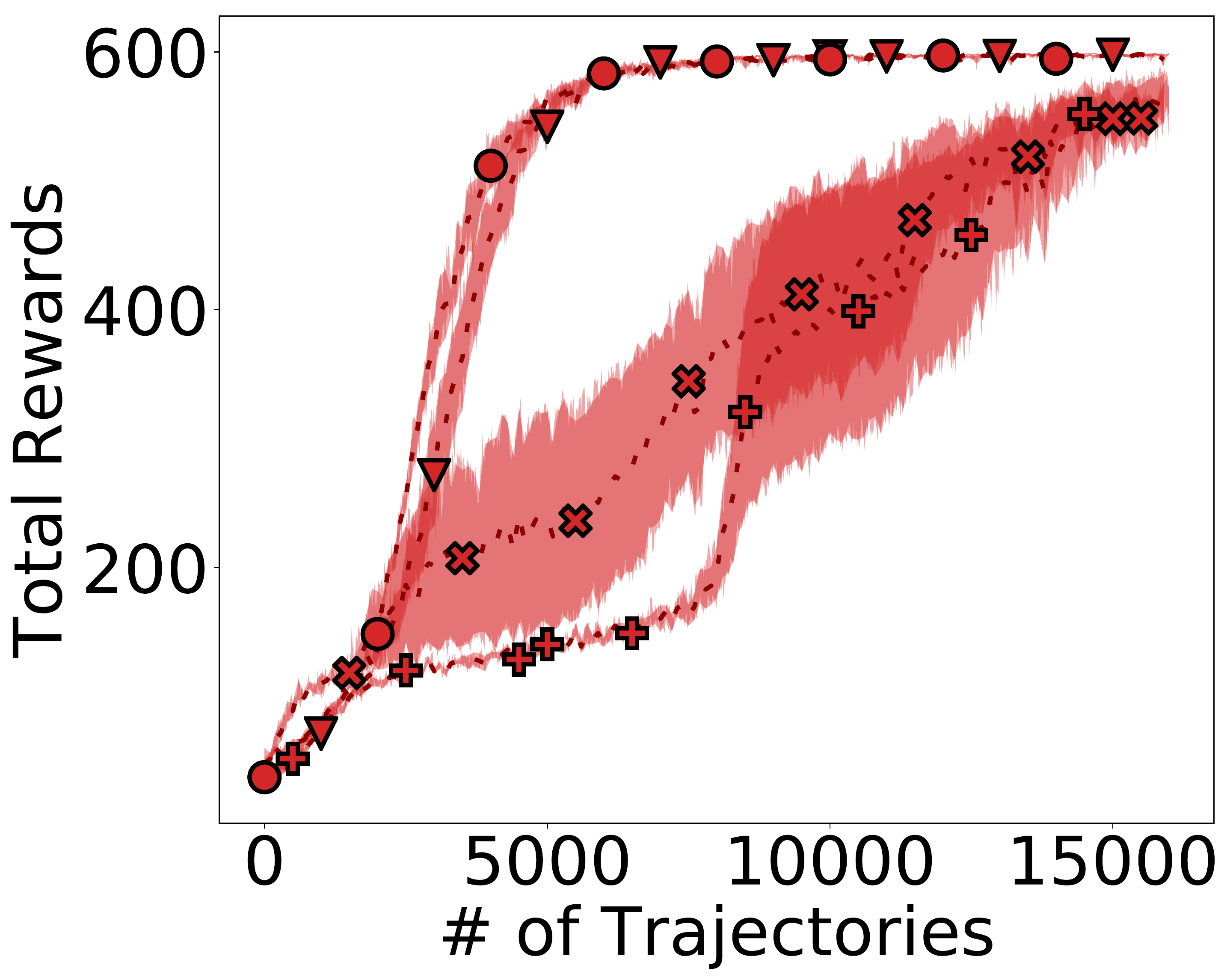}
    \includegraphics[width=\widthh pt]{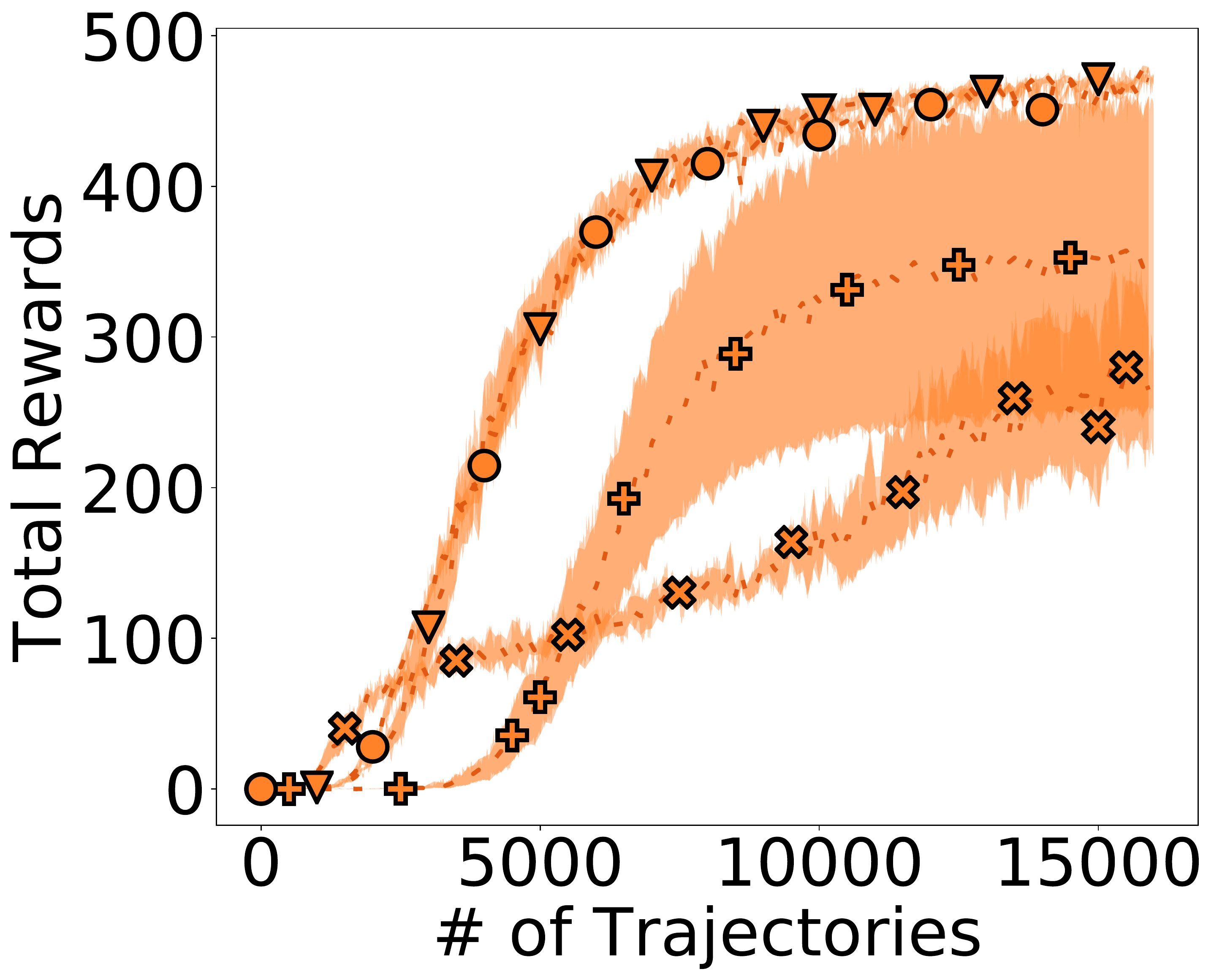} \\
    \includegraphics[width=\widthh pt]{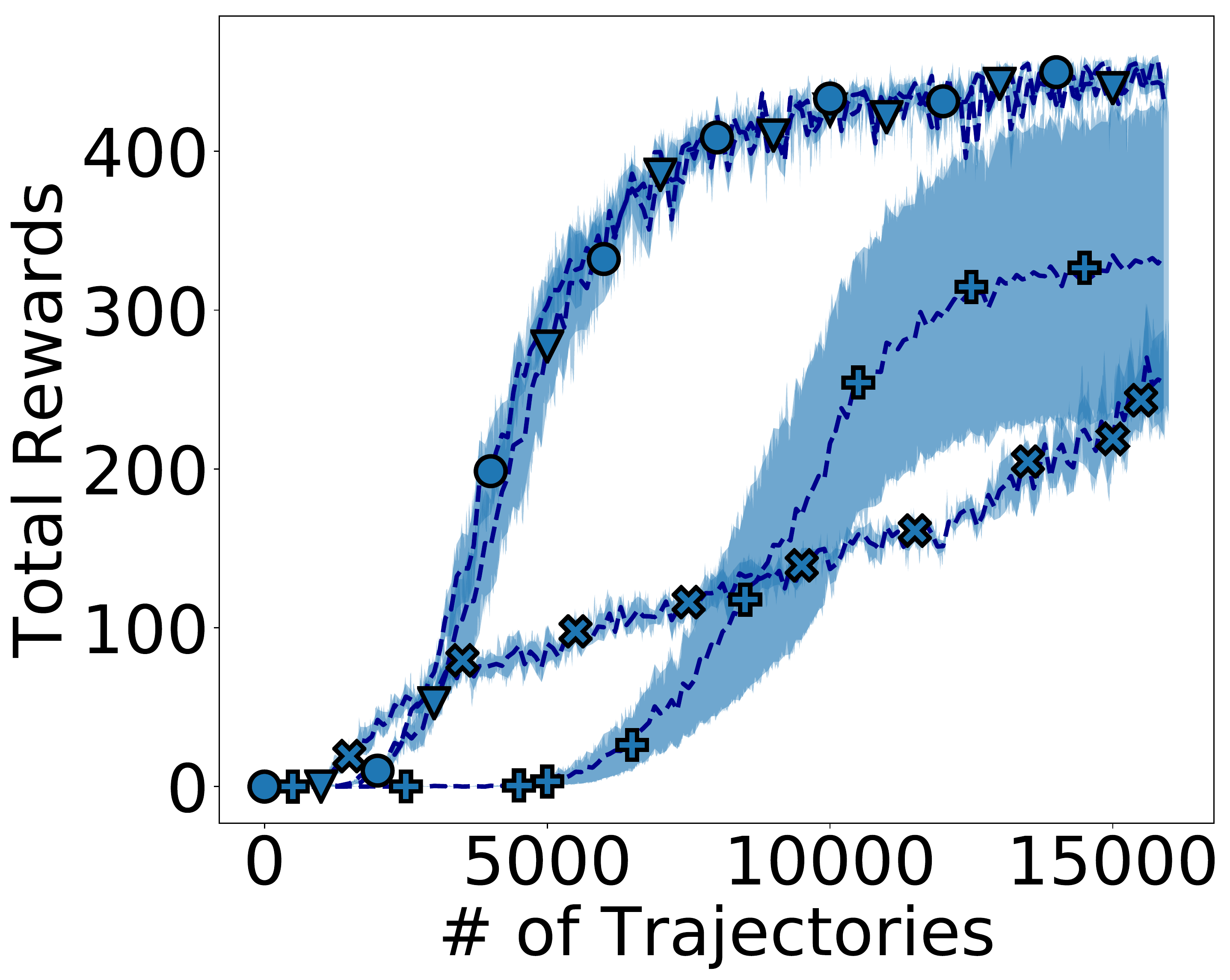}
    \includegraphics[width=\widthh pt]{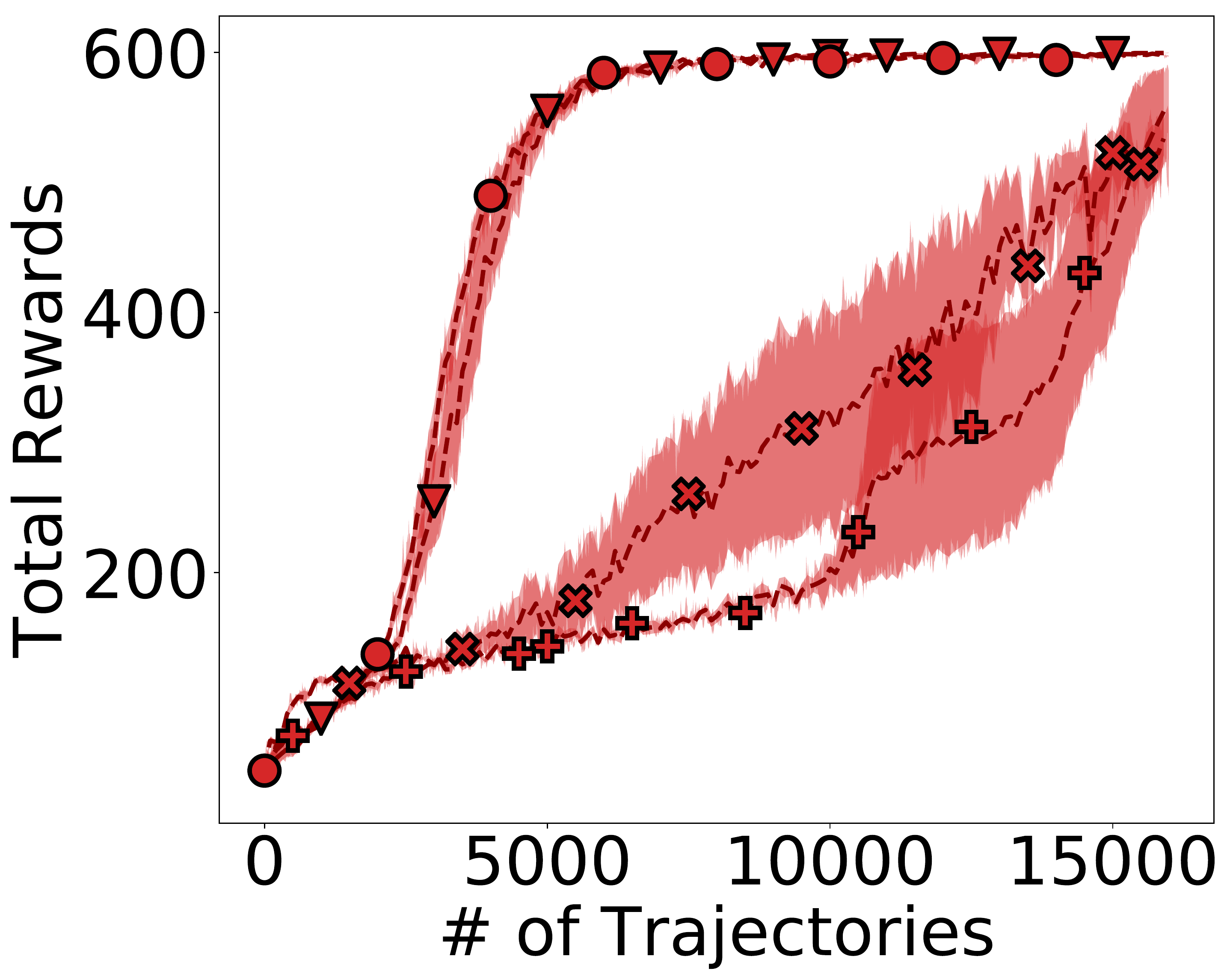}
    \includegraphics[width=\widthh pt]{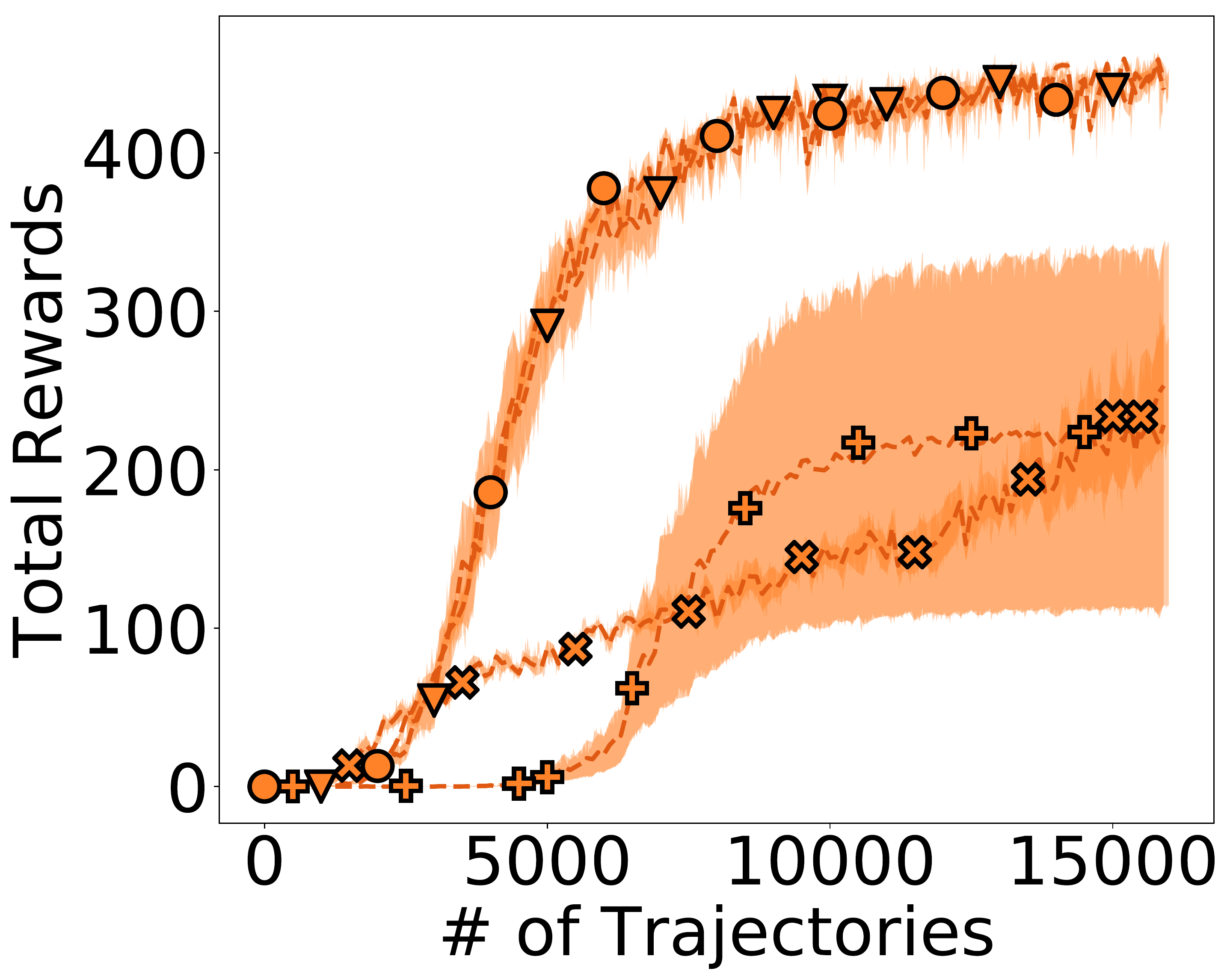} \\
    \includegraphics[width=330 pt]{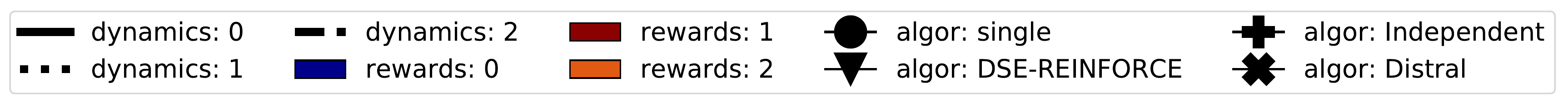}
    \caption{Comparison of DSE-REINFORCE against other algorithms. Here the task configurations are the same as in the manuscript; they are repeated here for completeness. dynamics $ = \{0, 1, 2\}$ are $\mathrm{masscart}=\{0.2, 1.0, 2.0\}$ and $\mathrm{reward}=\{0, 1, 2\}$ are $\mathrm{x_{goal}}=\{-1.0, 0.0, 1.0\}$. The columns represent the reward conditions, and the rows the dynamics conditions, all in ascending order. }
    \label{fig:learning_reinforce}
\end{figure}
From these figures, we see that for all cases, DSE-REINFORCE learns faster, or at least at the same pace as the other algorithms.  The hyperparameters for Distral were chosen as $\beta=10$, $\alpha = 0.5$ and all learning rates were $10^{-4}$, whereas the network architecture consisted of two layers with $50$ hidden neurons and ReLU non-linearity.

\newpage
\subsection{Description of AsteroidCartPole}
In a novel cartpole problem (AsteroidCarpole), the high-level policy  must balance the pole while avoiding collisions---that trigger a terminal event---between the cart and (1 or 2) asteroids that fall from the top. The reward was set to $+1$ for each time step it survives while balancing the pole. Asteroids start at a random $x$ coordinate (Type 1), or directly above the cart (Type 2) and are instantaneously reset as they reach the bottom of the screen. The state space of the original cartpole was augmented to include the $x$ and $y$ coordinates of the asteroids. 

 Therefore, the latent variable $z$ that the low-level policy $\pi(a|s,z,g)$ needs to execute low-level actions was also fixed to the mean of the variational distribution for the dynamics condition with $m=1$. In contrast, the latent variables $g$ in which the high-level policy acts could be chosen from a discrete set of five values. Three of them were the means of the learned variational distributions for the goal-contexts and the other two were $\mu = \{[-0.1, -1.30], [0.35, 0.65]\}$.

\subsection{HRL 2-Asteroid AsteroidCartpole}
We also trained a high-level policy on the 2-Asteroid problem. The evolution of the rewards for 3 examples of this are shown in Figure \ref{fig:2asteroid}, along with those of 3 runs with learning a low level policy using the base REINFORCE algorithm.

\begin{figure}[h!]
    \centering
    \includegraphics[width=150pt]{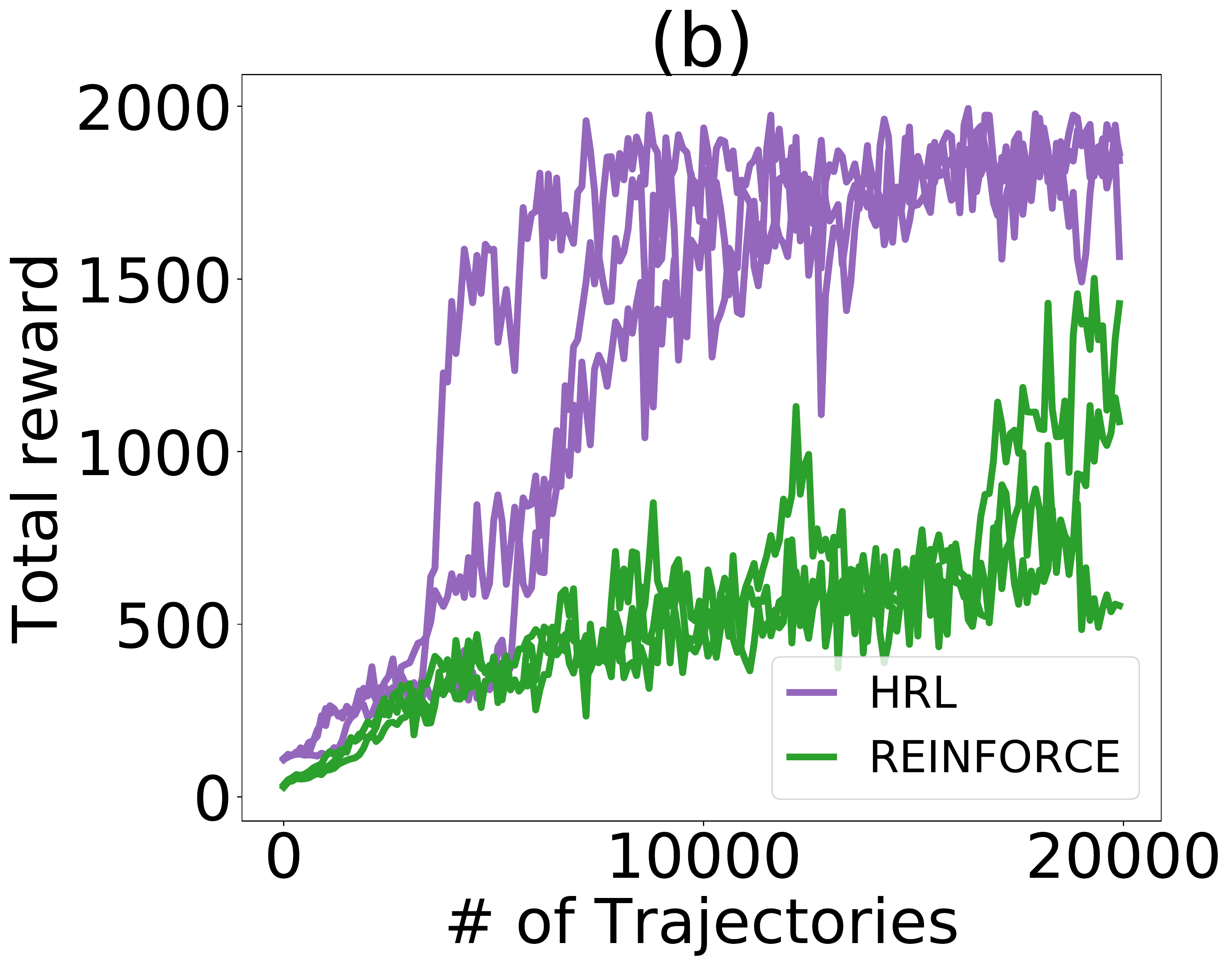}
    \caption{Evolution of the total reward for 2-Asteroid HRL problem. }
    \label{fig:2asteroid}
\end{figure}
\newpage

\subsection{Comparison of DSE-SAC against other algorithms}
As with DSE-REINFORCE, we compared the solution of Reacher-v2 using DSE-SAC against a single-embedding (no disentangling) algorithm, and to training each task independently. The full 3x3 grid is used. The full learning plots for each task is shown in Figure \ref{fig:learning_sac}. 
\begin{figure}[t]
    \centering
    \includegraphics[width=\widthh pt]{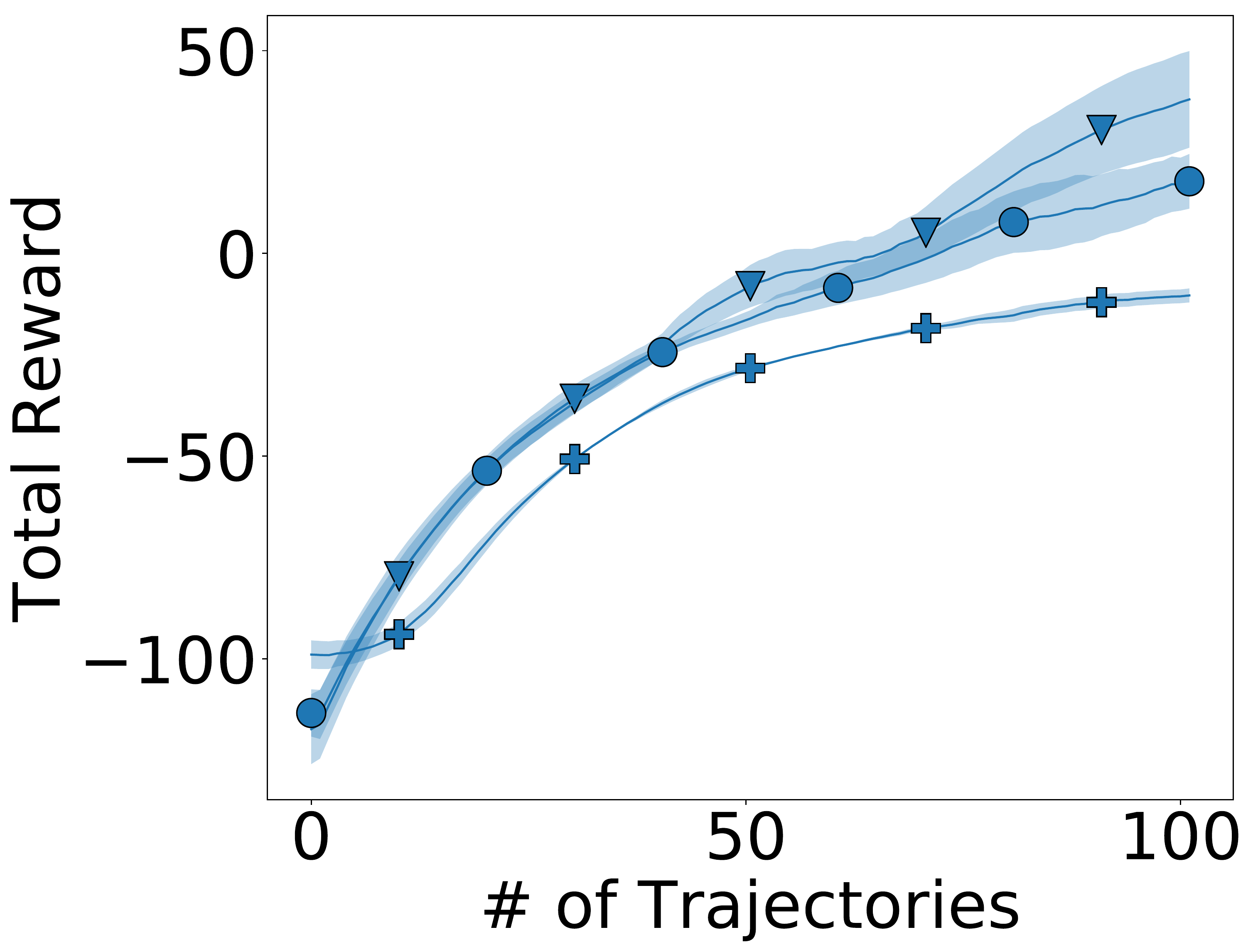}
    \includegraphics[width=\widthh pt]{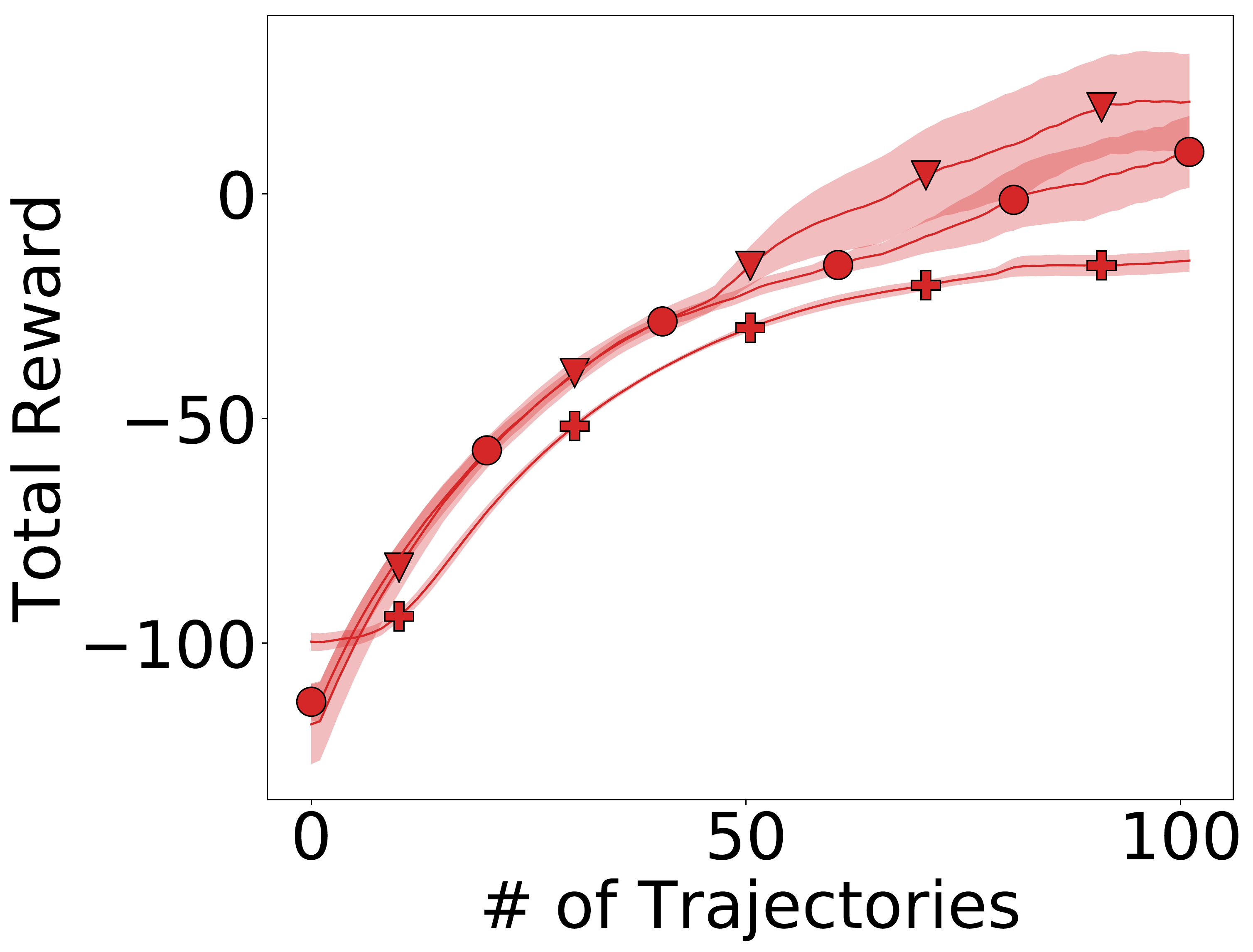}
    \includegraphics[width=\widthh pt]{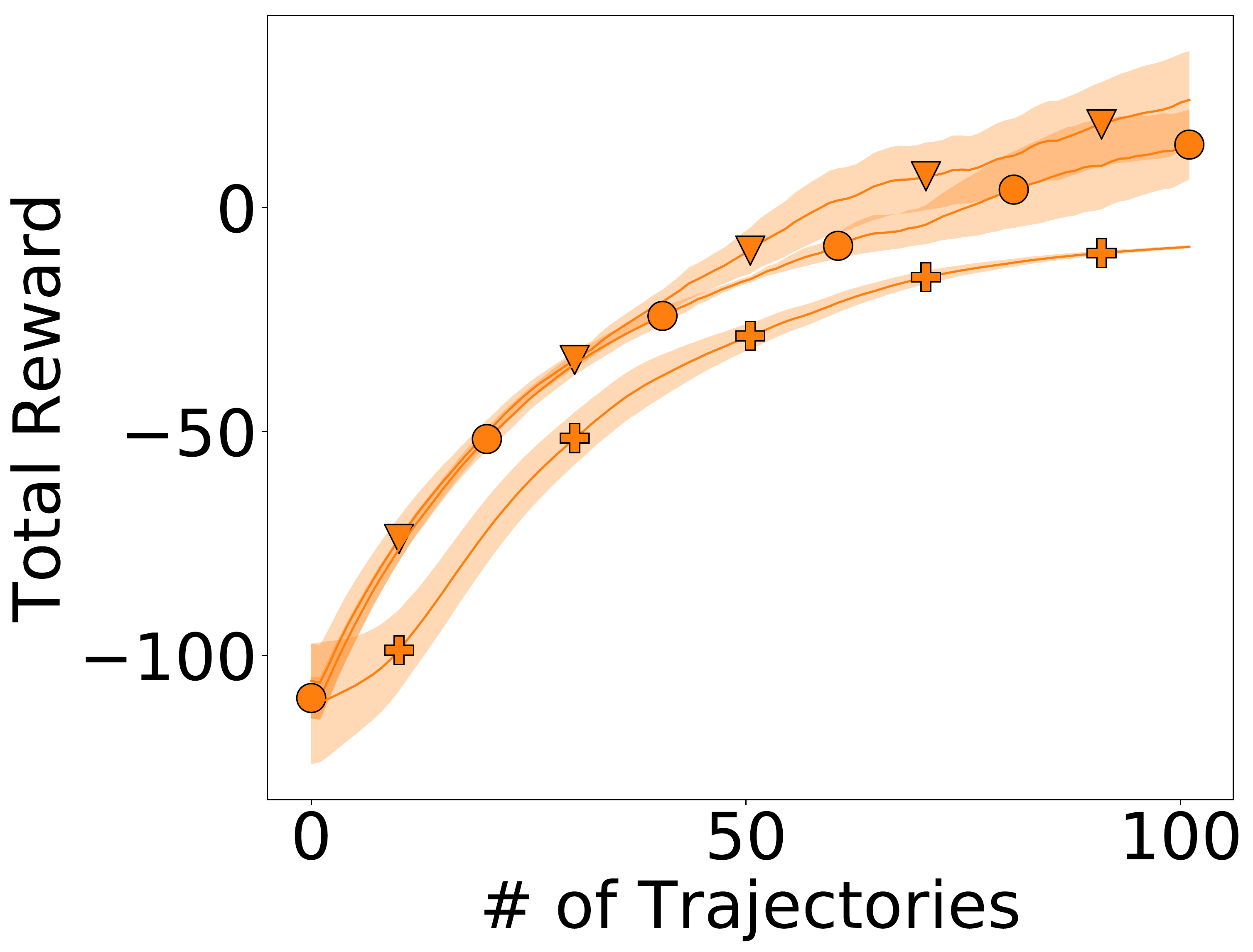} \\
    \includegraphics[width=\widthh pt]{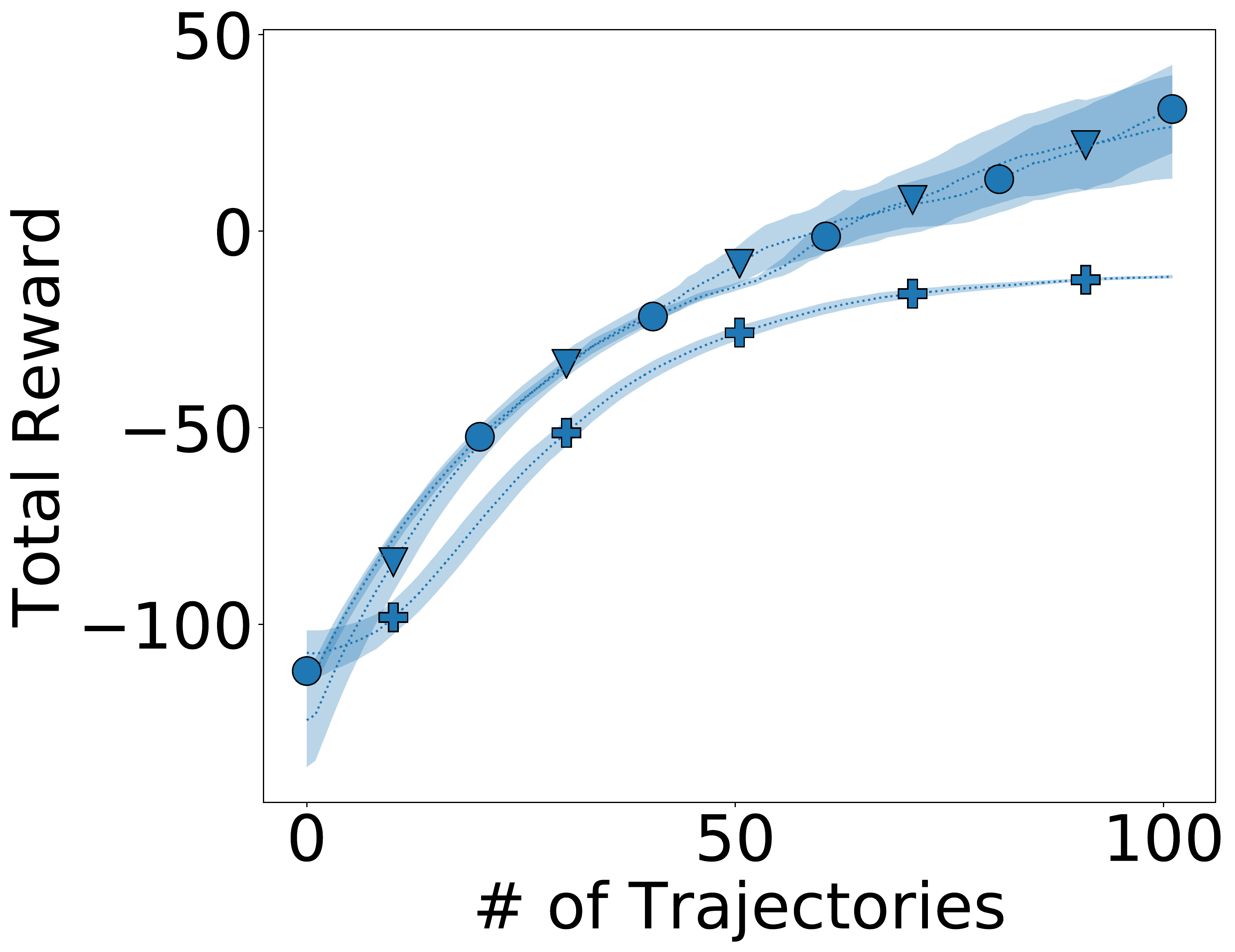}
    \includegraphics[width=\widthh pt]{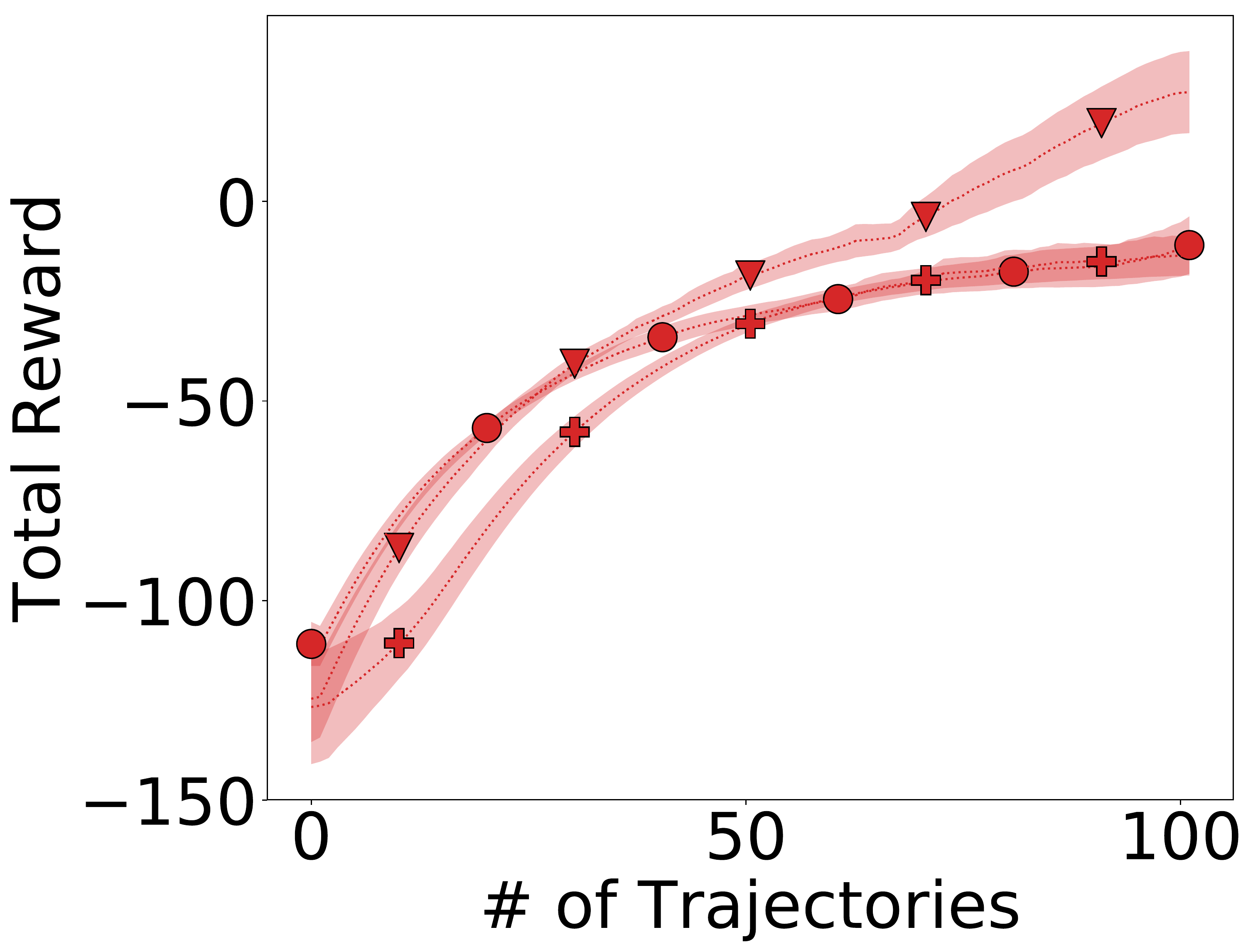}
    \includegraphics[width=\widthh pt]{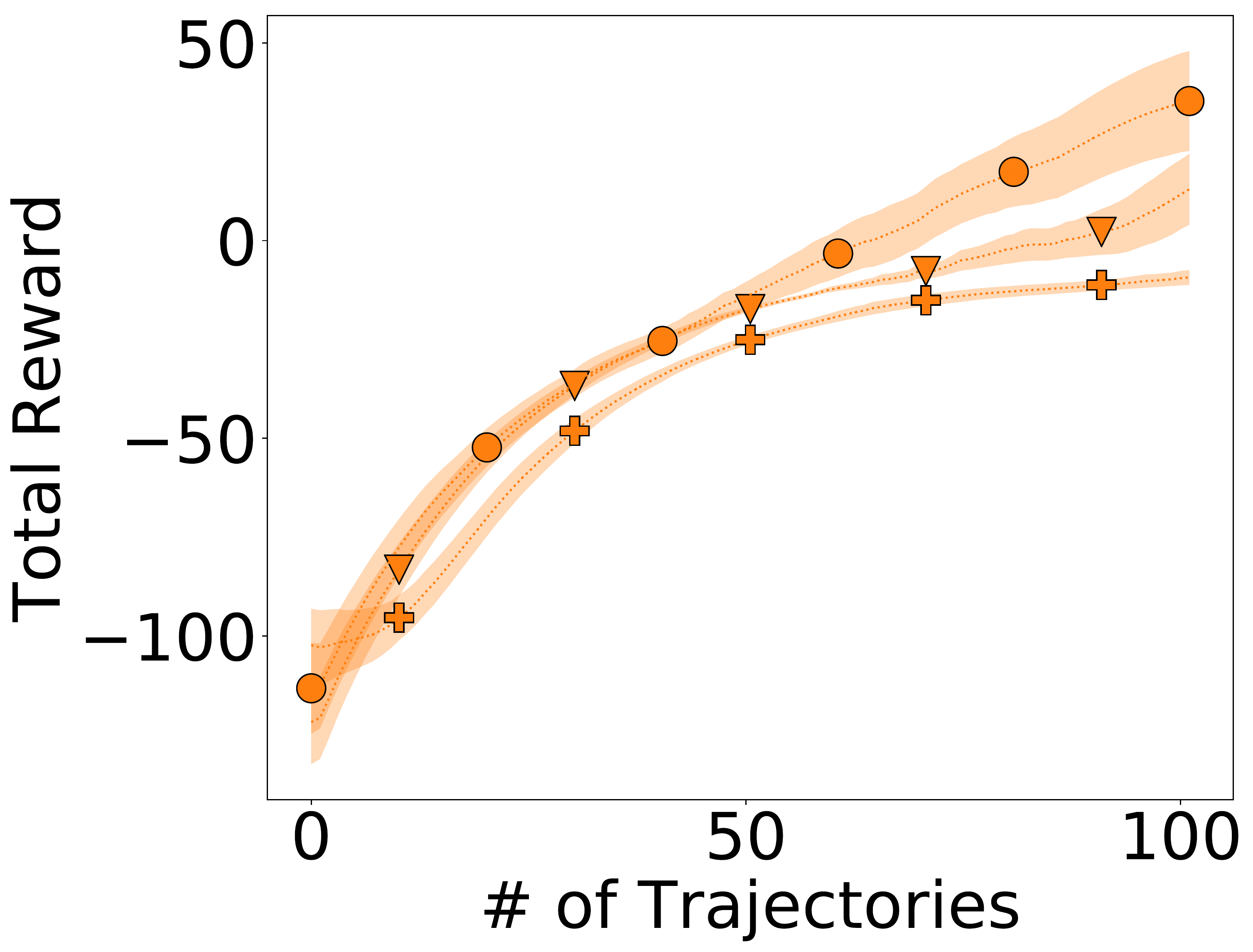} \\
    \includegraphics[width=\widthh pt]{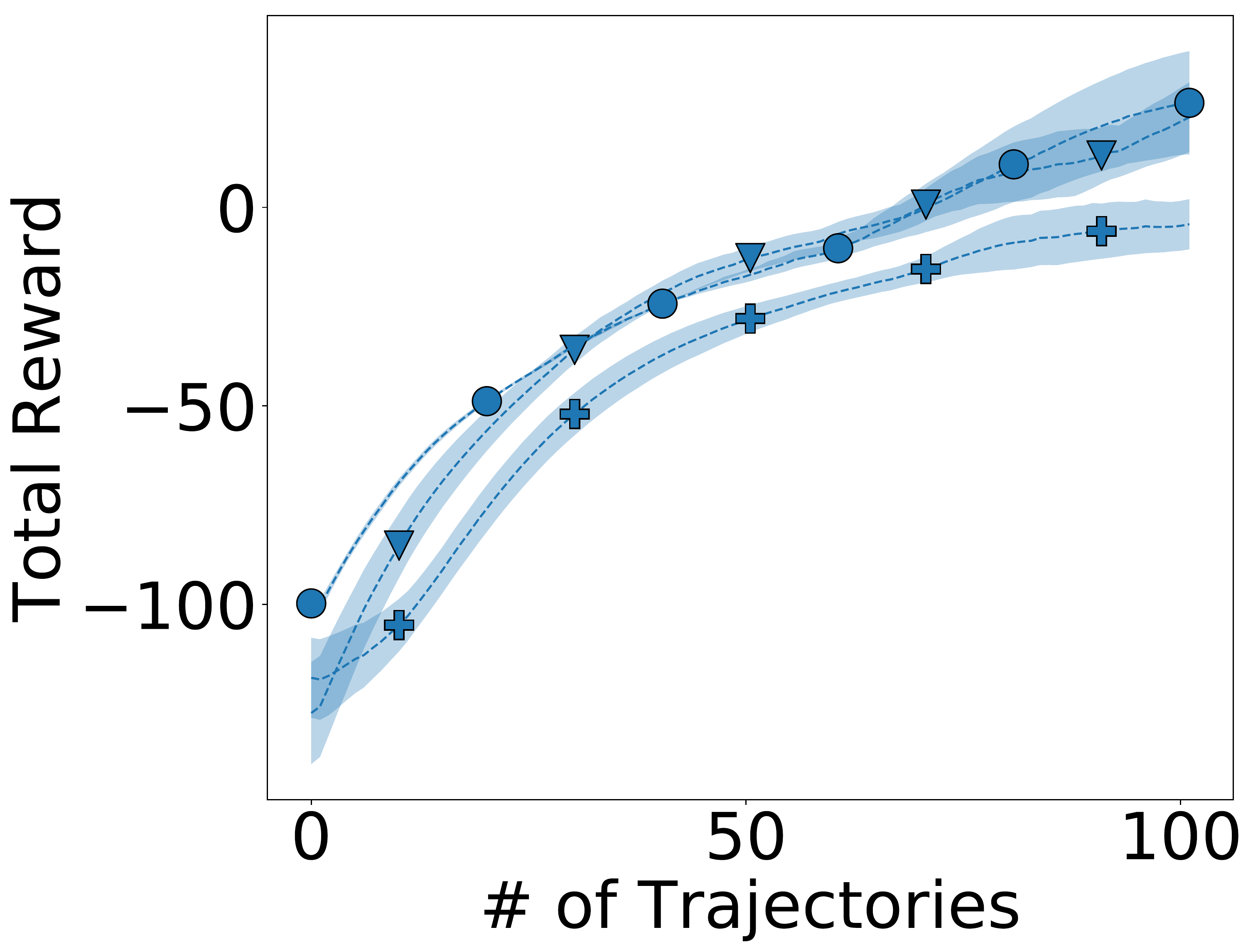}
    \includegraphics[width=\widthh pt]{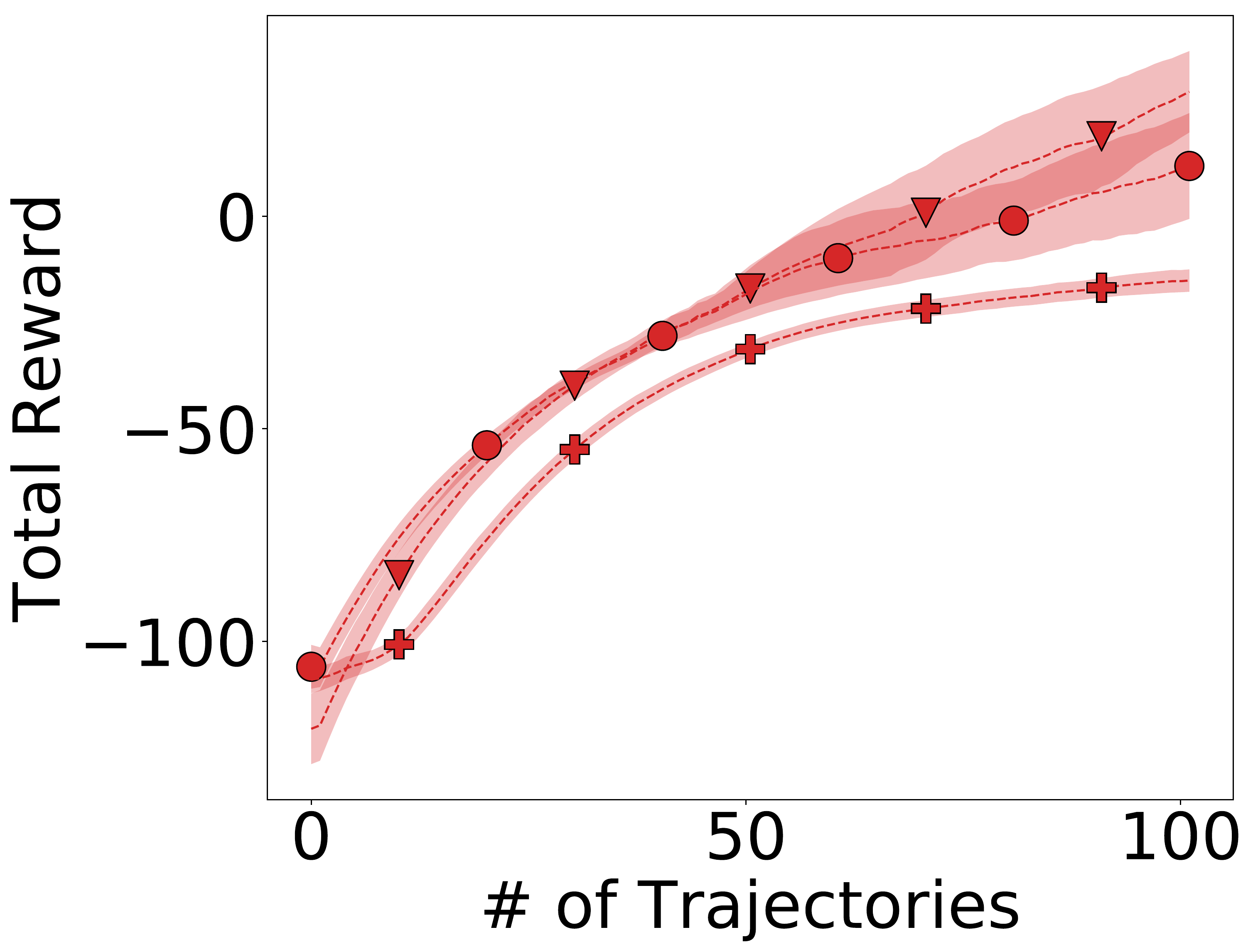}
    \includegraphics[width=\widthh pt]{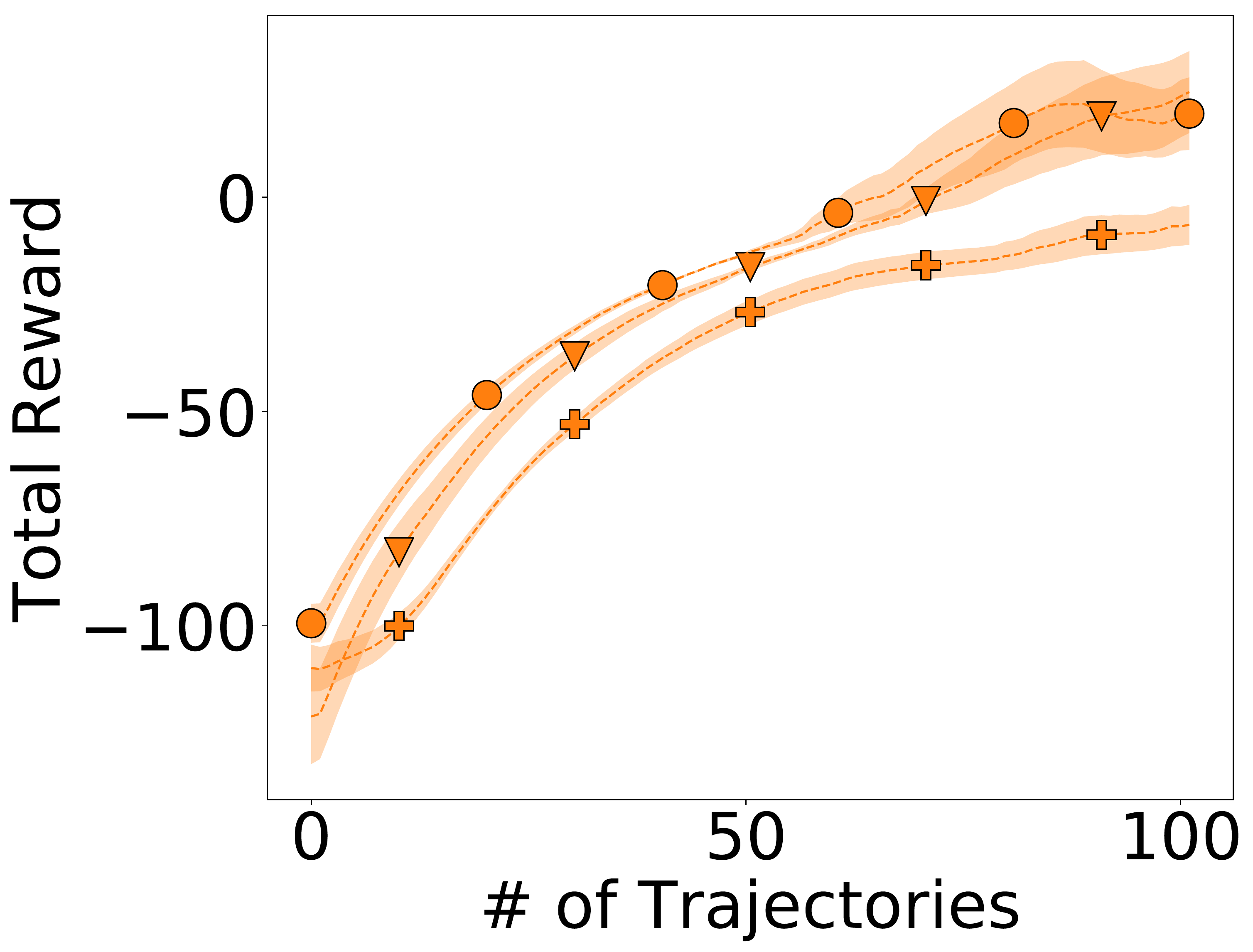} \\
    \includegraphics[width=330 pt]{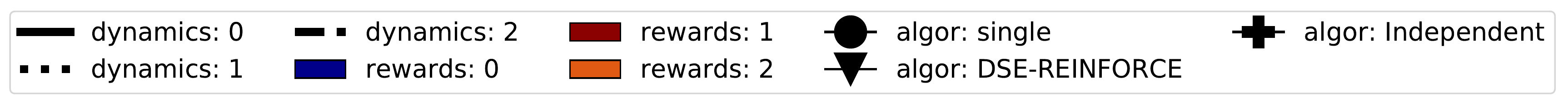}
    \caption{Comparison of DSE-SAC against other algorithms. Here the task configurations are the same as in the manuscript. }
    \label{fig:learning_sac}
\end{figure}

\subsection{Simple generalization experiments with DSE-SAC}
\begin{figure}[t]
 \includegraphics[width=200pt]{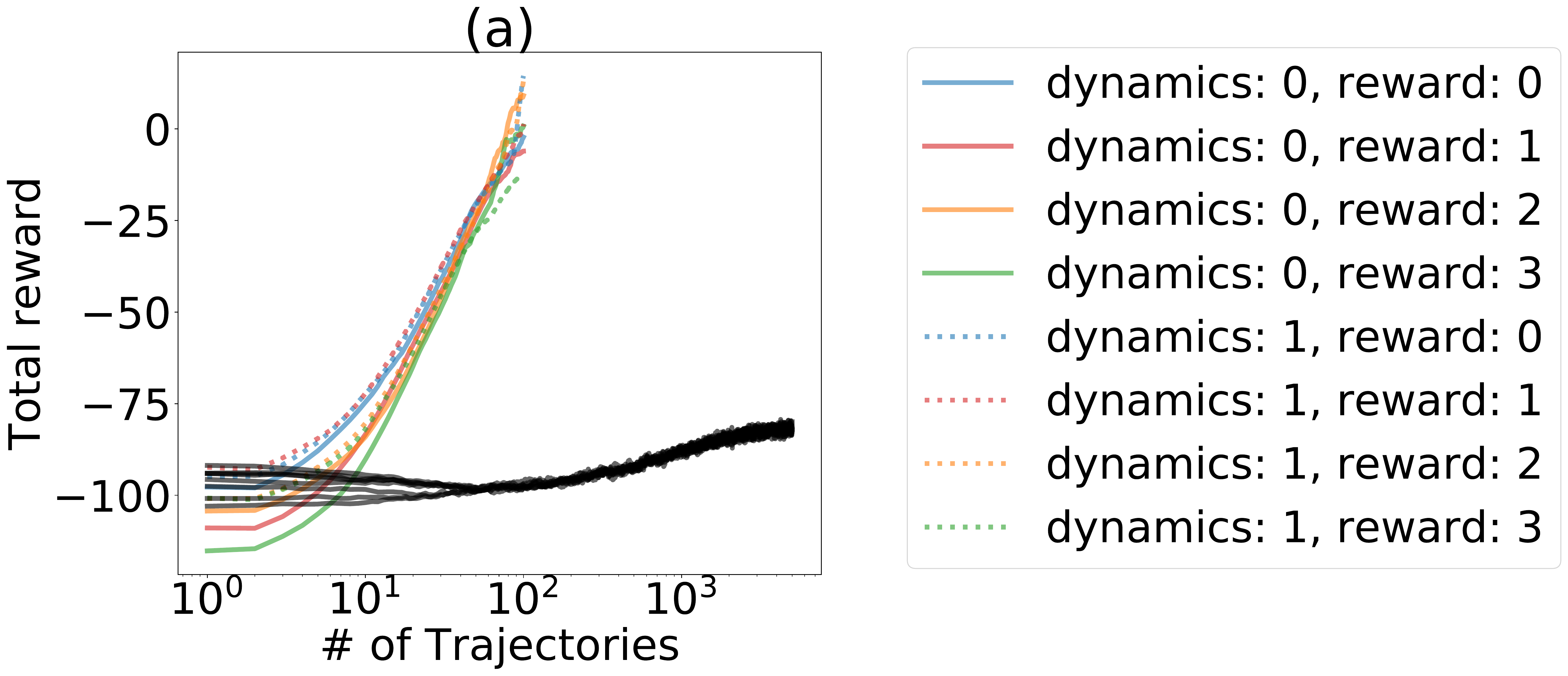}
    \centering
    \includegraphics[width=100pt]{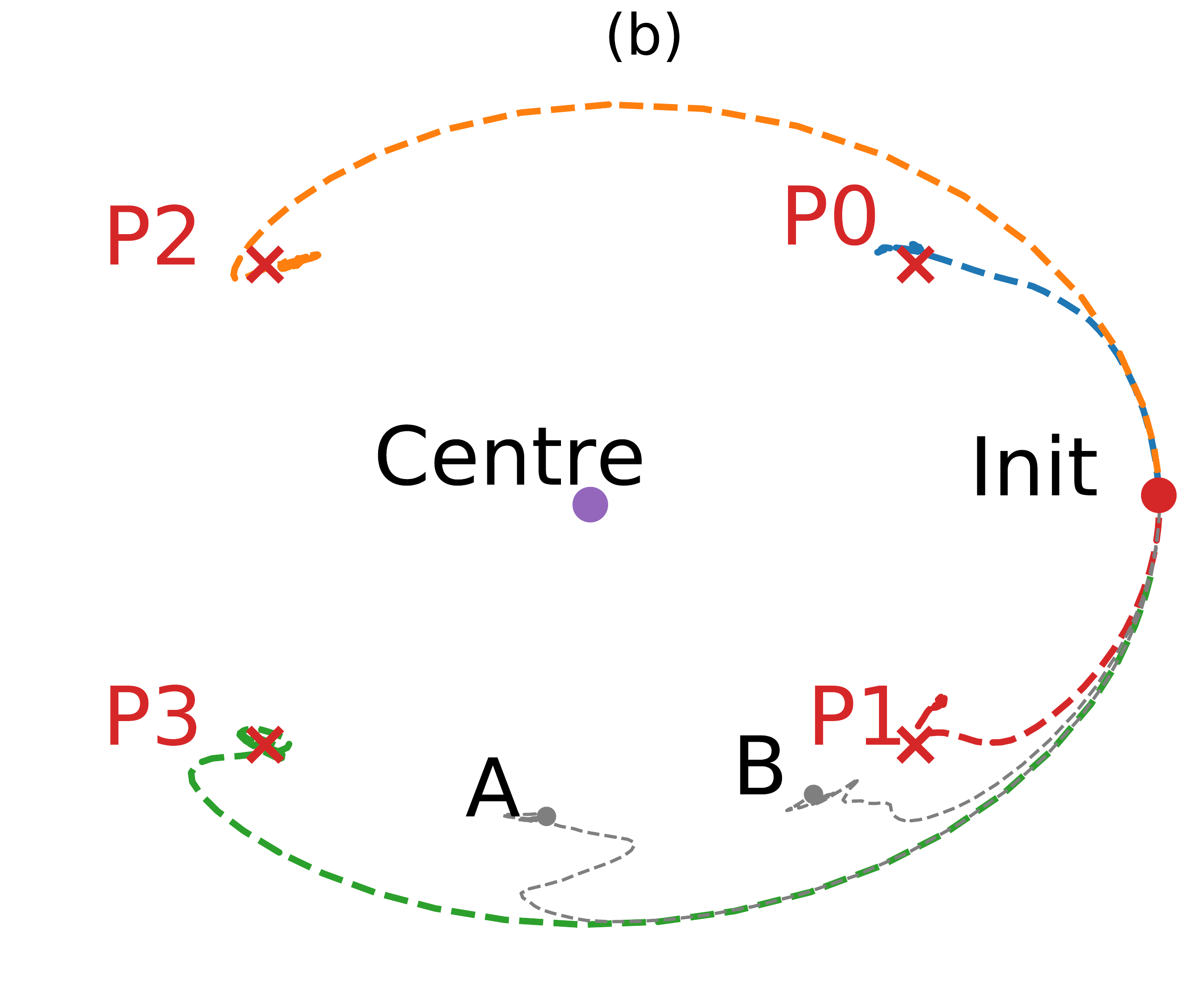}
    
    \caption{Training the Mujoco Reacher-v2 in the full configuration for a problem specification with 2 dynamics cases and 4 rewards cases. The 4 goals here were 4 corners of a square. For each dynamics case, the lengths of the 2 arm components (arm0, arm1) were set as follows: \{(arm0, arm1): $(33\%, 67\%), (67\%, 33\%)$\}. (a) shows the episodic rewards obtained by both DSE-SAC compared to solving the problem using DSE-REINFORCE. (b) shows example trajectories of a trained policy starting with the Reacher's fingertip at "Init" (in red) and reaching the different goals, while pivoting around "Centre" (in purple). P0, P1, P2, and P3 are the goal conditions the Reacher was trained on with the  trajectories in color reaching to them. In grey,  trajectories that reach goal conditions $A$ and $B$ that the Reacher has never encountered in training.}
    \label{fig:generalization_sac}
\end{figure}

In Figure \ref{fig:generalization_sac}(a), we observe how clearly DSE-SAC outperforms DSE-REINFORCE, here acting as a baseline, in this more complex problem. 

The grey trajectories in figure \ref{fig:generalization_sac}(b) reaching $A$ and $B$ are obtained by choosing intermediate values of $g$ between
the means of the variational distributions of each goal condition, whereas the colored trajectories are sampled using the variational means of each goal condition. As it can be seen, those intermediate values lead to trajectories reaching intermediate goal locations highlighting the generalization of the latent space to different goals.

\subsection{Generalization DSE-SAC}

\begin{table}[]
\caption{Generalization experiments on $6-3$ and $4-5$ configurations with DSE-SAC, compared against single embedding and the independent training}
\begin{tabular}{c|c|c|c|}
Experiment & DSE initial reward & Single initial reward & \# Trajectories Independent\\
\hline
6-3: (0,0) & $-18.79 \pm 4.50$     & $-22.26 \pm 0.75$          & $44.75 \pm 2.62$                                 \\
6-3: (1,1) & $-8.63 \pm 1.85$      & $-21.09 \pm 0.77$          & $68.5  \pm 7.80$                                  \\
6-3: (2,2) & $-24.04 \pm 2.76$     & $-28.84 \pm 2.12$          & $40.5 \pm 1.93$                                  \\
4-5: (0,0) & $-27.06 \pm 3.55$     & $-25.05 \pm 1.605$         & $37.25 \pm 2.63$                                \\
4-5: (0,1) & $-32.15 \pm 4.94$     & $-25.05 \pm 1.60$          & $37.00 \pm 2.63$                                           \\
4-5: (1,0) & $-25.78 \pm 0.75$     & $-16.68 \pm 5.31$          & $36.25 \pm 2.83$                                           \\
4-5: (1,1) & $-33.44 \pm 3.06$     & $-27.00 \pm 1.65$          & $36.25 \pm 2.69$                                           \\
4-5: (2,2) & $-20.15 \pm 12.13$    & $-18.62 \pm 12.13$         & $45.25 \pm 1.55$                                          
\end{tabular}
\end{table}
Table 3 shows results from initialising for the unseen tasks of the incomplete problem configurations. The initial reward columns show the initial reward of a trajectory from the corresponding algorithm by matching the correct variational parameters to the indices of the problem. The last columns shows the average number trajectories needed to train an independent policy to reach the reward obtained by DSE-SAC immediately. 
\subsection{Learning trajectories of the latent variables}
In Figure \ref{fig:latents} we plot the evolution of the latent variables for both the MTRL problems involving both the Cartpole and Mujoco Reacher-v2 environments. The legends here match the corresponding plots in the main text. For the cartpole, the final distributions achieved are reflected in Figures 1(b) and 1(c) in the main text.

\begin{figure}[t]
    \centering
    \includegraphics[width=\widthh pt]{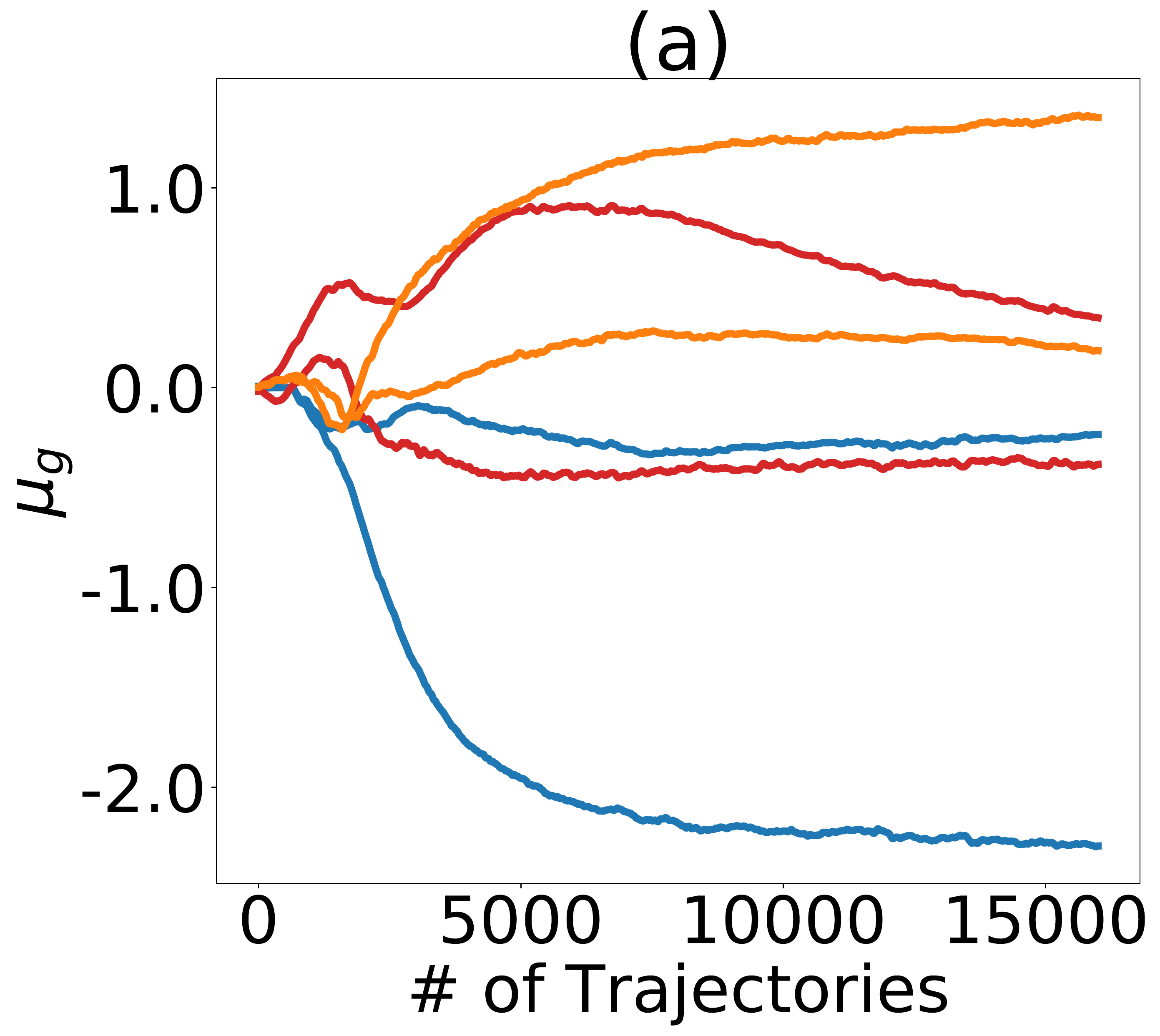}
    \includegraphics[width=\widthh pt]{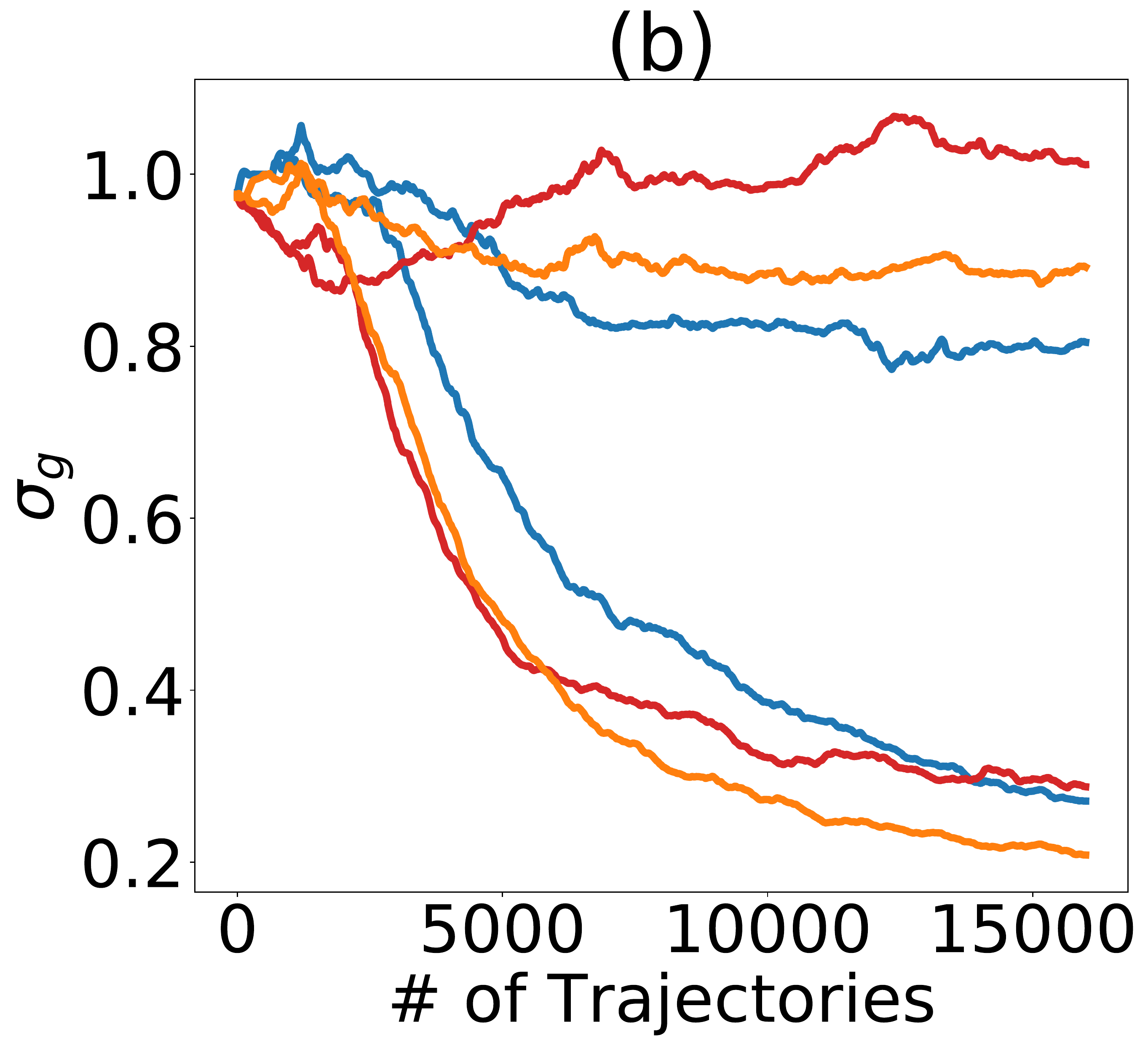} \\
    \includegraphics[width=\widthh pt]{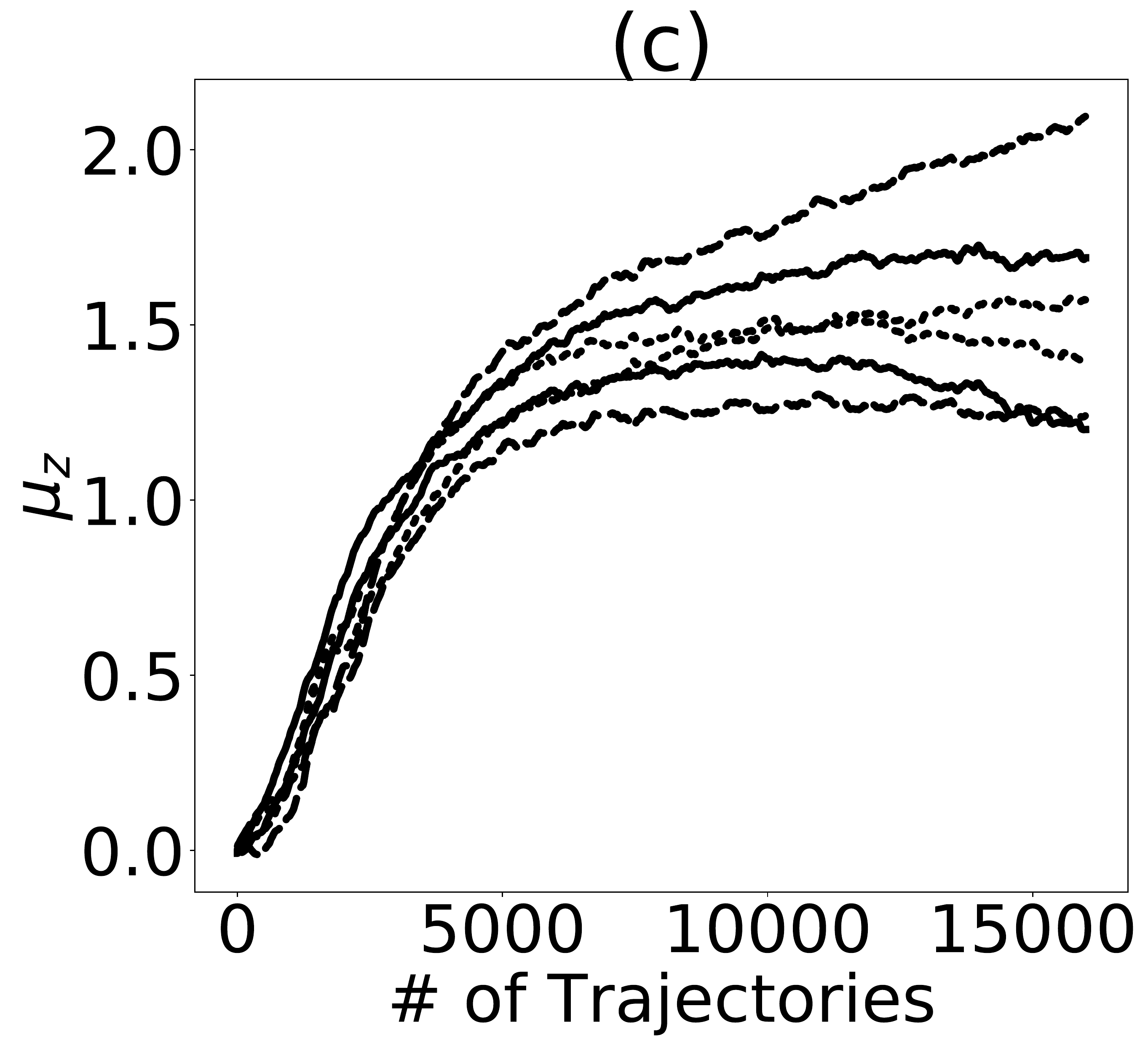}
    \includegraphics[width=\widthh pt]{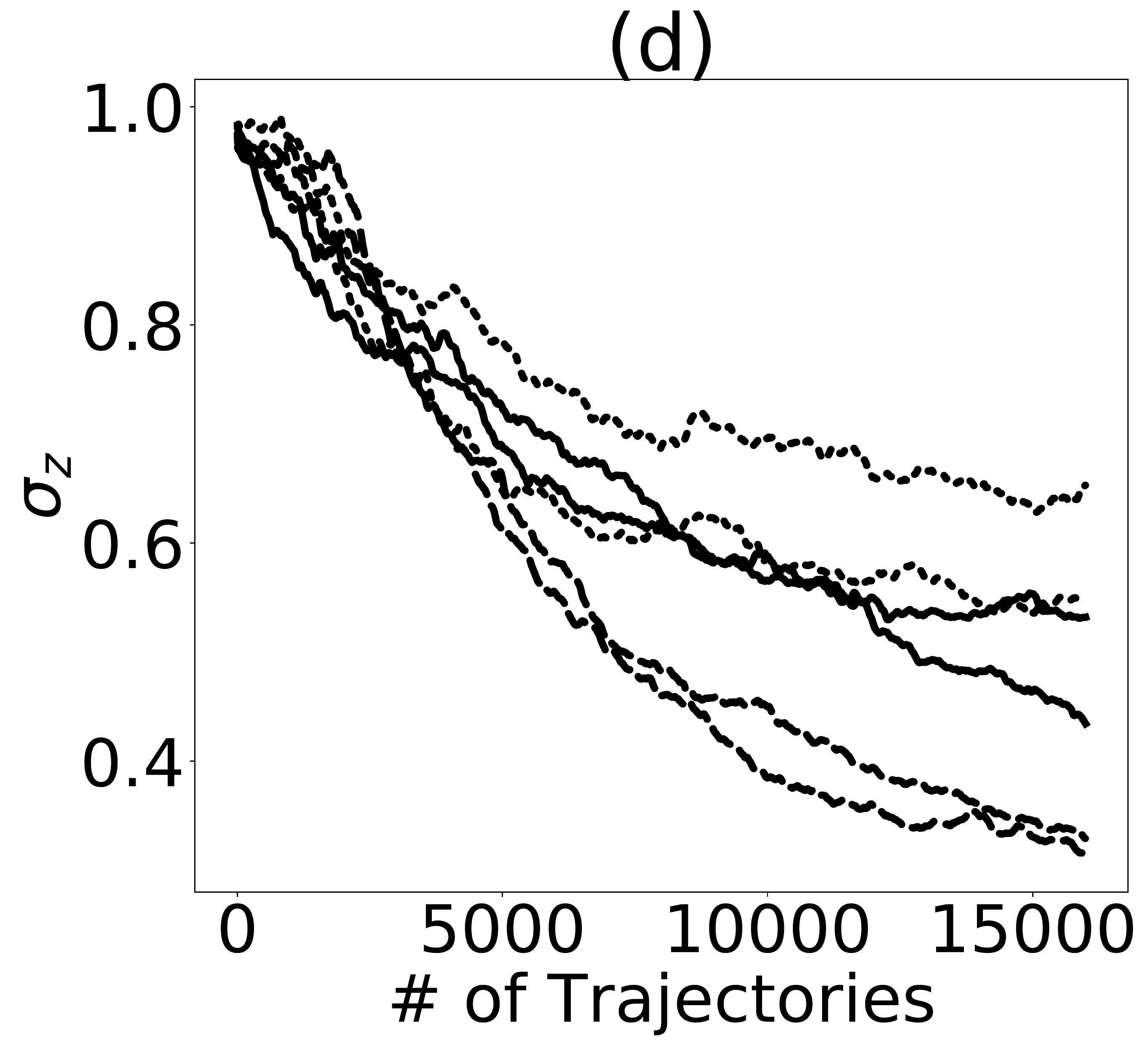} \\
    \includegraphics[width=150 pt]{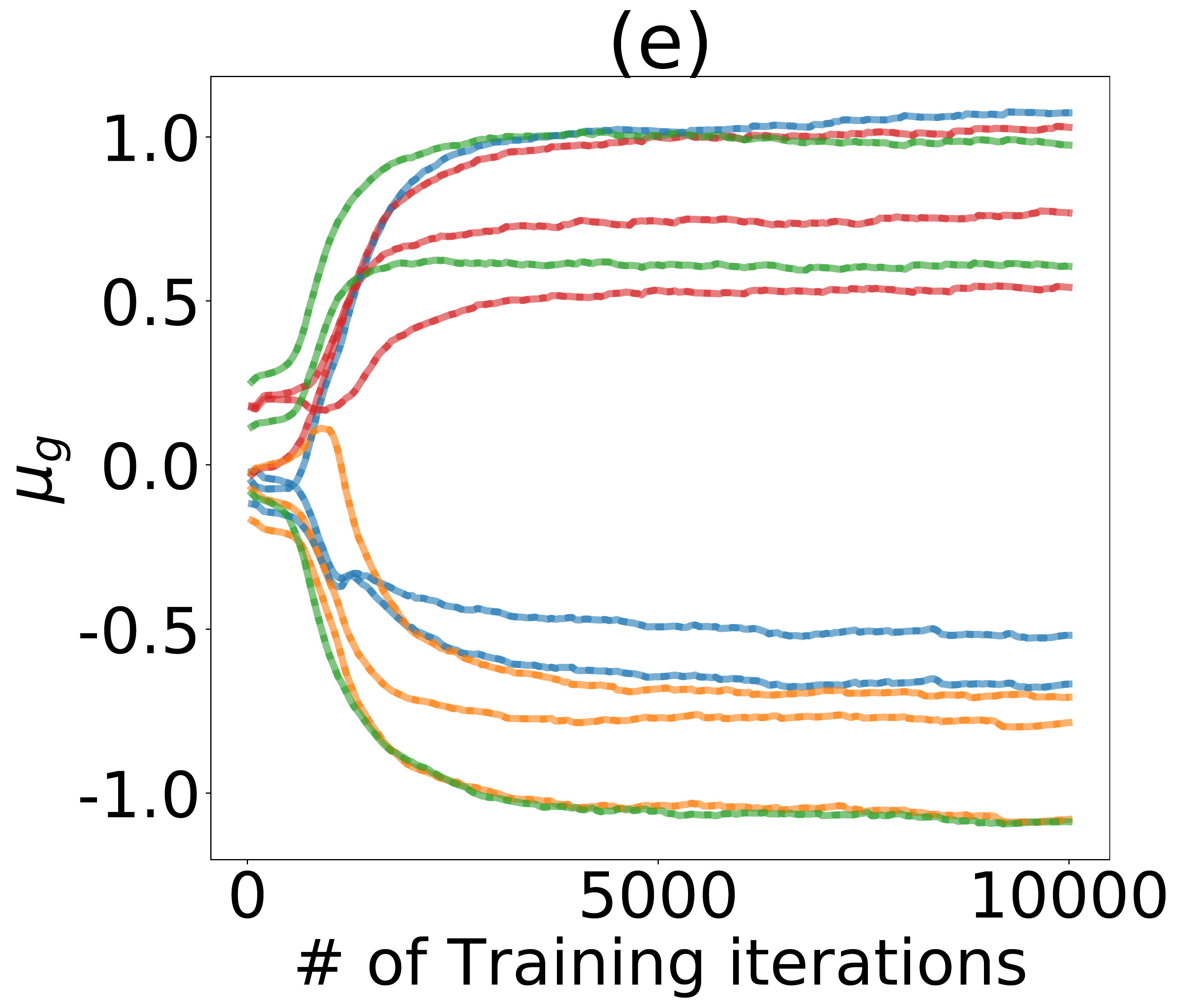}
    \includegraphics[width=150 pt]{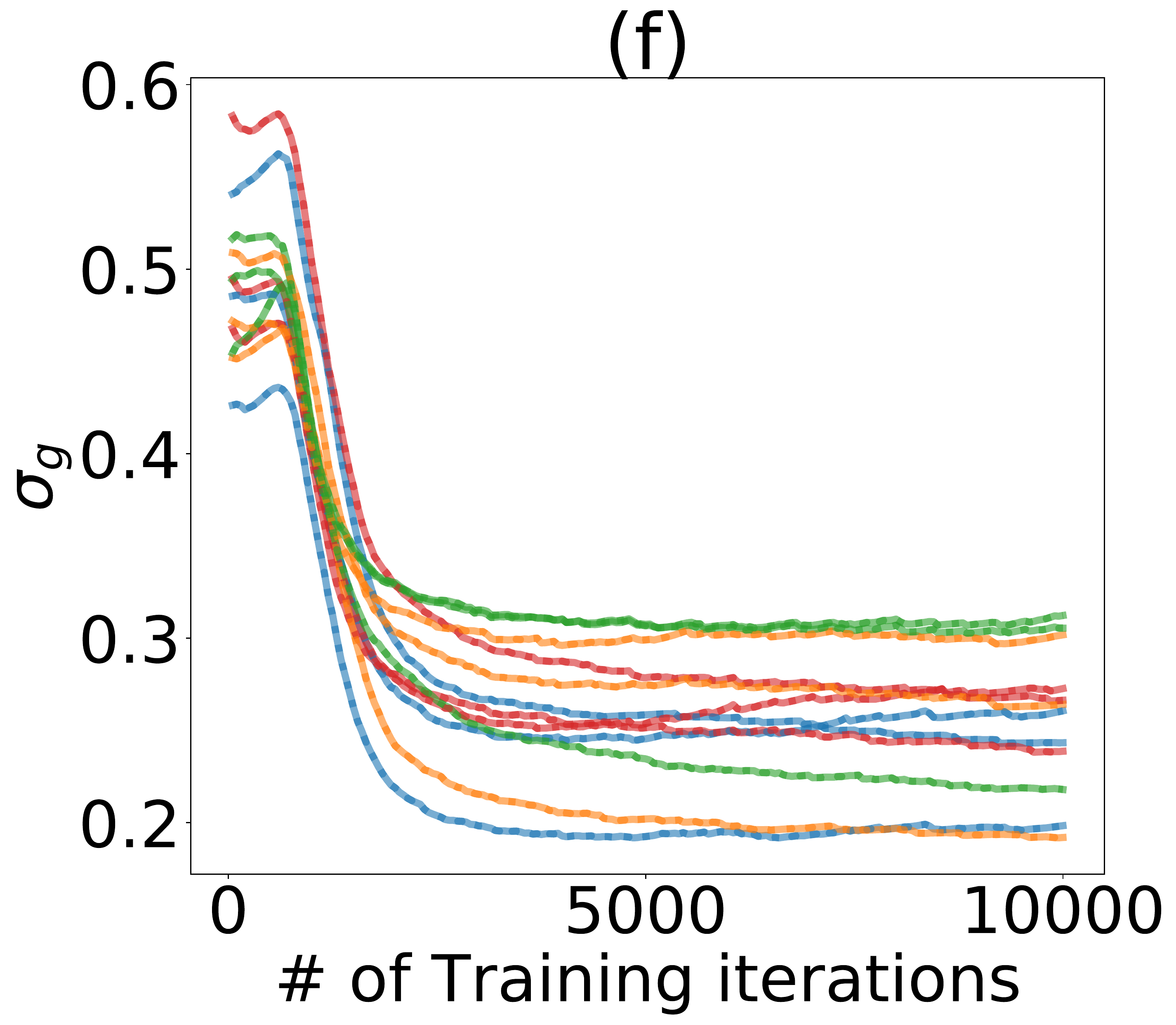} \\
    \includegraphics[width=150 pt]{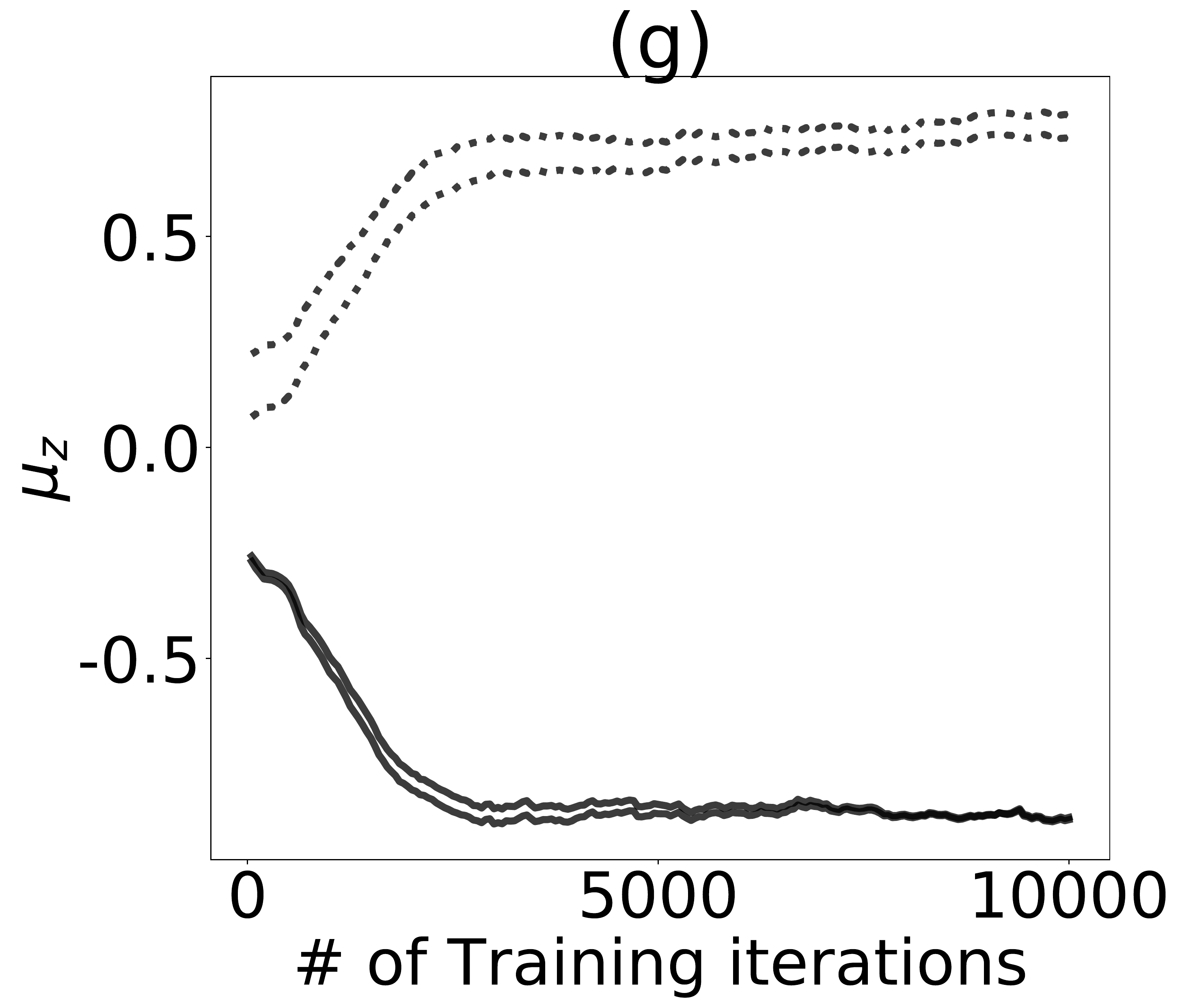}
    \includegraphics[width=150 pt]{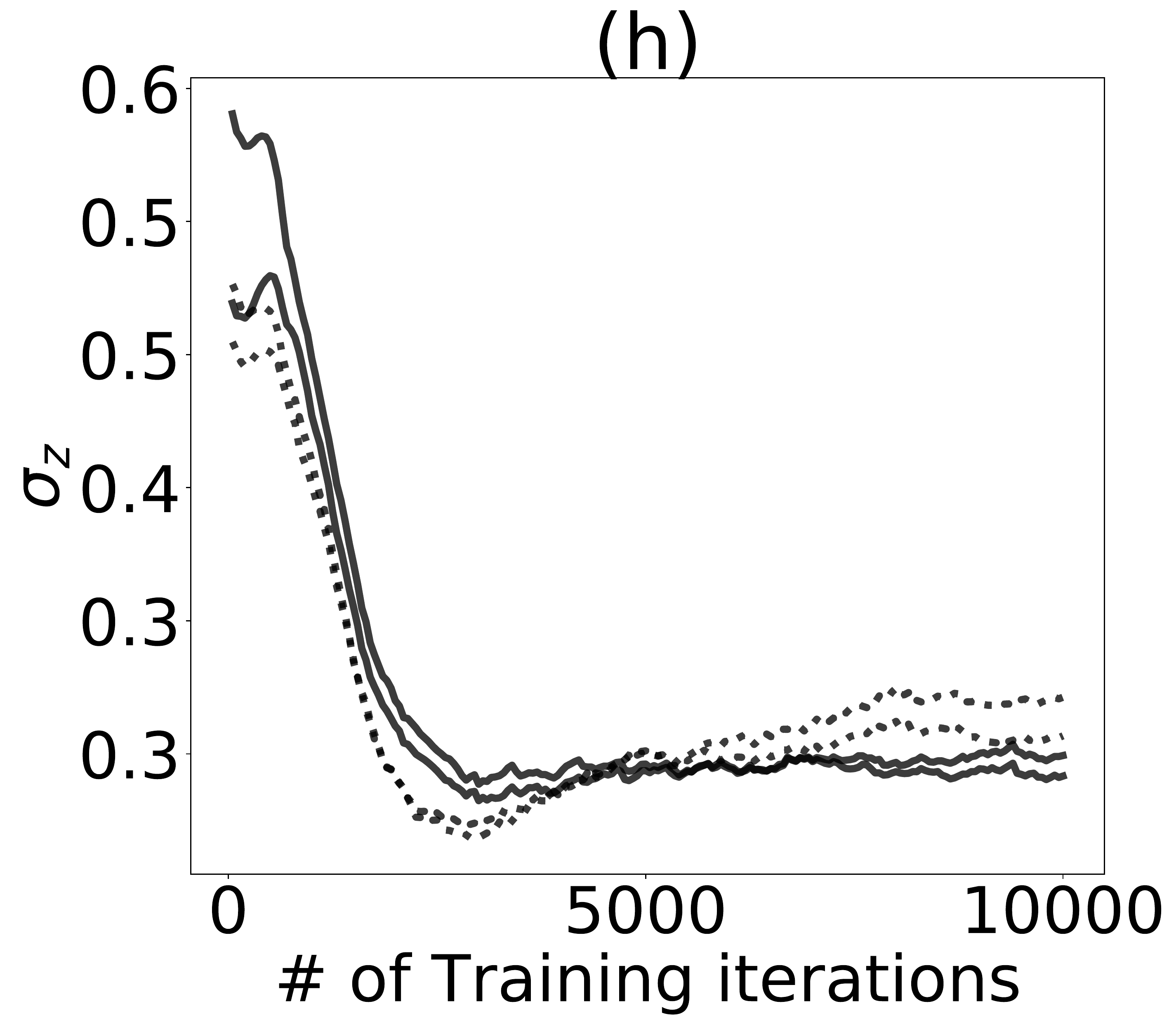}
    \caption{Learning the latent variables. (a), (b), (c) and (d) were for the MTRL Cartpole problem, while (e), (f), (g), (h) were for the MTRL Mujoco Reacher-v2 problem from Figure \ref{fig:generalization_sac}.}
    \label{fig:latents}
\end{figure}
\newpage
\end{document}